\PassOptionsToPackage{numbers,sort&compress}{natbib}
\PassOptionsToPackage{unicode}{hyperref}
\PassOptionsToPackage{naturalnames}{hyperref}
\documentclass{article}

\usepackage{microtype}
\usepackage{graphicx}
\usepackage{subfigure}
\usepackage{caption}
\usepackage[table]{xcolor}
\usepackage{multirow}
\usepackage[normalem]{ulem}
\usepackage{xurl}
\usepackage{booktabs} 
\usepackage[preprint]{neurips_2026}
\usepackage[numbers]{natbib}
\usepackage[utf8]{inputenc} 
\usepackage[T1]{fontenc}    
\usepackage{hyperref}       
\usepackage{amsfonts}       
\usepackage{nicefrac}       

\usepackage{amsmath}
\usepackage{amssymb}
\usepackage{mathtools}
\usepackage{amsthm}

\usepackage[capitalize,noabbrev]{cleveref}
\usepackage[textsize=tiny]{todonotes}

\usepackage{silence}
\theoremstyle{plain}
\newtheorem{theorem}{Theorem}[section]

\newtheorem{lemma}[theorem]{Lemma}

\theoremstyle{definition}
\newtheorem{definition}[theorem]{Definition}

\theoremstyle{remark}

\title{How Controlling the Variance can Improve Training Stability of Sparsely Activated DNNs and CNNs}

\author{%
  Emily Dent, Jared Tanner\\
  Mathematical Institute\\
  University of Oxford\\
  Oxford, UK \\
  \texttt{{emily.dent@maths.ox.ac.uk, jared.tanner}@maths.ox.ac.uk}\\
}

\begin{document}

\maketitle




\begin{abstract}
    The Edge-of-Chaos (EoC) theory developed for the random initialization of deep networks allows more efficient training by both preserving information in the initial outputs of the network and minimising exploding or vanishing gradients through characterisation of the intermediate layers as Gaussian processes. This EoC theory provides formulae for the choice of the initialisation distribution variances of the weights and biases. For activations which are approximately linear around the origin, the EoC theory typically encourages the Gaussian process variance to converge towards zero with increasing depth. Here we consider the less studied setting of highly sparsity inducing activations where a large region of values near the origin are set to zero. In this setting we prove a new phenomenon whereby initialisations leading to larger fixed Gaussian processes are beneficial to training stability. This theory informs a new, yet simple, initialisation strategy that allows training DNNs and CNNs with as large as 90\% sparsity in the hidden layers.
\end{abstract}

\section{Introduction}

Training of DNNs and CNNs with activations that induce high degrees of sparsity, e.g. 85\% and greater, raises some novel initialisation challenges and nonlinear activation designs \cite{price_deep_2023}. In its simplest form, the Edge-of-Chaos (EoC) theory considers networks with pre-activation hidden layers given by $h^{(\ell)}=W^{(\ell)} \phi(h^{(\ell -1)}) +b^{(\ell)}$ and studies properties of $h^{(\ell)}$ as a function of $\phi(\cdot)$. This leads to a prescribed an initialisation strategy for the weights and biases. In particular, EoC theory describes the limiting distribution of $h^{(\ell)}$ as a mean zero Gaussian process with variance $\mathbb{E}\left[{h^{(\ell)}_i}^2\right]$ and explains how the variance evolves through layers $\ell$. 
This manuscript extends the EoC theory to the study of activations which are zero around the origin by showing for exemplar $\phi(\cdot)$ with control of the parameter $\mathbb{E}\left[{h^{(\ell)}_i}^2\right]$ it is possible to improve the training accuracy and stability of networks and enable even greater sparsity. 

Inspired by the infinite width limit Gaussian behaviour of the hidden layers, $h_i^{(\ell)} \sim \mathcal{N} \left( 0, q^{(\ell)} \right)$, and the supposition that the largest entries are most important, consider activations $\phi \left( \cdot \right)$ for which $\phi \left( z \right) = 0$ for $|z| \leq \tau$, where $z$ contains the active outputs $h$ and $\tau$ is a parameter determining the level of sparsity. For clarity of exposition we focus on one such illustrative example as in \cite{price_deep_2023}, $\text{CReLU}_{\tau, m}$. 
\begin{align}
\label{crelu_eq}
            \text{CReLU}_{\tau, m}(x) &=
            \begin{cases}
            0, & \text{if $x<\tau$}\\
            x-\tau, & \text{if $\tau\leq x \leq \tau+m$}\\
            m, & \text{if $x>\tau+m$},
            \end{cases}
\end{align}
Here $m$ is a hyper parameter chosen to balance the expressivity and ease of training for large valued $\tau$. 


 The phenomenon of training instability occurs for other similar activations which are zero around the origin. For exploration of further sparsifying activation functions, see \cref{app:var_map_cst} and \cref{app:further_sparsifying_activations}. Difficulties in training occur for the sparsifying activation functions as a result of instability in the empirical $\tilde{q}^{(\ell)}$. The EoC theory typically advocates that $q^{(\ell)}$ converges independently to a non-zero limit $q^*$ or decays slowly to zero. However, here a new phenomenon is shown whereby increasing $q^*$ both reduces the sensitivity of the scale of the back-propagated errors to the empirical hidden layer variance, as well as improves the stability of $\tilde{q}^{(\ell)}$ itself. Experiments show that increasing $q^*$ allows more consistent training of DNNs and CNNs with high sparsity levels, achieving sparsity up to 90\%.



\subsection{Related work}\label{sec:related_work}
  
There is a range of related work which mathematically characterises the behaviour of the hidden layers and proposes methods to improve stability and training of large neural networks. As mentioned, \cite{price_deep_2023} pioneered the results extended here. The remainder of this section reviews other methods to further stabilize information propagation on the Edge-of-Chaos, and some adjacent lines of work.


Traditional EoC theory presumes a rapid and stable convergence of $\mathbb{E} \left[ {h_i^{(\ell)}}^2 \right]$ to $q^*$ in depth and then focuses on higher order phenomena. In particular, through concentration of \emph{all} singular values of the input-output Jacobian towards 1, see \cite{pennington_resurrecting_2017, pennington_emergence_2018, xiao_dynamical_2018}. Specifically, these works analyse the variance of the spectrum of the input-output Jacobian $JJ^T$, denoted as $\sigma_{JJ^T}$. To better concentrate the spectrum of $JJ^T$ they propose methods so that $\sigma_{JJ^T} \rightarrow 0$. The first approach proposes taking $q^* \rightarrow 0$ to cause $\sigma_{JJ^T} \rightarrow 0$, \cite{pennington_emergence_2018}. In \cite{martens_rapid_2021, roberts_principles_2022, murray_activation_2022} they also advocate taking $q^* \rightarrow 0$, for certain activation functions. Further methodologies proposed in \cite{murray_activation_2022, martens_rapid_2021} show that information propagation is improved through a linearized region around the origin of the activation function.  These mentioned works are to the best of our knowledge the only other cases to consider varying $q^*$, all of which only consider the limit of a decreasing, or arbitrarily chosen $q^*$. A further in depth analysis of the EoC can be found in \cite{yaida_non-gaussian_2020, roberts_principles_2022, hanin_random_2023}. 

 There are a range of motivations for the particular case of activation sparsity, one such property is the immediate computational reducibility for entire rows in the matrix multiplication. An incentivizing example for this efficiency improvement comes from the identification of naturally occurring sparsity leading to brain efficiency improvements in neuroscience, \cite{attwell_energy_2001}. Separately, sparsity is also shown to improve the ability of models to generalize and learn \cite{frankle_lottery_2018, muthukumar_sparsity-aware_2023, awasthi_learning_2024}; a phenomenon colloquially known as the \emph{lottery ticket}. Due to this intuition, some recent empirical work considers activation sparsity in attention, see \cite{zhang_relu2_2024, luo_sparsing_2025, you_spark_2025}; in \cite{you_spark_2025} they find competitive activation sparsity up to 92\%.  Activation sparsity is one method to improve computational efficiency, however there is a range of other techniques including: compact architectures design \cite{howard_mobilenets_2017}, low-rank weight matrices \cite{osawa_accelerating_2017} such as in the popular LoRA framework \cite{hu_lora_2021}, low-precision quantisation \cite{weng_neural_2023} such as in recent GPU hardware, and pruning weight matrices such as \cite{blalock_what_2020}.

Extensions to the EoC initialisation have been explored for other architectures including ResNets \cite{yang_mean_2017} and Recurrent Neural Networks \cite{xie_slow_2025}. The perhaps natural extension of this work to ResNets is not explored for the precise reason that the skip connections would negate the induced sparsity in the activation function, and therefore the proposed efficiency improvements. Transformer architectures are also not considered here, as they are a separate line of analysis with additional complexity, for some recent work on information propagation in transformers see \cite{cowsik_geometric_2024}.


 An interesting parallel related literature, is the study of high variance weight initialisation leading to faster training, as a \emph{lazy training} method, this was first mentioned in \cite{chizat_lazy_2019}, and has further been explored in \cite{domine_exact_2023, domine_lazy_2024}.

\subsection{Main contributions}

We revisit EoC initialization in the setting of sparsity-inducing activation functions and identify the fixed-point variance $q^*$ as a critical, yet underexplored, parameter governing training stability. In this regime, the activation function lacks linearity around the origin, and consequently existing methodologies \cite{martens_rapid_2021, roberts_principles_2022, murray_activation_2022}, which typically advocate for $q^*\rightarrow0$, are not applicable. In contrast, we show that increasing $q^*$ can substantially improve trainability for activations that are zero in a neighborhood of the origin.
\cref{sec:eoc_theory} develops the theoretical basis for this effect, identifying key control metrics and deriving explicit characterizations for the exemplar activation $\text{CReLU}_{\tau, m}$. In \cref{sec:experiments}, experiments then show proof of concept for this theory.

\section{Edge-of-Chaos}\label{sec:eoc_theory}

The following first reviews the EoC theory. Explicitly considering the limiting behaviour for Deep Neural Networks (DNNs), a DNN has hidden layers with activation function $\phi$ given by,
\begin{equation}
\begin{aligned}
        \label{hidden_layer}
        h_i^{(\ell)} &= \sum_{j=1}^{N_l}{W_{ij}}^{(\ell)} x_j^{(\ell-1)} + b_i^{(\ell)},  \\
        x_i^{(\ell)} &= \phi \left( h_i^{(\ell)} \right),
\end{aligned}
\end{equation}
where $\ell$ is the layer number, $N_{\ell}$ is the width of layer $\ell$, and ${W_{ij}}^{(\ell)}$ and $b_i^{(\ell)}$ are the weights and biases for layer $\ell$ respectively.  Much of the analysis for Convolutional Neural Networks (CNNs) is similar, \cite{xiao_dynamical_2018}.
The Edge-of-Chaos initialisation is then given by $W_{ij}^{(\ell)} \sim \mathcal{N} \left( 0, \frac{\sigma_w^2}{N_{\ell}} \right)$ and $b_i^{(\ell)} \sim \mathcal{N} \left( 0, \sigma_b^2 \right)$, where $\sigma_w^2$ and $\sigma_b^2$ are selected according to roles that are known to improve information propagation in deep neural networks, \cite{schoenholz_deep_2017}. 
 \\
 In the infinite width limit these hidden layers are known to behave as Gaussian random variables \cite{neal_bayesian_1996, matthews_gaussian_2018, lee_deep_2018}, further \cite{poole_exponential_2016} characterised this initialisation admitting hidden layers with distribution $h_i^{(\ell)} \sim \mathcal{N} \left( 0, q^{(\ell)} \right)$, with $q^{(\ell)}$ evolving through the iterative mapping,
 \begin{align}
     q^{(\ell)} &= V \left( q^{(\ell-1)} \right):= {\sigma_w^2}\int_{\mathbb{R}} \left[ \phi \left(\sqrt{q^{(\ell-1)}}z \right) \right] ^2\gamma(dz) + \sigma_b^2,  \label{eq:vmap_general}
 \end{align}
where $\gamma \left( dz \right)=  e^{-z^2/2}/ \sqrt{2 \pi}$.
In a stable regime, $V(q)$ admits a fixed point $q^*$, and for $V'(q^*) < 1$, $V \left(q^{(\ell)} \right) \rightarrow q^*$ as $l \rightarrow \infty$. 

Further consider the correlation map between two inputs, 
\begin{align}
    \rho^{(\ell)} &= R_\phi(\rho^{(\ell-1)}) := \frac{1}{q^*}\biggl({\sigma_w^2}\int_{\mathbb{R}}\int_{\mathbb{R}}  \phi \left(u_1 \right) \phi \left(u_2 \right)\gamma(dz_1)\gamma(dz_2)  + \sigma_b^2\biggr), \label{correlation_map}
\end{align}
where, $u_1 = \sqrt{q^*}z_1$, $u_2 = \sqrt{q^*}\left( \rho^{(\ell-1)}z_1 + \sqrt{1-{\rho^{(\ell-1)}}^2}z_2 \right)$, $\rho^{(\ell-1)} = {q^{(\ell-1)}_{ab}}/\sqrt{{q^{(\ell-1)}_{aa}q^{(\ell-1)}_{bb}}}$.
$\rho^{(\ell)}$ is the correlation at layer $\ell$, and $z_1$, $z_2$ are independent Gaussian random variables.
The correlation map \eqref{correlation_map} will always admit a fixed point at $\rho=1$. The EoC initialisation is defined by the derivative of the correlation map $R_\phi$ at $1$;
\begin{equation}
\label{chi_general_eq}
    {\chi}_1 \left(q^* \right)  = R'_{\phi}\left( 1 \right) := \sigma_w^2 \int \left[ \phi' \left( \sqrt{q^*}z \right) \right]^2 \gamma \left(dz \right),
\end{equation}
being equal to 1. 

$\chi_1(q^*)$ is also equal to the (first-order) multiplicative perturbation across a single layer, \cite{poole_exponential_2016}; therefore, $\chi_1(q^*) > 1$ results in exploding perturbations and in deeper neural networks $\chi_1(q^*) < 1$ is overly conservative, driving the network towards a constant function, whereas setting $\chi_1(q^*) = 1$, the Edge-of-Chaos, results in stable perturbations, and was shown to prevent vanishing/exploding gradients \cite{schoenholz_deep_2017}. Further study of $\chi_1$ can be found in \cite{poole_exponential_2016, schoenholz_deep_2017, xiao_dynamical_2018}, see also \cref{app_further_on_chi}.

Typically, the literature assumes $q^{(\ell)} \rightarrow q^*$ almost immediately and so $\chi_{1, \phi}^{(\ell)}$ is assumed equal for all $\ell$ and takes value \eqref{chi_general_eq}. However, this is not reasonably the case in practice, and $q^{(\ell)}$ is observed to vary stochastically around the point $q^*$. Hence,

\begin{equation}
    \chi_{1, \phi}^{(\ell)} ={\chi}_{1, \phi}\left(q^{(\ell)}\right) := \sigma_w^2 \int \left[ \phi' \left( \sqrt{{q}^{(\ell)}}z \right) \right]^2 \gamma \left(dz \right),
\end{equation}
is a function of $q^{(\ell)}$, any lack of symmetry about $q^*$ in the dependence of $\chi_1$ as a function of $q^{(\ell)}$ may therefore create a growth in $\prod_{\ell=1}^L {\chi}_{1, \phi}\left(q^{(\ell)}\right)$ which can be exponential in depth and directly informs the accumulated growth of errors through the network. 

\begin{definition}[Training Stability of a Neural Network in the Large Width Limit]{\label{def:training_stability}}
    A neural network is considered stable during training if $\prod_{\ell=1}^L\chi_{1, \phi}^{(\ell)}$ is order one independent of $L$; as from the critical stability condition from the EoC regime, $\chi_1=1$. That is the multiplicative expected squared singular value of the input-output Jacobian through all layers of the network is of order 1.
\end{definition}

The process of choosing parameters for the EoC typically follows as 1. Let $q^*=1$, or some other arbitrary value, 2. Set $\chi_1(q^*) = 1$ to find $\sigma_w^2$ from \eqref{chi_general_eq}, 3. Set $q^* = V(q^*)$ to calculate $\sigma_b^2$ from \eqref{eq:vmap_general}. 

We propose simply increasing $q^*$ from 1 to 3 with no adverse affects. In particular, for our activation function $\text{CReLU}_{\tau, m}$ we have the additional parameters $\tau, m$, as such the complete initialization scheme is:
\begin{enumerate}
    \item Set $q^*$ (e.g. 3), and desired sparsity $s$ (e.g. 0.85).
    \item Calculate $\tau$, e.g. for $\text{CReLU}$ this is $\sqrt{q^*}\Phi^{-1}(s)$.
    \item Set $V'(q^*) = 0.7$ to calculate $m$ numerically using root finding methods, e.g. for $\text{CReLU}$ this is equation \eqref{eq:vprime_crelu}, where $\sigma_w^2 =  2/\left[\text{erf}\left(\frac{\tau+m}{\sqrt{2q}}\right)- \text{erf}\left(\frac{\tau}{\sqrt{2q}}\right)\right]$.
    \item Set $\sigma_w^2=1/\int [\phi'(z\sqrt{q^*})]^2 \gamma(dz)$ so that $\chi_{1, \phi}(q^*) = 1$.
    \item Set $\sigma_b^2=q^*-\sigma_w^2 \int [\phi(z\sqrt{q^*})]^2 \gamma(dz)$ so that $q^* = V(q^*)$.
    \item Initialize $W^{(\ell)}_{ij}\sim\mathcal{N}\left(0,\frac{\sigma_w^2}{N_\ell}\right)$ and $b_i^{(\ell)}\sim \mathcal{N}(0,\sigma_b^2)$.
\end{enumerate}

This is a demonstration for our exemplar sparsifying activation function $\text{CReLU}_{\tau, m}$, however see \cref{app:var_map_cst} and \cref{app:further_sparsifying_activations} for analysis of further activation functions.

 The benefit of an increased $q^*$ is a result of the dependence of $\chi_{1, \phi}^{(\ell)}$ on $q^{(\ell)}$, which in itself is determined by the mapping $q^{(\ell+1)} = V(q^{(\ell)})$, the training stability can be improved in the large width limit three-fold, as defined in Definition \ref{def:training_stability}. First, through refinement of the fixed-point mapping dynamics by improving the symmetry of $V(q^*)$, leading to a more even distribution of our empirical hidden layer variances. Second, by reducing the size of the finite dimensional correction to our variance map. Third, by decreasing the sensitivity of $\chi_{1, \phi}^{(\ell)}$ to $q^{(\ell)}$, that is $\chi'_{1, \phi}(q^{(\ell)})$. Sections \ref{sec:reducing_vprimeprime}, \ref{subsec:finite_dim_correction}, \ref{sec:sensitivity_of_chi}, respectively show these three control metrics can be achieved by simply increasing our $q^*$ at initialization through exposition of the sparsifying activation function $\phi = \text{CReLU}_{\tau, m}$.

\subsection{{\texorpdfstring{Concentrating $q^{(\ell)}$}{Concentrating q}}}\label{sec:reducing_vprimeprime}

As mentioned, in the deterministic view of the iterative variance map, $q^{(\ell)}$ will always converge to the fixed point where $V'(q^*)<1$,  under the Law of Large Numbers. However in actuality $q^{(\ell)}$ has variability due to the infinite limit assumption, and is a random variable itself. 
For larger values of $V''(q^*)$, where there is a strong loss of symmetry around the fixed point, the iterative map cannot sufficiently control the perturbations, see \cref{fig:vcrelu_vary_m}. The perturbations around $q^*$ are then asymmetrical, resulting in an unstable regime for the variance of the hidden layers $q^{(\ell)}$, since the $\prod_{\ell=1}^L{\chi}_{1, \phi}(q^{(\ell)})$ accumulates values biased to be greater than $1$, and grows exponentially with depth. This loss of symmetry constrains how large $m$ can be selected which limits the expressivity of the network before instability occurs, and therefore the lack of trainability found in \cite{price_deep_2023}.

\begin{figure}[htbp!]
    \centering
\subfigure[]{\includegraphics[height = 3.75cm, width=0.3\linewidth]{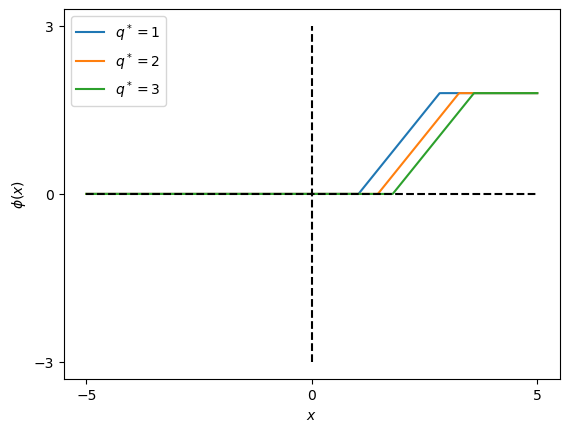}
    }
    \subfigure[]{\includegraphics[height = 3.75cm, width=0.3\linewidth]{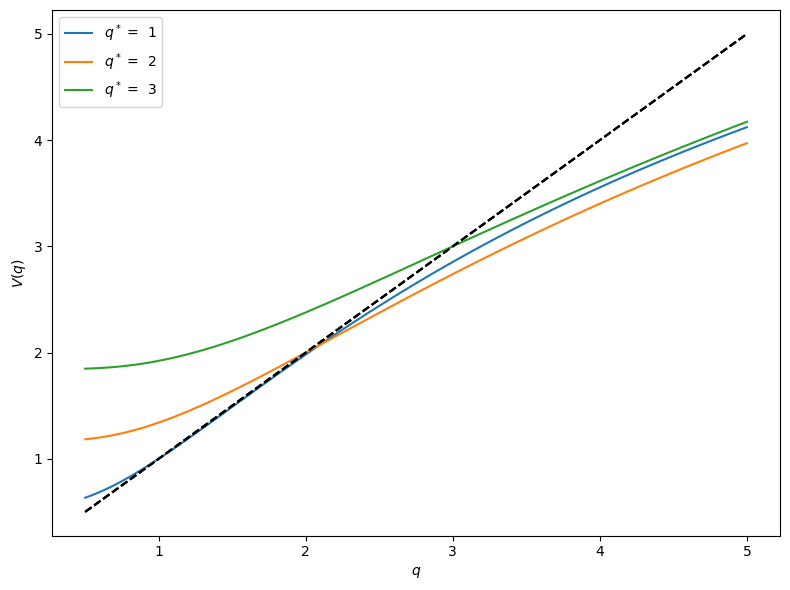}}
    \subfigure[]{\includegraphics[trim =0cm 0cm 0cm 0.7cm, clip, height = 3.75cm, width=0.3\linewidth]{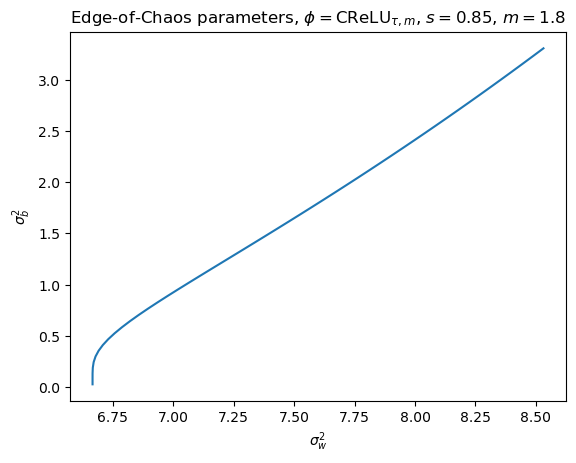}}
    \caption{(a) $\text{CReLU}_{\tau, m}(x)$, for $m=1.8$, $s=0.85$ with varying values of $q^*=\{ 1, 2, 3\}$, (b) $V_{\text{CReLU}_{\tau, m}}(q)$, for $m=1.8$, $s=0.85$ with varying values of $q^*=\{ 1, 2, 3\}$, with vertical axis is $V(q)$ and horizontal axis $q$. (c) EoC variances $\sigma_b^2$ against $\sigma_w^2$ for a range of $q^* \in [0.05, 3]$. In (b)  observe as $q^*$ increases the symmetry of $V(q)$ improves around the respective fixed point $q^*$.}
    \label{fig:vcrelu_vary_m}
\end{figure}


To improve the concentration of $q^{(\ell)}$, consider the variance map $V(q)$ and $V''(q^*)$ as a heuristic for the lack of symmetry. First see,

\begin{multline}
    V'_{\text{CReLU}_{\tau, m}}(q) = \sigma_w^2 \left[   \frac{1}{2} \text{erf}\left( \frac{\tau +m}{\sqrt{2q}} \right) - \frac{1}{2} \text{erf}\left( \frac{\tau }{\sqrt{2q}} \right)
    - \frac{m}{\sqrt{2 \pi q}} \text{exp}\left( -\frac{ (\tau +m)^2}{{2q}} \right)  \right]. \label{eq:vprime_crelu}
\end{multline}



Then the expression for $V''_{\text{CReLU}_{\tau, m}}(q^*)$ is,
\begin{equation}
    V''_{\text{CReLU}_{\tau, m}}(q) = \frac{\sigma_w^2}{\sqrt{8 \pi q^5}} \text{exp}\left( -\frac{ (\tau +m)^2}{{2q}} \right)
    \left[q\tau  {\exp\left({\frac{2m\tau+m^2}{2q}}\right)} 
     - q\tau- m(\tau+m)^2 \right].
\end{equation}
The exact derivation for $V''_{\text{CReLU}_{\tau, m}}(q)$ is uninformative, for interest see \cref{appendix_further_deriv_derivations}.

\begin{figure}[htbp!]
\centering
\subfigure[]{\includegraphics[height = 3.75cm, width=0.3\linewidth]{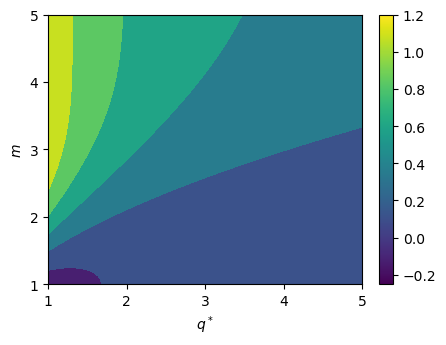}}
    \subfigure[]{\includegraphics[height = 3.75cm, width=0.3\linewidth]{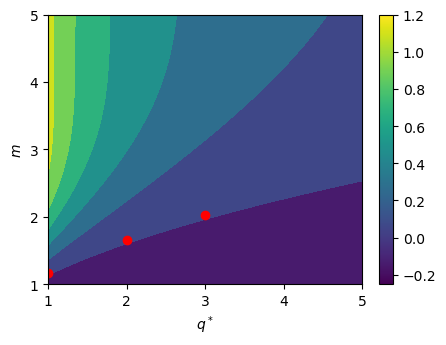}}
    \subfigure[]{\includegraphics[height = 3.75cm, width=0.3\linewidth]{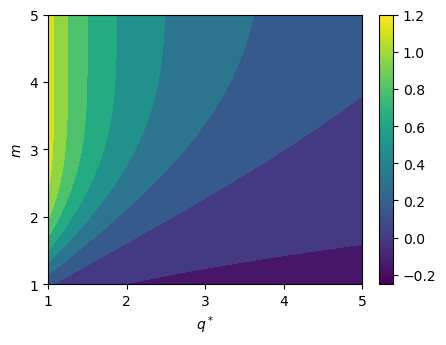}}
    \caption{$V''_{\text{CReLU}_{\tau, m}}(q^*)$, for a range of $s = \{0.8, 0.85, 0.9\}$, the plots of these three sparsities are from left to right, each with horizontal axis $q^*$ and vertical axis $m$. For fixed value $m$ by increasing $q^*$, $V''_{\text{CReLU}_{\tau, m}}(q^*)$ reduces, and this holds across all three sparsity levels.  The highlighted points in plot $s=0.85$ correspond to fixing $V'(q^*)=0.7$ for $q^*=\{ 1, 2, 3\}$, and have values 0.023, 0.013, 0.008 respectively.}
    \label{fig:vprimeprime_crelu_heatmap_multi_s}
\end{figure}

\cref{fig:vprimeprime_crelu_heatmap_multi_s} further shows that for a fixed sparsity and maximum value $m$, increasing $q^*$ decreases the curvature of the variance map for the chosen activation functions - further stabilizing the initialisation.

\subsection{Finite dimensional correction}\label{subsec:finite_dim_correction}

This section explicitly examines the finite dimensional correction of the variance map, as in \cite{roberts_principles_2022}. Let $\tilde{q}^{(\ell)}$ be the the finite dimensionally corrected variance.
In particular, consider the variance $\tilde{q}^{(\ell)}$ and the fourth moment $r^{(\ell)}$ of the hidden layers as an infinite series of sub-leading order corrections. That is,
\begin{align}
    \tilde{q}^{(\ell)} &= \tilde{q}^{\{0\}(\ell)} + \frac{1}{n_{\ell-1}}\tilde{q}^{\{1\}(\ell)} + \mathcal{O}\left( \frac{1}{n^{2}}\right), \\
    r^{(\ell)} &= r^{\{0\}(\ell)} + \frac{1}{n_{\ell-1}}r^{\{1\}(\ell)} + \mathcal{O}\left( \frac{1}{n^{2}}\right).
\end{align}


Denote $\tilde{q}^{\{1\}(\ell)}$ as the next-leading-order metric, which is the first-order correction to the variance. Where $r^{(\ell)}$ describes the size of fluctuations away from the mean variance $q^{(\ell)}$, as it controls the near-Gaussianity of the layer distribution. Note, the leading order term to $\tilde{q}^{(\ell)}$ fully describes the infinite-width limit to the pre-activation distribution, $q^{(\ell)} = \tilde{q}^{\{0\}(\ell)}$.

Now introducing the operator $\left\langle \cdot \right\rangle_{q}$ for ease of notation,
\begin{equation}
    \left\langle f(z) \right\rangle_{q} = \frac{1}{\sqrt{2\pi q}}\int_{\mathbb{R}} f(z)e^{-\frac{z^2}{2q}}dz.
\end{equation}
In \cite{roberts_principles_2022} they find the iterative mapping \eqref{eq:vmap_general} and the equivalent following recursions,
\begin{align}
    r^{(\ell+1)} &= V' \left( q^{(\ell)}\right)^2r^{(\ell)}+ \sigma_w^4 \left( \left\langle \phi^4(z) \right\rangle_{q^{(\ell)}} - \left\langle \phi^2(z) \right\rangle^2_{q^{(\ell)}}\right), \label{eq:fourth_mom_recursion}\\
    \tilde{q}^{\{1\}(\ell+1)} &=  V' \left( q^{(\ell)}\right) \tilde{q}^{\{1\}(\ell)} + \frac{1}{2} V''\left( q^{(\ell)}\right) r^{(\ell)} \label{eq:nlo_recursion}.
\end{align}

Initialising at the fixed point $q^{(1)} = q^*$, such that $q^* =V \left( q^{*}\right)$, gives $q^{(\ell)} = q^*$ for $l\geq 1$. These results allow the proof of the following \cref{thm:finite_dim_nlo}, for the full proof see \cref{appendix_finite_correction}.

\begin{theorem}
\label{thm:finite_dim_nlo}
Where the recursive relations \eqref{eq:vmap_general}, \eqref{eq:fourth_mom_recursion}, \eqref{eq:nlo_recursion} hold, assuming $0<V' \left( q^{*}\right) <1$, $\tilde{q}^{\{1\} (1)}=0$ and $r^{(1)}=0$, for $\ell \geq3$,
\begin{align}
    \left| \tilde{q}^{\{1\}(\ell)} \right| \leq \frac{\sigma_w^4}{2}\frac{\left|V''\left( q^{*}\right)\right|  \left| \left\langle \phi^4(z) \right\rangle_{q^{*}} - \left\langle \phi^2(z) \right\rangle^2_{q^{*}}\right|}{\left(1-V' \left( q^{*}\right)\right)^2\left(1+V' \left( q^{*}\right) \right)}. \label{abs_upper_bound_nlo}
\end{align}

\end{theorem}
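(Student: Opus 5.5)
Since the network is initialised at the fixed point, $q^{(\ell)} = q^*$ for all $\ell \ge 1$. The recursions \eqref{eq:fourth_mom_recursion} and \eqref{eq:nlo_recursion} therefore become linear recursions with constant coefficients. Write $a := V'(q^*)\in(0,1)$, $c := \tfrac12 V''(q^*)$ and $D := \sigma_w^4\bigl(\langle\phi^4\rangle_{q^*}-\langle\phi^2\rangle_{q^*}^2\bigr)$. Then
\begin{align*}
r^{(\ell+1)} = a^2 r^{(\ell)} + D, \qquad \tilde q^{\{1\}(\ell+1)} = a\,\tilde q^{\{1\}(\ell)} + c\, r^{(\ell)} .
\end{align*}
The plan is to solve both recursions in closed form from the initial data $r^{(1)}=0$ and $\tilde q^{\{1\}(1)}=0$, and then bound the resulting finite sums by geometric series.

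\textbf{Step 1.} Unrolling the first recursion gives
\begin{align*}
r^{(k)} = D\sum_{j=0}^{k-2} a^{2j} = D\,\frac{1-a^{2(k-1)}}{1-a^2}, \qquad k\ge 1,
\end{align*}
so $|r^{(k)}| \le |D|/(1-a^2)$ uniformly in $k$, because $0<a<1$.

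\textbf{Step 2.} Unrolling the second recursion with $\tilde q^{\{1\}(1)}=0$ gives
\begin{align*}
\tilde q^{\{1\}(\ell)} = c\sum_{k=1}^{\ell-1} a^{\ell-1-k} r^{(k)} .
\end{align*}
The $k=1$ term vanishes since $r^{(1)}=0$, so the sum effectively runs over $2\le k\le \ell-1$ and is nontrivial only for $\ell\ge 3$; this matches the restriction $\ell \ge 3$ in the statement. Taking absolute values and inserting the uniform bound from Step 1 yields
\begin{align*}
|\tilde q^{\{1\}(\ell)}| \le |c|\,\frac{|D|}{1-a^2}\sum_{m=0}^{\ell-3} a^{m} \le \frac{|c|\,|D|}{(1-a^2)(1-a)} .
\end{align*}

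\textbf{Step 3.} Since $1-a^2=(1-a)(1+a)$, the bound equals
\begin{align*}
\frac{|V''(q^*)|\,|D|}{2(1-a)^2(1+a)} ,
\end{align*}
and substituting $D$ back gives \eqref{abs_upper_bound_nlo}, with the factor $\sigma_w^4/2$ appearing explicitly. A sharper version keeps the exact sum $\sum_k a^{\ell-1-k}(1-a^{2(k-1)})$, which is positive and increasing in $\ell$, and shows the bound is the $\ell\to\infty$ limit. This is not needed for the inequality.

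\textbf{Main obstacle.} The argument is essentially routine. The only care needed is the index bookkeeping (where the sums start, and why the statement takes $\ell\ge 3$), together with confirming that the hypothesis $q^{(\ell)}=q^*$ holds for every layer, so that the coefficients $V'(q^{(\ell)})$ and $V''(q^{(\ell)})$ really are constant. Positivity of $a$ is used only to avoid absolute values on powers of $a$; with $|a|<1$ the same argument goes through with $|a|$ in place of $a$.
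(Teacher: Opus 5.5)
Your proof is correct and shares the paper's skeleton. The paper also unrolls the $r$-recursion to $r^{(k)} = D\,\frac{1-a^{2(k-1)}}{1-a^{2}}$, and by induction the $\tilde q^{\{1\}}$-recursion to $\tilde q^{\{1\}(\ell)} = c\sum_{i=0}^{\ell-3} a^{i} r^{(\ell-i-1)}$; these are your Steps 1 and 2.

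The only difference is the last step. The paper substitutes the exact $r^{(k)}$ and evaluates the double geometric sum in closed form as $\frac{cD}{1-a^{2}}\cdot\frac{(1-a^{\ell-2})(1-a^{\ell+1})}{1-a}$. It then uses $0<a<1$ to bound both numerator factors by $1$. You instead bound $|r^{(k)}|$ uniformly by its limit $|D|/(1-a^{2})$ before summing.

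Your termwise route is shorter and avoids an index slip in the paper. In its substitution the paper inserts the formula for $r^{(\ell-i)}$ where $r^{(\ell-i-1)}$ is needed, so the exponent $\ell+1$ should be $\ell-1$. As a check, at $\ell=3$ the exact value is $\tilde q^{\{1\}(3)} = cD$, not $cD(1+a^{2})$. The slip does not affect the final inequality, and your argument never meets it. Your version also carries over to $|a|<1$, as you note.

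What the exact evaluation buys is an explicit formula showing the bound is approached monotonically as $\ell\to\infty$, which is what your closing remark claims. The correct closed form of that sum is $\frac{(1-a^{\ell-2})(1-a^{\ell-1})}{1-a}$.
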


\begin{figure}
    \centering
    \subfigure[]{\includegraphics[height = 3.75cm, width=0.3\linewidth]{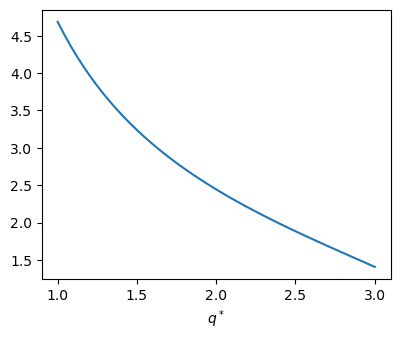}}
    \subfigure[]{\includegraphics[height = 3.75cm, width=0.3\linewidth]{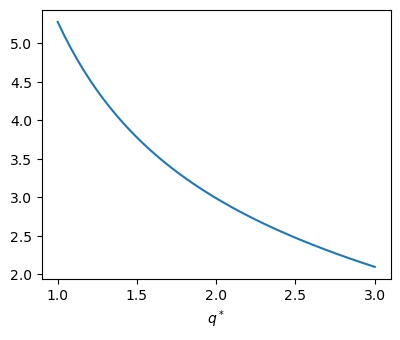}}
    \subfigure[]{\includegraphics[height = 3.75cm, width=0.3\linewidth]{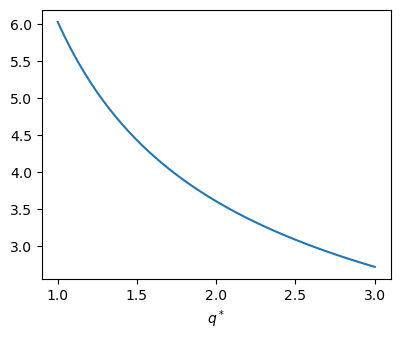}}
    \caption{Plots of the $\log$ upper bound for the absolute value of the next-leading-order term of the layer variance, $\left|\tilde{q}^{\{1\}(\ell)}\right|$, against $q^*$, as in \eqref{abs_upper_bound_nlo}, for a range of $s = \{0.8, 0.85, 0.9\}$ from left to right, and fixed $m=3$. By increasing $q^*$ the upper bound for the next-leading-order term $\left|\tilde{q}^{\{1\}(\ell)}\right|$ reduces exponentially.} 
    \label{fig:abs_upper_bound_nlo_crelu}
\end{figure}

Plotting the upper bound of \cref{thm:finite_dim_nlo} in \cref{fig:abs_upper_bound_nlo_crelu}, the benefit of increasing $q^*$ for larger sparsity levels is apparent. By increasing the variance of the hidden layers the next-leading-order correction can be more tightly bounded, and $\tilde{q}^{(\ell)}$ more accurately follows the recursion $q^{(\ell+1)} = V(q^{(\ell)})$ in \eqref{eq:vmap_general} .

\subsection{\texorpdfstring{Sensitivity of $\chi_1^{(\ell)}$}{Sensitivity of chi1(q)}}\label{sec:sensitivity_of_chi}

Despite an asymmetry of $V(q)$ around $q^*$ causing a skewed distribution of $\tilde{q}^{(\ell)}$, the explicit reason for an inability to train comes from $\prod_{\ell=1}^L \chi_1\left(q^{(\ell)} \right)$, therefore the accrued larger values of $\chi_1\left(q^{(\ell)} \right)$ could further be decreased by reducing the sensitivity to ${q}^{(\ell)}$, i.e. reducing $\left|\frac{d\chi_1\left(q \right)}{dq}\right|$. The sensitivity of $\chi_{1}(q)$ to perturbations in $q$, to the best of our knowledge, has gone undiscussed as a means to regain trainability. 

Considering, 
\begin{align}
    \chi_{1, \text{CReLU}{\tau, m}}(q) &= \frac{\sigma_w^2}{2}\left[\text{erf}\left(\frac{\tau+m}{\sqrt{2q}}\right)- \text{erf}\left(\frac{\tau}{\sqrt{2q}}\right)\right], \label{eq:chi_crelu}
\end{align}
note that,
\begin{align}
    \chi'_{1, \text{CReLU}{\tau, m}}(q) &=  \frac{ \sigma_w^2}{\sqrt{8 \pi q^3}} \text{exp}\left( -\frac{ (\tau +m)^2}{{2q}} \right)  \left[ \tau{\exp\left({\frac{2m\tau+m^2}{2q}}\right)} 
     - (\tau+m)\right],
\end{align}
again full derivations can be found in \cref{appendix_further_deriv_derivations}.

\begin{figure}[hbtp]
    \centering
    \subfigure[]{\includegraphics[height = 3.75cm, width=0.3\linewidth]{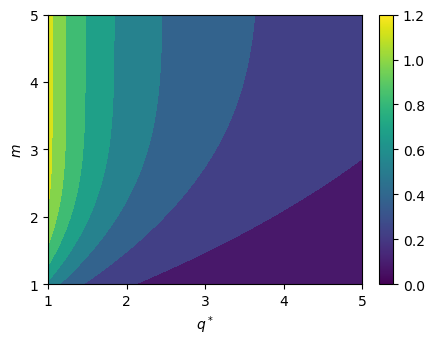}}
    \subfigure[]{\includegraphics[height = 3.75cm, width=0.3\linewidth]{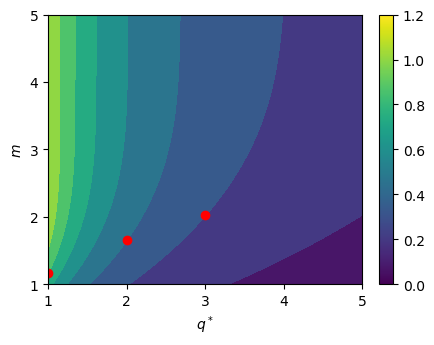}}
    \subfigure[]{\includegraphics[height = 3.75cm, width=0.3\linewidth]{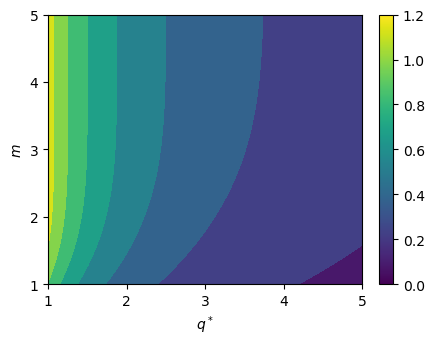}}
    \caption{$\chi'_{1, \text{CReLU}_{\tau, m}}(q^*)$, for a range of $s = \{0.8, 0.85, 0.9\}$, the plots of these three sparsities are from left to right then top to bottom, with horizontal axis $q^*$ and vertical axis $m$. For fixed value $m$ by increasing $q^*$ $\chi'_{1, \text{CReLU}_{\tau, m}}(q^*)$ reduces, and this holds across all three sparsity levels. The highlighted points in plot $s=0.85$ correspond to fixing $V'(q^*)=0.7$ for $q^* = \{ 1, 2, 3\}$ and have values 0.6, 0.3, 0.2 respectively. }
    \label{fig:chiprime_crelu_heatmap_multi_s}
\end{figure}

\cref{fig:chiprime_crelu_heatmap_multi_s} shows that $\chi'_{1, \text{CReLU}_{\tau, m}}(q^*)>0$ can be reduced at the fixed point $q^*$ by increasing this fixed point, allowing the use of $q^*$ as a parameter to reduce the sensitivity to perturbations of $\chi_{1, \text{CReLU}_{\tau, m}}$.

\subsection{Summary of findings}
Increasing $q^*$ improves the symmetry of the variance map around the fixed point, the fluctuation of the next-leading-order term, and further reduces the sensitivity of $\chi'_{1, \phi}(q^*)$ for our chosen $\phi$. In fact, Figures \ref{fig:vprimeprime_crelu_heatmap_multi_s}, \ref{fig:abs_upper_bound_nlo_crelu} and \ref{fig:chiprime_crelu_heatmap_multi_s} show that even small increases in $q^*$ to $2$ or $3$ allow for great improvements across all three considered control metrics. To this endeavor, the largely ignored parameter $q^*$ is shown to be a valuable tool for activations where $V'(q^*)$ approaches 1, especially where previous techniques are ruled out by not being approximately linear around the origin. 

\section{Experiments}\label{sec:experiments}

In order to most clearly demonstrate our theorised improvements of training stability, experiments are conducted on DNNs and CNNs of substantial depth and moderate width for the simple training tasks, classification of MNIST \cite{lecun_gradient-based_1998} and CIFAR10 \cite{krizhevsky_learning_2009}. Further experiments on Tiny ImageNet \cite{deng_imagenet_2009} can be found in \cref{app:tiny_imagenet}. The greater depth exacerbates the exponential growth of instability while the moderate width retains the stochasticity of the variance, see Sections \ref{sec:reducing_vprimeprime}-\ref{sec:sensitivity_of_chi}.  Specifically, the DNN has depth 100 and width 300 while the CNN has depth 50 and 300 channels per layer. The $\text{CReLU}_{\tau,m}$ nonlinear activation is used in all but the final softmax classification layer.  These network hyper parameters are chosen to isolate the instability mechanism and demonstrate the training capacity between varying $q^{*}$ rather than maximum achievable accuracy. 
  Throughout this section the networks are initialised with fixed point variances $q^*=\{1,2,3\}$ and the training for fixed sparsity $s$ and variance slope $V'(q^*)$ is compared. As the variance slope, $V'(q^*)$, prescribes the first-order behaviour of the variance map, this is used as a principled quantity to determine the choice of $m$.
  To further isolate the impact of varying $q^*$, the networks are trained with SGD. Training is for 200 and 300 epochs using learning rates $10^{-4}$ and $10^{-3}$ for the DNNs and CNNs respectively. The CNNs are also trained with cosine learning rate schedule. The data is split with 20\% retained for testing, and 10\% of the training data retained for validation, all final results are reported on the test data set. Each data input is normalized such that it has mean zero and variance $q^*$, the initialisation of the first layer is chosen to preserve this. 
  Each experiment is run on a single NVIDIA A100 or H100 for DNNs and CNNs respectively, and implemented in Pytorch Lightning.

 \cref{table_crelu_dnn} replicates the results in \cite{price_deep_2023} for $q^{*}=1$ and extends the experiments to $q^*=2$ and $3$, as is demonstrated in Figures \ref{fig:vprimeprime_crelu_heatmap_multi_s} and \ref{fig:chiprime_crelu_heatmap_multi_s} this small increase to $q^*$ allows relatively large reduction in $V''(q^*)$ and $\chi'_{1, \phi}(q^*)$. Sparsity up to 0.8 trains stably for the range of choice of $V'(q^*)=
\{0.5,0.7, 0.9\}$. However, for larger sparsity $s=0.85$ the network only retains near full accuracy of 91\% when $V'(q^*)$ is chosen for more careful associated values of $m$ and for $s=0.9$ accuracy is reduced and the choice of $m$ becomes more sensitive, smaller values of $m$ decrease the expressivity of the network whilst larger values can lose training stability. Increasing $q^*$ to 2 and 3 both reduces the sensitivity to the choice of $m$ and retains higher accuracy.

\cref{table_crelu_cnn_cifar_median_across_seeds_test} is analogous to  \cref{table_crelu_dnn}  but for the CNN where again the experiments with $q^*=1$ are shown to have final accuracy with greater sensitivity to $V'(q^*)$ and increasing $q^*$ to 2 and 3 generally improves the ability of the nets to train and increases the final accuracy. Here the median test accuracy is reported, as these tests have greater sensitivity in ability to train.  

The observed sparsity of the trained nets is also shown in \cref{table_crelu_dnn} and \cref{table_crelu_cnn_cifar_median_across_seeds_test}, and is generally consistent with the desired sparsity $s$. For anomalous cases where a decrease in sparsity is observed these are consistent with occurrences whereby the network fails to train, in these cases where exploding gradients appear the network no longer behaves with hidden layers in line with the expected Gaussian distribution, and hence the reduction in sparsity appears. Though it is worth noting we would generally advocate for somewhat smaller $V'(q^*)=0.7$ to further control the contraction of $q^{(\ell)}$ towards $q^*$.

\begin{table*}[ht!]
\centering
\tiny
\caption{DNN trained on MNIST, 100 layers and 300 width, with activation function $\text{CReLU}_{\tau, m}$ across five seeds for each set of parameters. Sparsity up to $s=0.8$ trains stably for the range of $V'(q^*)$ and $q^*$, however for larger sparsity $s=\{0.85, 0.9\}$, see improved sensitivity to the choice of $m$ through $V'(q^*)$ and higher accuracy through increased $q^*$. }
\begin{tabular}
{p{0.75cm}p{0.3cm}p{0.6cm}|p{0.3cm}p{0.65cm}|p{0.4cm}p{0.6cm}|p{0.3cm}p{0.65cm}|p{0.4cm}p{0.6cm}|p{0.3cm}p{0.65cm}|p{0.4cm}p{0.6cm}}
\toprule
\multicolumn{15}{c}{\textbf{DNN on MNIST}} \\ 
\midrule
 &   & & \multicolumn{4}{|c|}{$q^*=1$} & \multicolumn{4}{|c|}{$q^*=2$} & \multicolumn{4}{|c}{$q^*=3$} \\
\midrule
  & $s$ & $V'(q^*)$  & 
 $m$ & $V''(q^*)$  &\textbf{Acc.} & \textbf{Sparsity} & $m$ & $V''(q^*)$ &\textbf{Acc.} & \textbf{Sparsity} & $m$ & $V''(q^*)$ &\textbf{Acc.} & \textbf{Sparsity} \\ 
\midrule
\multirow{1}{*}{$\text{ReLU}$}  &0.5& &&& 0.94& 0.50&&&0.94&0.50&&&0.94&0.50\\
\midrule
 \multirow{7}{*}{$\text{CReLU}_{\tau, m}$} 
&0.8& 0.5 &0.89&-0.24& 0.90 & 0.80 & 1.26 & -0.12& 0.91 & 0.80 &1.54&-0.08 & 0.91 & 0.80 \\
     \rowcolor{lightgray}\cellcolor{white}&\cellcolor{white}&0.7 &1.27&-0.12& 0.91 & 0.80 & 1.79 & -0.06& 0.91 & 0.80 &2.19&-0.04 & 0.90 & 0.80 \\
    &&0.9 &1.85&  0.21& 0.91 & 0.80 & 2.62 &  0.11 & 0.91 & 0.80&3.21&0.07  & 0.91 & 0.80 \\
\cline{2-15}
&0.85&0.5 &0.81&-0.14& 0.79 & 0.85 & 1.14 & -0.07 & 0.86 & 0.85 &1.40 &-0.05 & 0.90 & 0.85 \\
     \rowcolor{lightgray}\cellcolor{white}&\cellcolor{white}&0.7 &1.17& 0.02& 0.91 & 0.85 & 1.66 &  0.01 & 0.92 & 0.85&2.03&0.01  & 0.91 & 0.85 \\
    &&0.9 &1.74& 0.41& 0.11 & 0.77 & 2.46 &  0.20 & 0.11 & 0.76 &3.01&0.13 & 0.27 & 0.80 \\
\cline{2-15}
&0.9& 0.5 &  0.72 &0.90& 0.41 & 0.90  & 1.02 & 0.00 &0.49 & 0.90 & 1.25 & 0.00& 0.54 & 0.90  \\
 \rowcolor{lightgray}\cellcolor{white}&\cellcolor{white}& 0.7 & 1.06 & 0.12 & 0.75 & 0.90  & 1.50 & 0.06 & 0.61 & 0.90  & 1.84 & 0.04 &0.89 & 0.90  \\
 && 0.9 & 1.61 & 0.34 & 0.10 & 0.84 & 2.28 & 0.17 & 0.11 & 0.80 & 2.79 & 0.11 & 0.10 & 0.77 \\

\end{tabular}
\label{table_crelu_dnn}
\end{table*}

\begin{table*}[ht!]
\centering
\tiny
\caption{CNN trained on CIFAR10, 50 layers and 300 channels, with activation function $\text{CReLU}_{\tau, m}$ across four seeds for each set of parameters. For higher sparsity levels $s=\{0.85, 0.9\}$, see improved sensitivity to the choice of $m$ through $V'(q^*)$ and higher accuracy through increased $q^*$. 
}
\begin{tabular}
{p{0.75cm}p{0.3cm}p{0.6cm}|p{0.3cm}p{0.65cm}|p{0.4cm}p{0.6cm}|p{0.3cm}p{0.65cm}|p{0.4cm}p{0.6cm}|p{0.3cm}p{0.65cm}|p{0.4cm}p{0.6cm}}
\toprule
\multicolumn{15}{c}{\textbf{CNN on CIFAR10}} \\ 
\midrule
 &   & & \multicolumn{4}{|c|}{$q^*=1$} & \multicolumn{4}{|c|}{$q^*=2$} & \multicolumn{4}{|c}{$q^*=3$} \\
\midrule
  & $s$ & $V'(q^*)$  & 
 $m$ & $V''(q^*)$  &\textbf{Acc.} & \textbf{Sparsity} & $m$ & $V''(q^*)$ &\textbf{Acc.} & \textbf{Sparsity} & $m$ & $V''(q^*)$ &\textbf{Acc.} & \textbf{Sparsity} \\ 
\midrule
\multirow{1}{*}{$\text{ReLU}$}  &0.5& &&& 0.71& 0.50&&&0.75&0.50&&&0.71&0.50\\
\midrule
 \multirow{7}{*}{$\text{CReLU}_{\tau, m}$} 
&0.8& 0.5 & 0.89 & -0.24  & 0.57 & 0.80 & 1.26 & -0.12  & 0.58 & 0.80   & 1.54 & -0.08  & 0.59  & 0.80  \\
 \rowcolor{lightgray}\cellcolor{white}&\cellcolor{white}&0.7 & 1.27 & -0.12  & 0.58 & 0.80 & 1.79 & -0.06   & 0.60& 0.80   & 2.19 & -0.04  & 0.59  & 0.80  \\
 
   && 0.9 & 1.85 & 0.21   & 0.54 & 0.80 & 2.62 & 0.11   & 0.16 & 0.81   & 3.21 & 0.07   & 0.28  & 0.81  \\
\cline{2-15}
&0.85& 0.5 & 0.81 & -0.14 & 0.51 & 0.85 & 1.14 & -0.07 & 0.52  & 0.85   & 1.40 & -0.05  & 0.54  & 0.85  \\
\rowcolor{lightgray}\cellcolor{white}&\cellcolor{white}& 0.7 & 1.17 & 0.02  & 0.49 & 0.85 & 1.66 & 0.01   & 0.56 & 0.85   & 2.03 & 0.01   & 0.56  & 0.85  \\
    && 0.9 & 1.74 & 0.41  & 0.50 & 0.84 & 2.46 & 0.20   & 0.53 & 0.84   & 3.01 & 0.13   & 0.51  & 0.85  \\
\cline{2-15}
  &0.9& 0.5 & 0.72 & 0.00 & 0.41 & 0.90 & 1.02 & 0.00   & 0.43 & 0.89   & 1.25 & 0.00   & 0.46  & 0.90  \\
\rowcolor{lightgray}\cellcolor{white}&\cellcolor{white}&0.7 & 1.06 & 0.23 & 0.29 & 0.90 & 0.12 & 1.50   & 0.45 & 0.89   & 1.84 & 0.08   & 0.24  & 0.90  \\
     && 0.9 & 1.61 & 0.69 & 0.22 & 0.87 & 2.28 & 0.34   & 0.42 & 0.88  & 2.79 & 0.32    & 0.35  & 0.82  \\
\end{tabular}
\label{table_crelu_cnn_cifar_median_across_seeds_test}
\end{table*}

Figures \ref{fig:mnist_training_mean_plots}, \ref{fig:cifar_training_mean_plots} demonstrate the mean training loss and validation accuracy as a function of gradient descent steps for the DNN (across five seeds) and CNN (across four seeds) respectively, at sparsity $s=0.85$, $V'(q^*)=0.7$, standard deviation bars are also shown. The recommended value of $V'(q^*)=0.7$ is used for all plots as it is a value where all networks train. Results for the three representative values of $q=\{1,2,3\}$ are displayed. As indicated by the theory in \cref{sec:eoc_theory}, the training dynamics are generally observed to improve by increasing $q^*$. 
Figures \ref{fig:mean_train_loss_crelu}-\ref{fig:dnn_val_loss_crelu_ind_seed_s_0.9} in \cref{app:more_experiments_dnn} show plots for individual seeds and for the additional configurations in \cref{table_crelu_dnn}; in these plots the importance of larger $q^*$ is shown to become greater as either of sparsity level $s$ and $V'(q^*)$ increase. The improved training speed observed in Figures \ref{fig:mnist_training_mean_plots}, \ref{fig:cifar_training_mean_plots} for greater $q^*$ is apparent, this is in line with the theory in \cref{sec:sensitivity_of_chi} due to the reduction in sensitivity of $\chi_{1,\phi}(q^*)$ for increasing $q^*$, the corresponding values of $\chi'_{1, \phi}(q^*)$ to the parameter choices in Figures \ref{fig:mnist_training_mean_plots} and \ref{fig:cifar_training_mean_plots} are highlighted as red dots in \cref{fig:chiprime_crelu_heatmap_multi_s}.



\begin{figure}[htbp!]
    \centering
    \subfigure[]{\includegraphics[trim = 0cm 0cm 0cm 1.75cm, clip,height=3.45cm, width=0.45\linewidth]{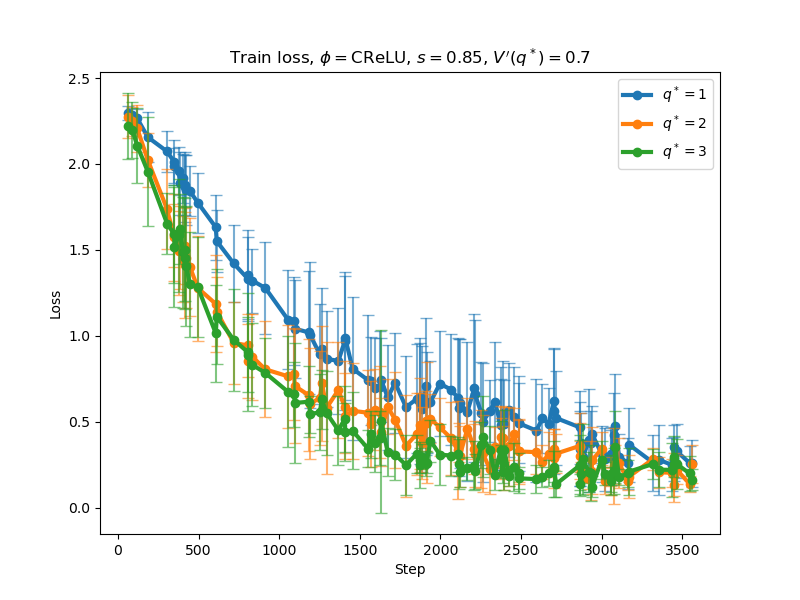}
    }
    \subfigure[]{\includegraphics[trim = 0cm 0cm 0cm 1.75cm, clip, height=3.45cm, width=0.45\linewidth]{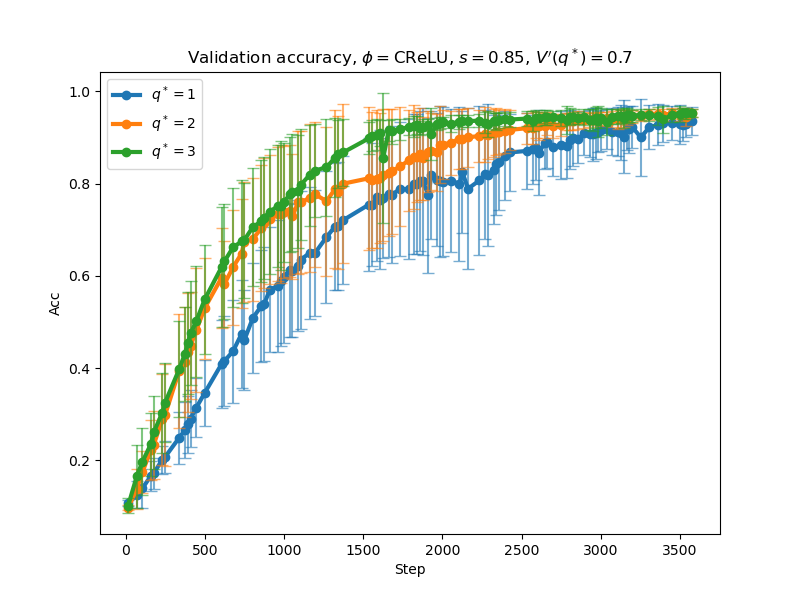}}
     \caption{Mean training loss, (a), and validation accuracy, (b), against step, for a DNN with $\phi = \text{CReLU}_{\tau, m}$ trained on MNIST across five seeds, with standard deviation error bars. $\phi = \text{CReLU}_{\tau, m}$ is such that $s=0.85$ and $V'(q^*) = 0.7$. For increasingly larger of $q^*=\{ 1, 2, 3 \}$, the training loss and validation accuracy converge faster.}
    \label{fig:mnist_training_mean_plots}
\end{figure}

Again, plots of the training loss for all experiments of the CNNs, including of Tiny ImageNet can be found in Figures \ref{fig:cifar_training_mean_plots}-\ref{fig:mean_train_loss_crelu_tinyimagenet} in \cref{more_experiments_cnn}.


\begin{figure}[htbp!]
    \centering
\subfigure[]{\includegraphics[trim = 0cm 0cm 0cm 1.75cm, clip, height=3.45cm,  width=0.45\linewidth]{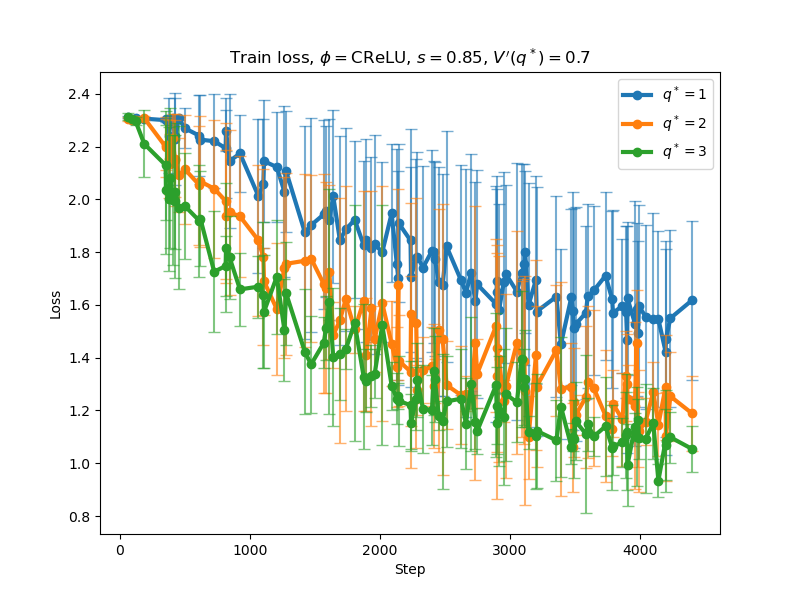}
    }
    \subfigure[]{\includegraphics[trim = 0cm 0cm 0cm 1.75cm, clip, height=3.45cm,  width=0.45\linewidth]{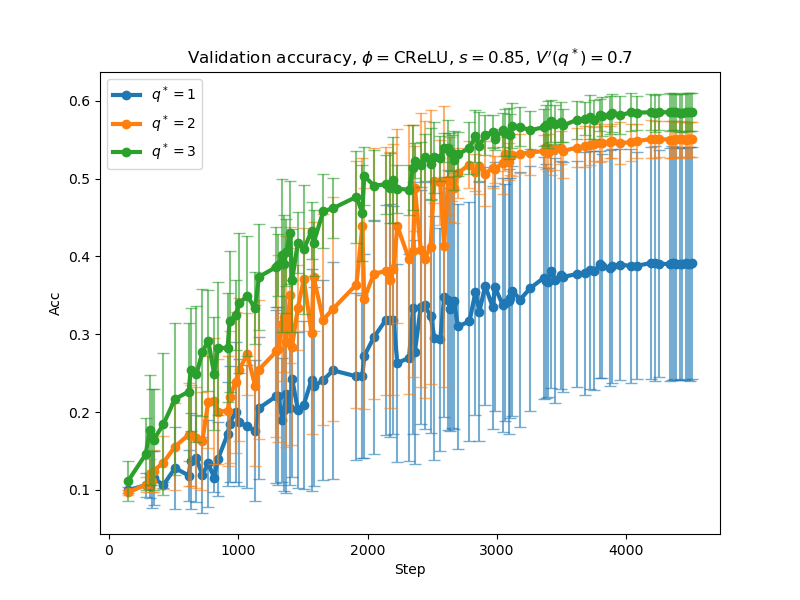}}
    \caption{Mean training loss, (a), and validation accuracy, (b), against step, for a CNN with $\phi = \text{CReLU}_{\tau, m}$ trained on CIFAR10 across four seeds, with standard deviation error bars. $\phi = \text{CReLU}_{\tau, m}$ is such that $s=0.85$ and $V'(q^*) = 0.7$. For increasingly larger of $q^*=\{ 1, 2, 3 \}$, the training loss and validation accuracy converge faster.}
    \label{fig:cifar_training_mean_plots}
\end{figure}

\section{Conclusion and further extensions}

Previous Edge-of-Chaos analysis has primarily focused on how the correlation of nearby points evolves through the layers of a deep neural network, \cite{poole_exponential_2016}. For unitary weight matrices and activations that are approximately linear around the origin these correlation dynamics can be improved by having $q^{(\ell)}$ decreasing towards zero; the dynamic isometry property \cite{martens_rapid_2021, roberts_principles_2022, murray_activation_2022}.  Dynamic isometry isn’t possible for activations that are zero around the origin and new issues of training instability arise \cite{price_deep_2023}. This motivates revisiting the role of the Gaussian process variance and showing for the first time that this parameter is a valuable tool for improving the training dynamics of DNNs and CNNs with such sparsifying nonlinear activations.  Faster training, less hyper-parameter sensitivity, and greater overall accuracy are obtained for even greater 90\% hidden layer sparsities.

\paragraph{Future work:} A natural extension to this work is to consider more complex architectures such as transformers and state-space models.  Both cases include MLP type layers which could similarly benefit from the hidden layer sparsity, these are not considered here as they are a separate line of analysis. Further experiments demonstrating the interplay of our approach with pruned, quantized, and low-rank weight matrices is needed; as well as further engineering to improve and deploy the preliminary computational efficiency experiments for activation sparsity seen in \cref{app:sparse_matvec}.

\paragraph{Limitations:}\label{limitations} The EoC theory relies upon the infinite width limit assumption, hence these results mostly apply for models with sufficiently large width. For clarity of analysis, we evaluate our approach on simple classification tasks (MNIST, CIFAR10, Tiny ImageNet) and deep proxy architectures: DNNs (width 300, depth 100) and CNNs (300 channels, depth 50). While these configurations are not optimized for practical deployment, they provide a setting in which the training dynamics and stability effects predicted by our theory are most clearly observable. 

\section*{Acknowledgments}
The authors would acknowledge support from His Majesty’s Government in the development of this research.
Jared Tanner is supported by the UK Engineering and Physical Sciences Research Council (EPSRC) through the grant EP/Y028872/1.
The authors are also grateful to Thiziri Nait Saada of the University of Oxford for early discussions of this work.







\clearpage
\bibliography{references}
\bibliographystyle{ieeetr}

\newpage
\appendix
\onecolumn

\section{\texorpdfstring{Further on $\chi_1$}{Further on chi1}}\label{app_further_on_chi}
To further understand the importance of $\chi_1$ consider, as in \cite{xiao_dynamical_2018}, the input-output Jacobian,
\begin{equation}
    J = \frac{\partial x^{(L)}}{\partial x^{(0)}} = \prod_{l=1}^L D^{(\ell)}W^{(\ell)}.
\end{equation}
The Jacobian is the product of the linear operators used in computing each layer's error vector in backpropagation, and therefore is closely related to training.

To show how $\chi_1$ relates to the scaling of perturbations across a single layer in training, the second moment of the errors distribution, $\tilde{v}^{(\ell)} = \mathbb{E} \left[ {({\delta}_i^{\ell}})^2 \right]$ was shown in \cite{schoenholz_deep_2017} to evolve according to a recurrent relation dependent on $\chi_1$, 
\begin{equation}
    \tilde{v}^{(\ell)} = \tilde{v}^{(\ell+1)} \frac{N_{\ell+1}}{N_{\ell}}\chi_1.
\end{equation}
Therefore for a network where $N_{\ell} = N$ $\forall l \in \{ 1, \dots, L\}$, $\tilde{v}^{(\ell)} = \tilde{v}^{(0)} \left( \chi_1\right)^{\ell}$. 

Assuming a normalized input so that $q^{(1)}=q^*$, the moments of $D^{(\ell)}$ are given by,
\begin{equation}
    \mu_k = \int \left( \phi' \left( \sqrt{q^*}z \right) \right)^{2k} \gamma(dz). \label{eq:moments_of_diag_jacobian_general}
\end{equation}
(Note: $\sigma_w^2 \mu_1 = \chi_1$)
The quantities $\mu_1, \mu_2$ are then related to the first two moments of the spectrum of $JJ^T$, $m_1, m_2$, by the following,
\begin{align}
m_1 &= \left( \sigma_w^2 \mu_1 \right)^L, \\
m_2 &= \left( \sigma_w^2 \mu_1 \right)^{2L}L \left( \frac{\mu_2}{\mu_1^2} + \frac{1}{L} -1-s_1 \right),
\end{align}
where $s_1$ is the first moment of $WW^T$ $S-$transformed, and so we have identified $m_1 = (\chi_1)^L$. For our case of initialisation, $W$ is Gaussian with mean $0$ and variance $\sigma_w^2/N_{\ell-1}$, $s_1=-1$.

Since $m_1 = \left( \chi_1 \right)^L$, we can interpret $\chi_1$ as the growth of the error vector per layer.
The motivation of concentrating $\chi_1$ around $1$ is therefore to ensure for large $L$ the growth of the error vectors doesn't explode or collapse in expectation.

To further stabilize our network, we can concentrate the variance of $JJ^T$,
\begin{equation}
    \sigma_{JJ^T} = m_2 - m_1^2 = L \left( \frac{\mu_2}{\mu_1^2} - 1 - s_1 \right).
\end{equation}
As the variance $\sigma_{JJ^T
}^2$ increases with depth $L$ this may become problematic for large depth. Methods have been proposed to control $\sigma_{JJ^T}^2$, for example in \cite{pennington_emergence_2018} they proposed allowing $q^* \rightarrow 0$ at a rate to ensure $\sigma_{JJ^T}$ converges to a nonzero constant in the large depth limit.
Further studies show in the case of $\phi = \text{ReLU}$, there is only one EoC point and so $\sigma_{JJ^T}$ is fixed with no way to improve it \cite{pennington_emergence_2018}. If instead $\phi \left( \cdot \right)$ is approximately linear around the origin, with orthonormal activations then $s_1=0$, and it is possible to have $\frac{\mu_2}{\mu_1^2} \geq 1$ approach $1$ in order to mitigate the growth of $\sigma_{JJ^T}$ \cite{murray_activation_2022}, such as $\frac{\mu_2}{\mu_1^2} \sim L^{-1}$ \cite{martens_rapid_2021}.

In our case, for $\phi \left(  \cdot \right)$ where $\phi \left( z\right) =0 $ with $|z| \leq \tau$, no such $L$ independence is possible; however $\frac{\mu_2}{\mu_1^2} -1 -s_1$ remains a function of $m, \tau, q^*$.

\section{\texorpdfstring{Derivation of further derivatives for $\text{CReLU}_{\tau, m}$}{Derivation of further derivatives for CReLU}}{\label{appendix_further_deriv_derivations}}

We know,
\begin{align}
    V'_{\text{CReLU}_{\tau, m}}(q) &= \sigma
    _w^2\left( \frac{1}{2} \text{erf} \left( \frac{\tau+m}{\sqrt{2q}}\right)-\frac{1}{2} \text{erf} \left( \frac{\tau}{\sqrt{2q}}\right)-\frac{m}{\sqrt{2\pi q}}\text{exp}\left(- \frac{(\tau+m)^2}{2q}\right)\right), \\
    &= \sigma_w^2 \left( a_1 + a_2 + a_3\right),
\end{align}

which we write as,
\begin{align}
    V'_{\text{CReLU}_{\tau, m}}(q) &= \sigma_w^2 \left( a_1 + a_2 + a_3\right),
\end{align}
defining,
\begin{align}
    a_1 &=\frac{1}{2} \text{erf} \left( \frac{\tau+m}{\sqrt{2q}}\right), \\
    a_2 &=-\frac{1}{2} \text{erf} \left( \frac{\tau}{\sqrt{2q}}\right), \\
    a_3&= -\frac{m}{\sqrt{2\pi q}}\text{exp}\left(- \frac{(\tau+m)^2}{2q}\right).
\end{align}

It follows that,
\begin{align}
    \frac{da_1}{dq} &= -\frac{\tau+m}{\sqrt{8\pi q^3}}\exp\left({-\frac{(\tau+m)^2}{2q}}\right),\\
    \frac{da_2}{dq}&=\frac{\tau}{\sqrt{8\pi q^3}}\exp\left({-\frac{\tau^2}{2q}}\right), \\
    \frac{da_3}{dq} 
    &= \frac{m}{\sqrt{8\pi q^3}}\left( 1- \frac{(\tau+m)^2}{q}\right)\text{exp}\left(- \frac{(\tau+m)^2}{2q}\right). 
\end{align}

Hence, 
\begin{align}
    V''_{\text{CReLU}_{\tau, m}}(q) &= \frac{\sigma
    _w^2}{\sqrt{8\pi q^3}}\left[ {\tau}\exp\left({-\frac{\tau^2}{2q}}\right)-\left(\tau+m\right)\exp\left({-\frac{(\tau+m)^2}{2q}}\right) \right. \nonumber \\
    &\hspace{45mm}+{m}\left.\left( 1- \frac{(\tau+m)^2}{q}\right)\exp\left({-\frac{(\tau+m)^2}{2q}}\right)\right].
\end{align}


Further, we can calculate $\chi'_{1, \text{CReLU}_{\tau, m}}$ since, $\chi_{1, \text{CReLU}_{\tau, m}} = \sigma_w^2 \left( a_1 + a_2 \right)$, we can immediately see,
\begin{align}
    \chi'_{1, \text{CReLU}_{\tau, m}} &= \frac{\sigma
    _w^2}{\sqrt{8\pi q^3}}\left( {\tau}\exp\left({-\frac{\tau^2}{2q}}\right)-(\tau+m)\exp\left({-\frac{(\tau+m)^2}{2q}}\right)\right).
\end{align}

\section{\texorpdfstring{Variance map analysis for $\text{CST}_{\tau, m}$}{Variance map analysis for CST}}{\label{app:var_map_cst}}

We can easily extend the results for $\text{CReLU}_{\tau, m}$ to $\text{CST}_{\tau, m}$, defined

\begin{align}
    \label{cst_eq}
            \text{CST}_{\tau, m}(x) &=
            \begin{cases}
            0, & \text{if $|x|<\tau$}\\
            x-\text{sign}(x)\tau, & \text{if $\tau\leq |x| \leq \tau+m$}\\
            \text{sign}(x)m, & \text{if $|x|>\tau+m$}.
            \end{cases}
\end{align}

For a visual of $\text{CST}_{\tau, m}$, see \cref{fig:cst_fig}.
\begin{figure}[htbp!]
    \centering
    \includegraphics[width=0.5\linewidth]{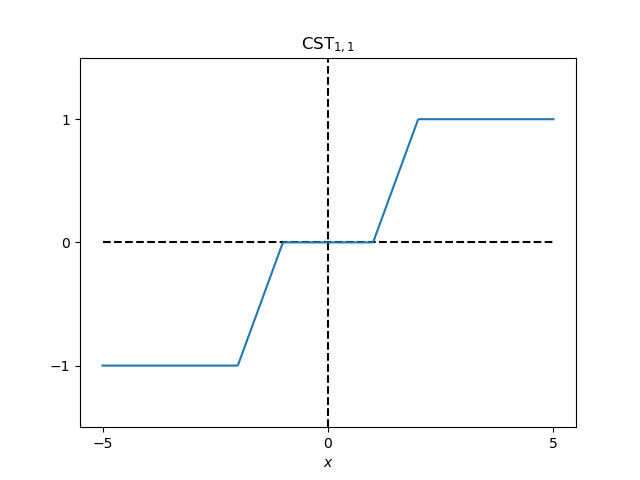}
    \caption{Non-linear activation $\text{CST}_{\tau, m}$,  as defined in \eqref{cst_eq} respectively, with $\tau=1$ and $m=1$.}
    \label{fig:cst_fig}
\end{figure}

It was shown in \cite{price_deep_2023} that,
\begin{align}
    V'_{\text{CST}_{\tau, m}}(q) = 2V'_{\text{CReLU}_{\tau, m}}(q),
\end{align}
and,
\begin{align}
    \chi'_{1, \text{CST}_{\tau, m}}(q) = 2\chi'_{1, \text{CReLU}_{\tau, m}}(q).
\end{align}

Therefore the derivatives of the variance map for $\text{CST}_{\tau, m}$, and derivatives of $\chi_{1, \text{CST}_{\tau, m}}$ are equivalent to $\text{CReLU}_{\tau, m}$ up to a factor of 2. Plots of $V''_{\text{CST}_{\tau, m}}(q^*)$ and $ \chi'_{1, \text{CST}_{\tau, m}}(q^*)$ can be found in Figures \ref{fig:vprimeprime_cst_heatmap_multi_s} and \ref{fig:chiprime_cst_heatmap_multi_s}, respectively.


\begin{figure}[htbp!]
    \centering
    \subfigure[]{\includegraphics[height = 3.75cm, width=0.3\linewidth]{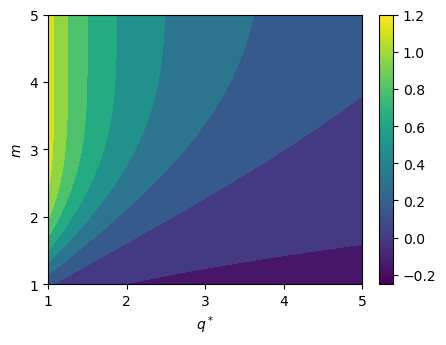}}
    \subfigure[]{\includegraphics[height = 3.75cm, width=0.3\linewidth]{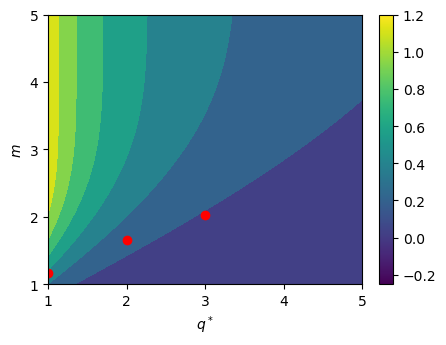}}
    \subfigure[]{\includegraphics[ height = 3.75cm, width=0.3\linewidth]{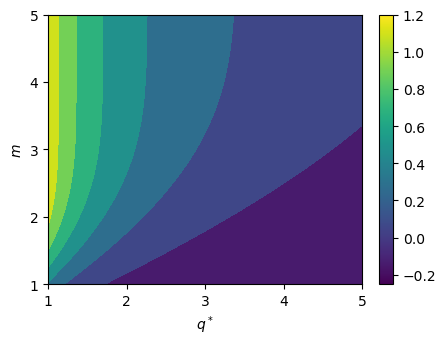}}
    \caption{$V''_{\text{CST}_{\tau, m}}(q^*)$, for a range of $s = \{0.8, 0.85, 0.\}$, the plots of these three sparsities are from left to right, with horizontal axis $q^*$ and vertical axis $m$. For all sparsity levels and large fixed value $m$ by increasing $q^*$, $V''_{\text{CST}_{\tau, m}}(q^*)$ reduces.}
    \label{fig:vprimeprime_cst_heatmap_multi_s}
\end{figure}

\begin{figure}[hbtp]
    \centering
    \subfigure[]{\includegraphics[height = 3.75cm, width=0.3\linewidth]{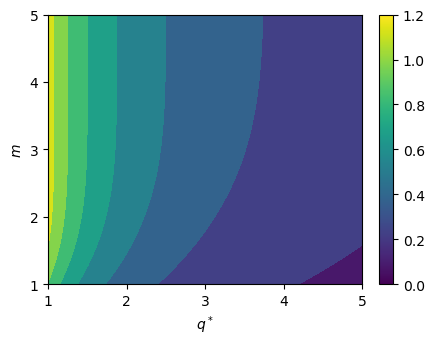}}
    \subfigure[]{\includegraphics[height = 3.75cm, width=0.3\linewidth]{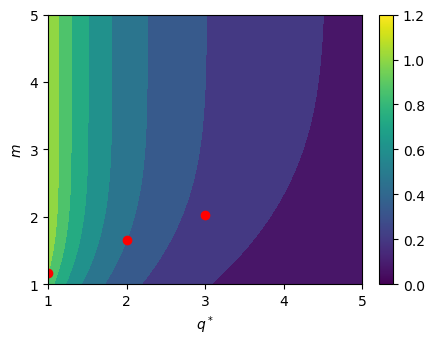}}
    \subfigure[]{\includegraphics[height = 3.75cm, width=0.3\linewidth]{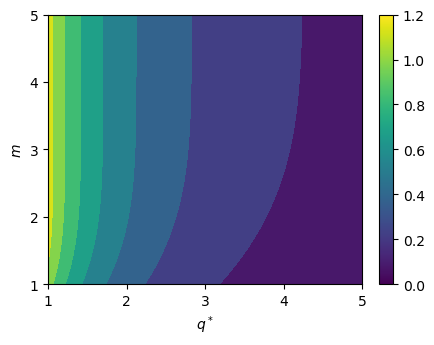}}
    \caption{$\chi'_{1, \text{CST}_{\tau, m}}(q^*)$, for a range of $s = \{0.8, 0.85, 0.9\}$, the plots of these three sparsities are from left to right, with horizontal axis $q^*$ and vertical axis $m$. For fixed value $m$ by increasing $q^*$, $\chi'_{1, \text{CST}_{\tau, m}}(q^*)$ reduces, and this holds across all three sparsity levels.}
    \label{fig:chiprime_cst_heatmap_multi_s}
\end{figure}

Figures \ref{fig:vprimeprime_cst_heatmap_multi_s}, \ref{fig:chiprime_cst_heatmap_multi_s} show that we can, similarly to $\text{CReLU}_{\tau, m}$, decrease the values $V''_{\text{CST}_{\tau, m}}(q^*)$ and $ \chi'_{1, \text{CST}_{\tau, m}}(q^*)$ for fixed $m$ by increasing $q^*$.
\section{Further sparsifying activation functions}{\label{app:further_sparsifying_activations}}

This section repeats the analysis of Sections \ref{sec:reducing_vprimeprime}-\ref{sec:sensitivity_of_chi} for a further three sparsifying activation functions,
\begin{align}
    \label{eq:hardshrink}
            \text{HardShrink}_{\tau, \alpha, \epsilon}(x) &=
            \begin{cases}
            0, & \text{if $x<\tau-\epsilon$}\\
            \alpha(x-\tau+\epsilon), & \text{if $\tau-\epsilon \leq x \leq \tau+\epsilon$}\\
            x+2\alpha \epsilon - \tau - \epsilon, & \text{if $x>\tau+\epsilon$},
            \end{cases}\\
    \label{eq:hardshrink_cap}
            \text{CHardShrink}_{\tau, \alpha, \epsilon, m}(x) &=
            \begin{cases}
            0, & \text{if $x<\tau-\epsilon$}\\
            \alpha(x-\tau+\epsilon), & \text{if $\tau-\epsilon \leq x \leq \tau+\epsilon$}\\
            x+2\alpha \epsilon - \tau - \epsilon, & \text{if $\tau+\epsilon< x<m$}\\
            m+2\alpha \epsilon - \tau - \epsilon & \text{if $x>m$},
            \end{cases}\\
    \label{eq:sawtooth}
\text{Sawtooth}_{\tau, m}(x) &=
            \begin{cases}
            0, & \text{if $|x-\tau|>m$}\\
            -\text{sign}(x-\tau)(x-\tau)+m, & \text{if $|x-\tau|\leq m$}.
            \end{cases}
\end{align}
For a visual of these activation functions see \cref{fig:further_activations}.
The behaviour of the control metrics $V'(q^*)$, $V''(q^*)$ and $\chi_{1, \phi}'(q^*)$ when increasing $q^*$ for sparsifying activation functions \eqref{eq:hardshrink}, \eqref{eq:hardshrink_cap} and \eqref{eq:sawtooth} can be numerically calculated and are plotted in \cref{fig:further_activations_control_metrics}. Observe again the decreased $V''(q^*)$ and $\chi_{1, \phi}'(q^*)$ by increasing $q^*$ for these activations.

\begin{figure}[htbp!]
    \centering
    \subfigure[t][]{
        \includegraphics[width=0.3\linewidth]{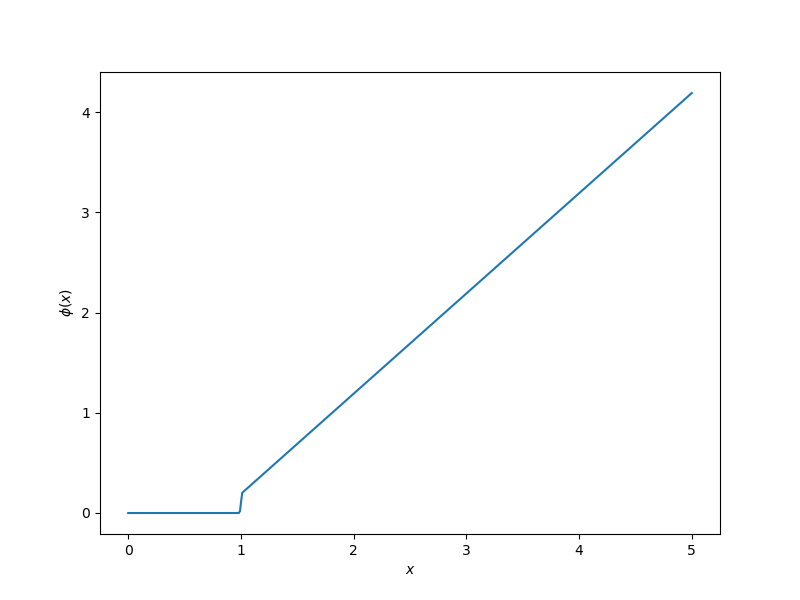}   
    }
    \hfill
    \subfigure[t][]{
        \includegraphics[width=0.3\linewidth]{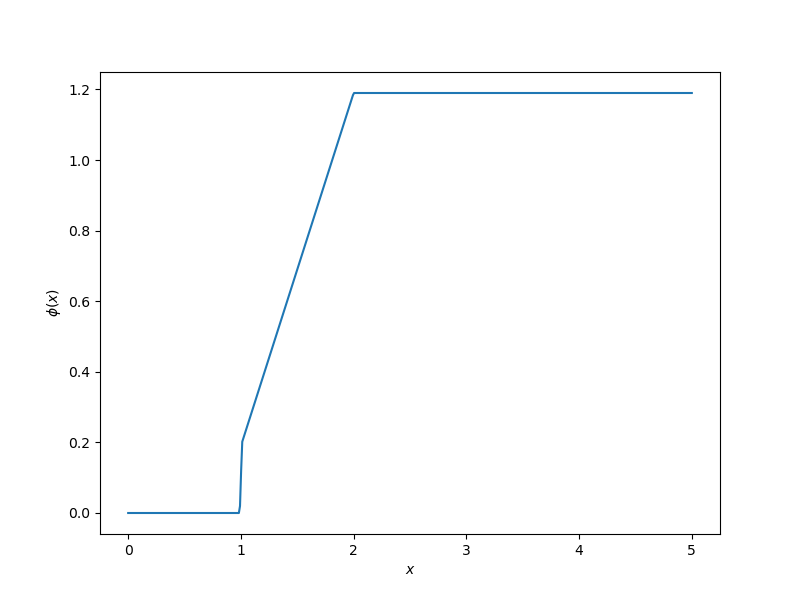}
    }
    \hfill
    \subfigure[t][]{
        \includegraphics[width=0.3\linewidth]{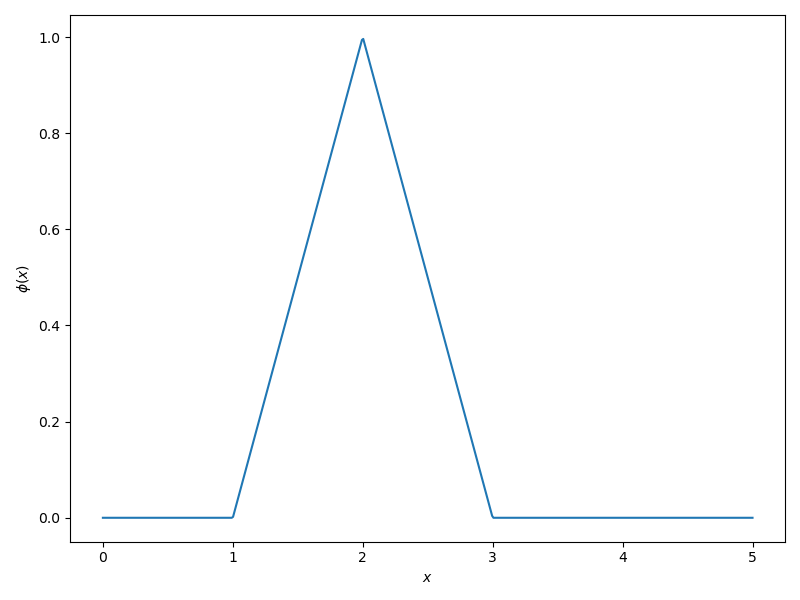}
    }
    \hfill
    \caption{Parameter plots for non-linear activations $\text{HardShrink}_{\tau, \alpha, \epsilon}$, $\text{CHardShrink}_{\tau, \alpha, \epsilon, m}$, $\text{Sawtooth}_{\tau, m}$, columns left to right, as defined in (1), (2), (3) respectively. (a), (b), (c) are exemplar plots of the activations themselves, (a) is $\text{HardShrink}_{\tau, \alpha, \epsilon}$ with $\tau =1$, $\alpha = 10$ and $\epsilon=0.01$. (b) is $\text{CHardShrink}_{\tau, \alpha, \epsilon, m}$ with $\tau =1$, $\alpha = 10$, $\epsilon=0.01$ and $m=2$. (c) is $\text{Sawtooth}_{\tau, m}$ with $\tau =1$, $m=1$. }
    \label{fig:further_activations}
\end{figure}

\begin{figure}[htbp!]
    \centering
    \subfigure[t][]{
        \includegraphics[width=0.3\linewidth]{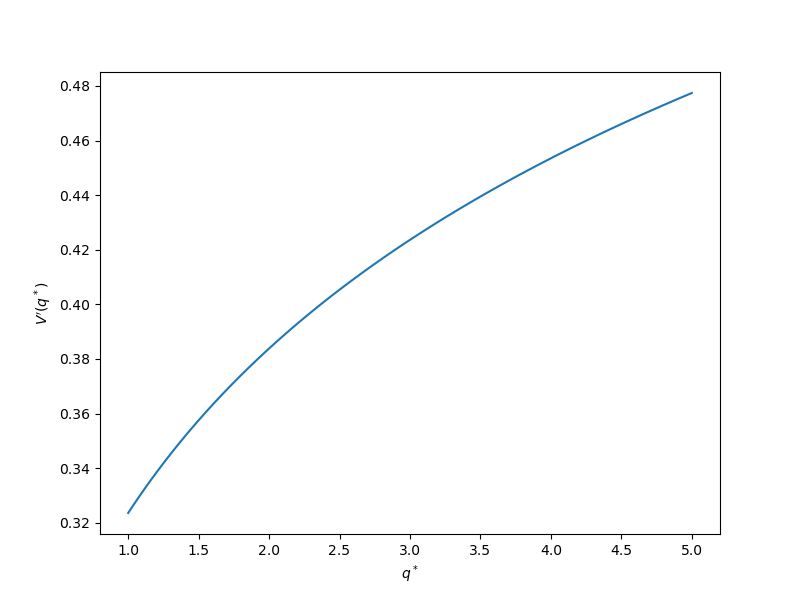}   
    }
    \hfill
        \subfigure[t][]{
        \includegraphics[width=0.3\linewidth]{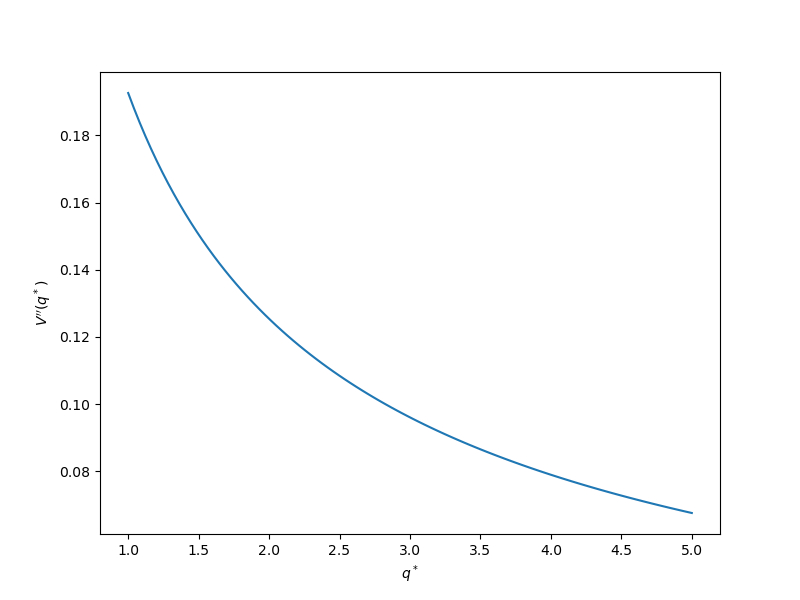}   
    }
    \hfill
    \subfigure[t][]{
        \includegraphics[width=0.3\linewidth]{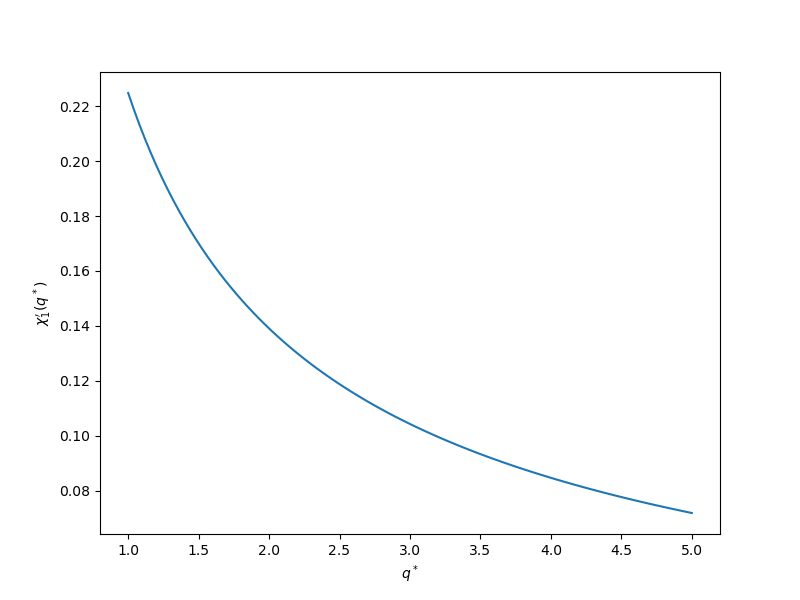} 
    }
    \hfill
    \subfigure[t][]{
        \includegraphics[width=0.3\linewidth]{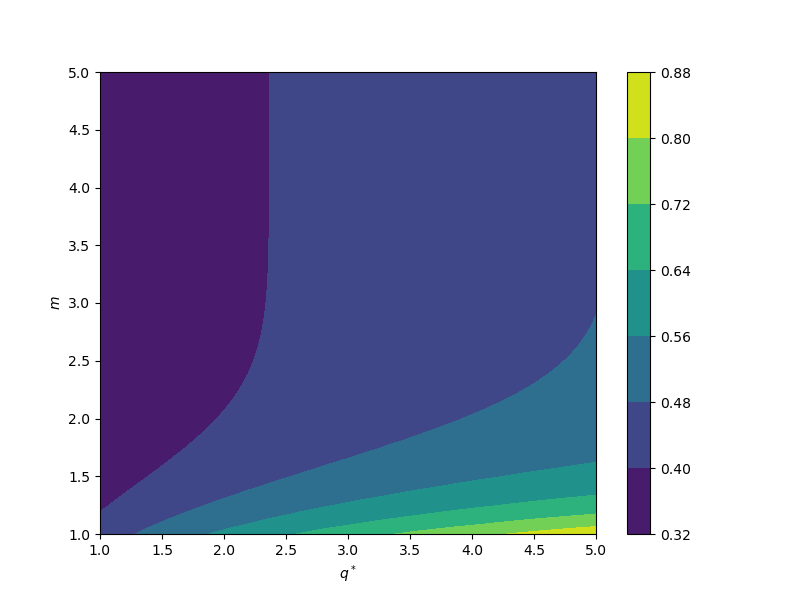}
    }
    \hfill
    \subfigure[t][]{
        \includegraphics[width=0.3\linewidth]{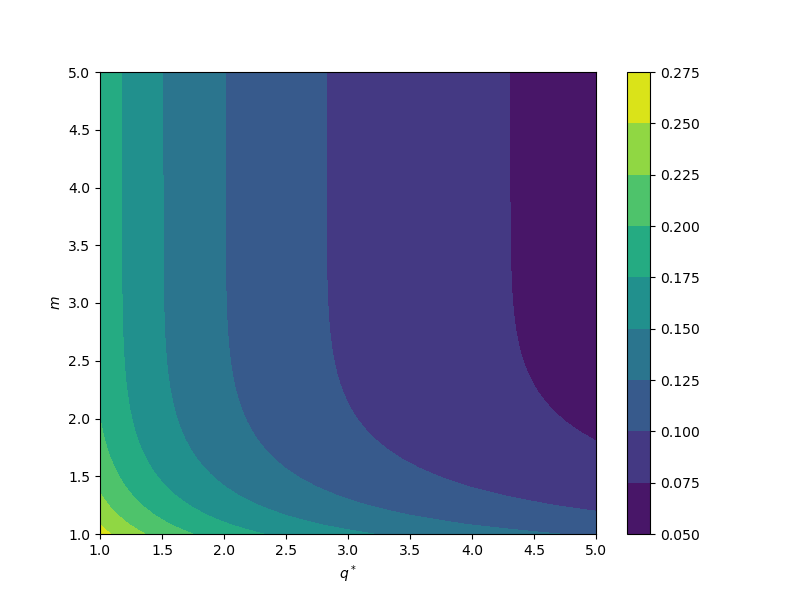}
    }
    \hfill
    \subfigure[t][]{
        \includegraphics[width=0.3\linewidth]{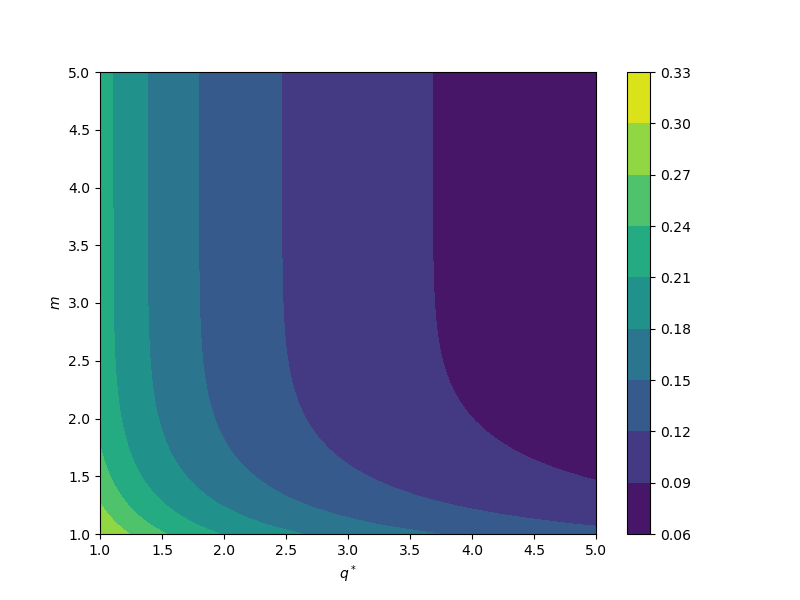}
    }
    \hfill
    \subfigure[t][]{
        \includegraphics[width=0.3\linewidth]{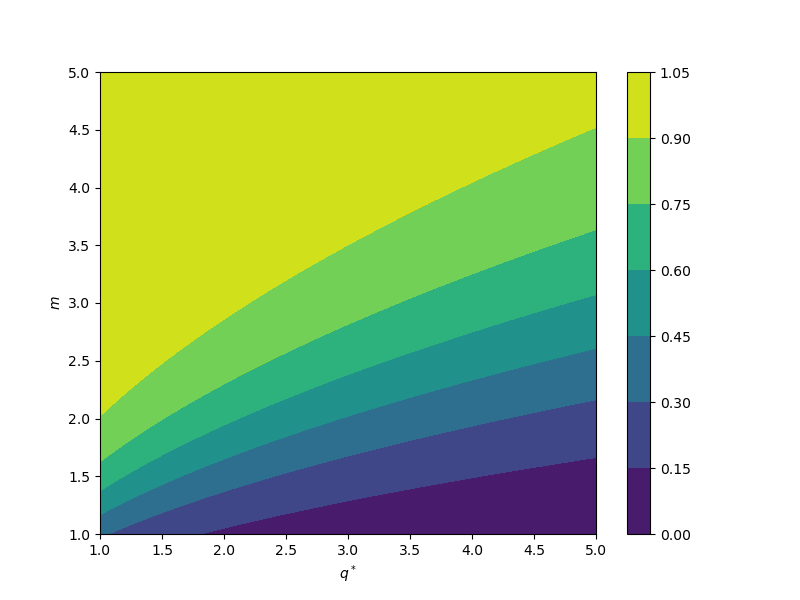}
    }
    \hfill
    \subfigure[t][]{
        \includegraphics[width=0.3\linewidth]{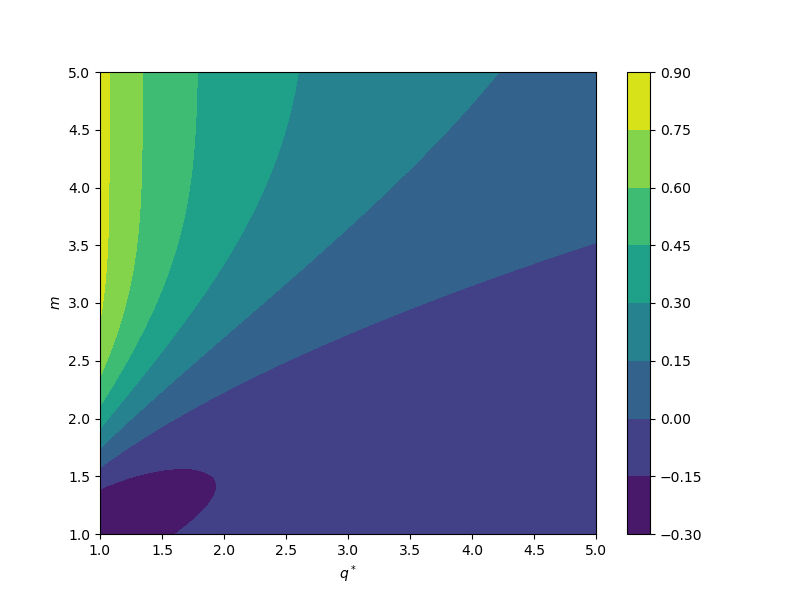}
    }     
    \hfill
    \subfigure[t][]{
        \includegraphics[width=0.3\linewidth]{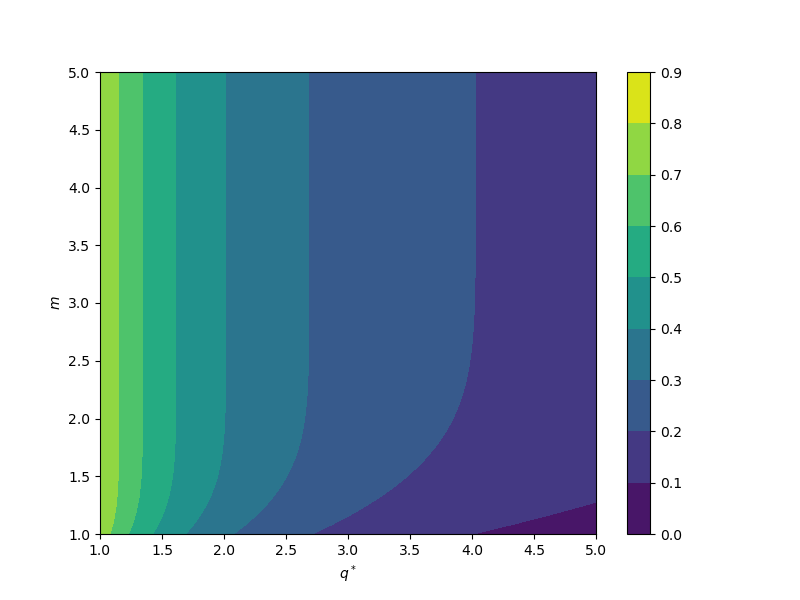}
    }
    \caption{Parameter plots for non-linear activations $\text{HardShrink}_{\tau, \alpha, \epsilon}$,  $\text{CHardShrink}_{\tau, \alpha, \epsilon, m}$, $\text{Sawtooth}_{\tau, m}$, rows top to bottom, as defined in \eqref{eq:hardshrink}, \eqref{eq:hardshrink_cap} and \eqref{eq:sawtooth} respectively. We fix sparsity level $s=0.85$ for all plots, and for $\text{HardShrink}_{\tau, \alpha, \epsilon}$, $\text{CHardShrink}_{\tau, \alpha, \epsilon, m}$, we fix  $\alpha = 10$ and $\epsilon=0.01$. The values plotted are $V'(q^*)$, $V''(q^*)$ and $\chi_1'(q^*)$, columns left to right respectively. (a-c) plots each value respectively against $q^*$ for $\text{HardShrink}_{\tau, \alpha, \epsilon}$. (d-i), are heatmaps of each value with $q^*$ along the horizontal axis and $m$ along the vertical axis, for $\text{CHardShrink}_{\tau, \alpha, \epsilon, m}$, $\text{Sawtooth}_{\tau, m}$. We again see we can decrease $V''(q^*)$ and $\chi_1'(q^*)$ by increasing $q^*$ for these activations.}
    \label{fig:further_activations_control_metrics}
\end{figure}

Note for our activation function $\text{CReLU}_{\tau, m}$, there is an important relationship that appears between $V'_{\phi}(q)$ and $\chi_{1, \phi}(q)$, found in \cite{price_deep_2023},

\begin{align}
    \chi_{1, \text{CReLU}{\tau, m}}(q) &= V'_{\text{CReLU}_{\tau, m}}(q) + \frac{\sigma_w^2m}{\sqrt{2 \pi q}} \text{exp}\left( -\frac{ (\tau +m)^2}{{2q}} \right). \label{chi_vprime_relation_crelu}
\end{align}

Equation \eqref{chi_vprime_relation_crelu} shows the direct relationship between $\chi_{1, \phi}(q)$ and $V'_{\phi}(q^*)$ for the exemplar choice of $\phi = \text{CReLU}{\tau, m}$. 

Then see the relationship between $\chi'_{1, \phi}(q)$ and $V''_{\phi}(q)$,
\begin{align}
     \chi'_{1, \text{CReLU}{\tau, m}}(q^*) &= V''_{\text{CReLU}_{\tau, m}}(q^*) + \frac{\sigma_w^2m}{\sqrt{2 \pi q^*}}\text{exp}\left( -\frac{ (\tau +m)^2}{{2q^*}} \right) \left[-\frac{1}{2q^*}+ \frac{ (\tau +m)^2}{{2{q^*}^2}}\right].
\end{align}

This relationship shows that  by increasing $q^*$, both $V''_{\text{CReLU}_{\tau, m}}(q^*)$ and $\frac{d\chi_{1, \text{CReLU}_{\tau, m}}}{dq} \left( q^* \right)$ can be reduced, \cref{fig:chiprime_crelu_heatmap_multi_s}. Assuming $V''_{\phi}(q^*)>0$ is decreasing, this secondary behaviour of the sensitivity reduction of $\chi_{1, \phi}$ should similarly apply to any activation function where you can write $ \chi'_{1, \phi}(q) = V''_{\phi}(q) + f(q)$, given $f (q)>0$ is decreasing on $q>0$.

\pagebreak
\section{Finite dimensional correction}\label{appendix_finite_correction}

To show the precise appearance of the second derivative in the finite dimensional correction we can look to the work of Chapters 4 and 5, \cite{roberts_principles_2022}, they find the following equivalent recursions,
\begin{align}
    q^{(\ell+1)} &= V\left(q^{(\ell)} \right) \label{eq:app_vmap}\\
    r^{(\ell+1)} &= \left(V' \left( q^{(\ell)}\right)\right)^2r^{(\ell)} + \sigma_w^4 \left( \left\langle \phi^4(z) \right\rangle_{q^{(\ell)}} - \left\langle \phi^2(z) \right\rangle^2_{q^{(\ell)}}\right) \label{eq:app_fourth_mom_recursion}\\
    \tilde{q}^{\{1\}(\ell+1)} &=  V' \left( q^{(\ell)}\right) \tilde{q}^{\{1\}(\ell)} + \frac{1}{2} V''\left( q^{(\ell)}\right) r^{(\ell)} \label{eq:app_nlo_recursion}.
\end{align}
We know from \cite{roberts_principles_2022} that the first-layer pre-activation distribution is exactly Gaussian, hence $\tilde{q}^{\{1\}(1)} =0$ and $ r^{(1)} =0$. Since we initialise at the fixed point $q^{(1)} = q^*$, such that $q^* =V \left( q^{*}\right)$, immediately $q^{(\ell)} = q^*$ for $l\geq 1$. 

The following Lemma \ref{lem:fourth_mom_eq}, Lemma \ref{lem:app_nlo_eq} will prove useful.

\begin{lemma}
\label{lem:fourth_mom_eq}
Where the following recursive relation holds, 
\begin{equation}
    r^{(\ell+1)} = V' \left( q^{*}\right)^2r^{(\ell)} + \sigma_w^4 \left( \left\langle \phi^4(z) \right\rangle_{q^{*}} - \left\langle \phi^2(z) \right\rangle^2_{q^{*}}\right),
\end{equation}
assuming $V'(q^*) \neq 1$ and $r^{(1)}=0$,
\begin{equation}
    r^{(\ell)} = \sigma_w^4 \left( \left\langle \phi^4(z) \right\rangle_{q^{*}} - \left\langle \phi^2(z) \right\rangle^2_{q^{*}}\right) \frac{1- V' \left( q^{*}\right)^{2(\ell-1)}}{1-V' \left( q^{*}\right)^{2}},
\end{equation}
for $l\geq2$.
\end{lemma}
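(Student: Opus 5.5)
This is a first-order linear recursion with constant coefficients, so the plan is to prove the closed form by induction on $\ell$. Write $a = V'(q^*)^2$ and $c = \sigma_w^4 \left( \left\langle \phi^4(z) \right\rangle_{q^{*}} - \left\langle \phi^2(z) \right\rangle^2_{q^{*}}\right)$, so that the recursion reads $r^{(\ell+1)} = a\, r^{(\ell)} + c$ with $r^{(1)}=0$. The claim is then $r^{(\ell)} = c\,(1-a^{\ell-1})/(1-a)$ for $\ell \geq 2$. This is well defined because $a \neq 1$: the hypothesis $V'(q^*)\neq 1$ gives it, together with the standing assumption $0<V'(q^*)<1$ used later in \cref{thm:finite_dim_nlo}. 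Strictly, one also needs $V'(q^*)\neq -1$, and I would note this explicitly in the proof.

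For the base case $\ell=2$, the recursion with $r^{(1)}=0$ gives $r^{(2)} = c$, which matches $c(1-a)/(1-a)$. In fact the formula also holds at $\ell=1$, where it gives $0$, so the induction could equally start there.

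For the inductive step, assume $r^{(\ell)} = c(1-a^{\ell-1})/(1-a)$. Then
\[
r^{(\ell+1)} = a\,c\,\frac{1-a^{\ell-1}}{1-a} + c = c\,\frac{a - a^{\ell} + 1 - a}{1-a} = c\,\frac{1-a^{\ell}}{1-a},
\]
which is the claimed form at $\ell+1$.

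As an alternative, I could unroll the recursion directly to get $r^{(\ell)} = c\sum_{k=0}^{\ell-2} a^k$ and sum the geometric series. No step is a real obstacle. The only care needed is the nonvanishing of $1-a$, and the observation that because $q^{(\ell)}=q^*$ for all $\ell$, the coefficients of \eqref{eq:app_fourth_mom_recursion} are constant in $\ell$, which is what reduces it to the stated recursion.
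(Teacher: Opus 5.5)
Your proof is correct and essentially the paper's: the paper unrolls the recursion to $r^{(\ell)} = c\sum_{i=0}^{\ell-2} a^i$ and sums the geometric series, and your induction verifies the same closed form. Your remark that the hypothesis should really be $V'(q^*)^2 \neq 1$, i.e.\ also $V'(q^*) \neq -1$, is a valid sharpening that the paper's statement omits, though it is harmless under the $0<V'(q^*)<1$ assumption of \cref{thm:finite_dim_nlo}.
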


\begin{proof}
    We have a recursive formula for $r^{(\ell)}$ of the form $r^{(\ell+1)} = ar^{(\ell)}+b$, where $a = V' \left( q^{*}\right)^2$ and $b= \sigma_w^4 \left( \left\langle \phi^4(z) \right\rangle_{q^{*}} - \left\langle \phi^2(z) \right\rangle^2_{q^{*}}\right)$, hence for $l\geq2$,
\begin{align}
    r^{(\ell)} = \sigma_w^4 \left( \left\langle \phi^4(z) \right\rangle_{q^{*}} - \left\langle \phi^2(z) \right\rangle^2_{q^{*}}\right) \sum_{0\leq i \leq l-2}  V' \left( q^{*}\right)^{2i}.
\end{align}
Since $V'(q^*)\neq 1$, we can simply apply the geometric series formula to find the result,
\begin{equation}
    r^{(\ell)} = \sigma_w^4 \left( \left\langle \phi^4(z) \right\rangle_{q^{*}} - \left\langle \phi^2(z) \right\rangle^2_{q^{*}}\right) \frac{1- V' \left( q^{*}\right)^{2(\ell-1)}}{1-V' \left( q^{*}\right)^{2}}.
\end{equation}
\end{proof}

\begin{lemma}
    \label{lem:app_nlo_eq}
    Where the following recursive relationship holds,
    \begin{equation}
        \tilde{q}^{\{1\}(\ell+1)} =  V' \left( q^{*}\right) \tilde{q}^{\{1\}(\ell)} + \frac{1}{2} V''\left( q^{*}\right) r^{(\ell)},
    \end{equation}
    and we initialise such that $\tilde{q}^{\{1\}(1)}= 0$ and $r^{(1)}=0$, then
    \begin{equation}
        \tilde{q}^{\{1\}(\ell)} = \frac{1}{2}V''\left( q^{*}\right) \sum_{0\leq i \leq \ell-3} V' \left( q^{*} \right)^i r^{(\ell-i-1)},
    \end{equation}
    for $\ell\geq 3$.
\end{lemma}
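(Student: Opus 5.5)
The plan is to unroll the first-order linear recursion $\tilde{q}^{\{1\}(\ell+1)} = a\,\tilde{q}^{\{1\}(\ell)} + c\, r^{(\ell)}$, writing $a = V'(q^*)$ and $c = \tfrac12 V''(q^*)$. Both coefficients are constant in $\ell$ because we sit at the fixed point, so $q^{(\ell)}=q^*$ for all $\ell\ge1$. The argument is a direct induction on $\ell$, beginning at the base case $\ell=3$.

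\textbf{Base case.} From $\tilde{q}^{\{1\}(1)}=0$ and $r^{(1)}=0$ we get
\[
\tilde{q}^{\{1\}(2)} = a\cdot 0 + c\cdot 0 = 0 .
\]
Then $\tilde{q}^{\{1\}(3)} = a\cdot 0 + c\, r^{(2)}$. The claimed formula at $\ell=3$ has the single term $i=0$, namely $c\,V'(q^*)^0 r^{(2)} = c\,r^{(2)}$, so the two agree. This also explains why the statement starts at $\ell\ge3$: for $\ell=2$ the sum would be empty, although the value $0$ is still consistent with it.

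\textbf{Inductive step.} Assume
\[
\tilde{q}^{\{1\}(\ell)} = c\sum_{i=0}^{\ell-3} a^i r^{(\ell-i-1)} .
\]
Then
\[
\tilde{q}^{\{1\}(\ell+1)} = c\sum_{i=0}^{\ell-3} a^{i+1} r^{(\ell-i-1)} + c\,r^{(\ell)} .
\]
Reindex the sum with $j=i+1$. The general term becomes $a^{j} r^{((\ell+1)-j-1)}$ for $j=1,\dots,\ell-2$. The extra term $c\,r^{(\ell)}$ is exactly the $j=0$ term $a^0 r^{((\ell+1)-0-1)}$. Combining them gives $c\sum_{j=0}^{(\ell+1)-3} a^j r^{((\ell+1)-j-1)}$, which is the claim at $\ell+1$.

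An equivalent route avoids induction: apply the recursion repeatedly to get
\[
\tilde{q}^{\{1\}(\ell)} = a^{\ell-1}\tilde{q}^{\{1\}(1)} + c\sum_{k=1}^{\ell-1} a^{\ell-1-k} r^{(k)},
\]
then drop the vanishing terms, namely the one involving $\tilde{q}^{\{1\}(1)}$ and the $k=1$ term since $r^{(1)}=0$, and finally substitute $i=\ell-1-k$.

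\textbf{Main obstacle.} The only delicate point is the index bookkeeping: the range $0\le i\le \ell-3$ corresponds to $k$ running from $2$ to $\ell-1$, which excludes exactly the vanishing $r^{(1)}$ term. No assumption on $V'(q^*)$ is needed for this lemma. Lemma \ref{lem:fourth_mom_eq} only enters later, when $r^{(\ell)}$ is substituted to obtain the bound of \cref{thm:finite_dim_nlo}.
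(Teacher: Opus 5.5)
Your proof is correct and follows essentially the same route as the paper. Both argue by induction on $\ell$: the base case computes $\tilde{q}^{\{1\}(2)}=0$ and $\tilde{q}^{\{1\}(3)}=\frac{1}{2}V''(q^*)\,r^{(2)}$, and the inductive step multiplies the sum by $V'(q^*)$ and absorbs the new $r^{(\ell)}$ term as the $i=0$ term after reindexing. Your alternative direct unrolling is also valid, and it makes clear that the sum starts at $r^{(2)}$ simply because $r^{(1)}=0$.
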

\begin{proof}
    We prove this result by induction. To begin, we identify $\tilde{q}^{\{1\}(2)}= V' \left( q^{*}\right) \tilde{q}^{\{1\}(1)} + \frac{1}{2} V''\left( q^{*}\right) r^{(1)} =0 $. Now for $\ell=3$,
\begin{align}
    \tilde{q}^{\{1\}(3)} &=V' \left( q^{*}\right) \tilde{q}^{\{1\}(2)} + \frac{1}{2} V''\left( q^{*}\right) r^{(2)} = \frac{1}{2} V''\left( q^{*}\right) r^{(2)}.
\end{align}
Let us then assume the result holds for $\ell=n$ so,
\begin{align}
    \tilde{q}^{\{1\}(n)} = \frac{1}{2}V''\left( q^{*}\right) \sum_{0\leq i \leq n-3} V' \left( q^{*} \right)^i r^{(n-i-1)}.
\end{align}
Hence for $l=n+1$,
\begin{align}
    \tilde{q}^{\{1\}(n+1)} &=  V' \left( q^{*}\right) \tilde{q}^{\{1\}(n)} + \frac{1}{2} V''\left( q^{*}\right) r^{(n)} \\
    &=  V' \left( q^{*}\right)  \frac{1}{2}V''\left( q^{*}\right) \sum_{0\leq i \leq n-3} V' \left( q^{*} \right)^i r^{(n-i-1)}  + \frac{1}{2} V''\left( q^{*}\right) r^{(n)} \\
    &=  \frac{1}{2}V''\left( q^{*}\right) \sum_{0\leq i \leq n-2} V' \left( q^{*} \right)^i r^{(n-i)},
\end{align}
as required. This completes our proof.
\end{proof}

\begin{proof}[Proof of \cref{thm:finite_dim_nlo}]

We can substitute our equation for $r^{(\ell)}$ from \cref{lem:fourth_mom_eq} into our result for $\tilde{q}^{\{1\}(\ell)}$ found in \cref{lem:app_nlo_eq}. Since $V'\left( q^{*}\right) \neq 1$, for $\ell \geq3$
\begin{align}
    \tilde{q}^{\{1\}(\ell)} &= \frac{\sigma_w^4}{2}V''\left( q^{*}\right)  \left( \left\langle \phi^4(z) \right\rangle_{q^{*}} - \left\langle \phi^2(z) \right\rangle^2_{q^{*}}\right)  \sum_{0\leq i \leq \ell-3} V' \left( q^{*} \right)^i \frac{1- V' \left( q^{*}\right)^{2(\ell-i-1)}}{1-V' \left( q^{*}\right)^{2}} \\
    &= \frac{\sigma_w^4}{2}\frac{V''\left( q^{*}\right)}{1-V' \left( q^{*}\right)^{2}}  \left( \left\langle \phi^4(z) \right\rangle_{q^{*}} - \left\langle \phi^2(z) \right\rangle^2_{q^{*}}\right) \sum_{0\leq i \leq \ell-3} V' \left( q^{*}\right)^i - V' \left( q^{*}\right)^{2\ell-i-2} \\
    &= \frac{\sigma_w^4}{2}\frac{V''\left( q^{*}\right)}{1-V' \left( q^{*}\right)^{2}}  \left( \left\langle \phi^4(z) \right\rangle_{q^{*}} - \left\langle \phi^2(z) \right\rangle^2_{q^{*}}\right) \left[ \frac{1-V' \left( q^{*}\right) ^{\ell-2}}{1-V' \left( q^{*}\right) } \right. \nonumber \\
    &\hspace{75mm}\left. - V' \left( q^{*}\right)^{2\ell-2} \frac{1-V' \left( q^{*} \right)^{2-\ell}}{1-V' \left( q^{*}\right)^{-1}} \right]\\
    &= \frac{\sigma_w^4}{2}\frac{V''\left( q^{*}\right)}{1-V' \left( q^{*}\right)^{2}}  \left( \left\langle \phi^4(z) \right\rangle_{q^{*}} - \left\langle \phi^2(z) \right\rangle^2_{q^{*}}\right) \times \nonumber \\
    &\hspace{50mm }\left[ \frac{1-V' \left( q^{*} \right)^{\ell-2}+ V' \left( q^{*}\right)^{2\ell-1}-V' \left( q^{*} \right)^{\ell+1}}{1-V' \left( q^{*}\right) } \right] \\
    &= \frac{\sigma_w^4}{2}\frac{V''\left( q^{*}\right)}{1-V' \left( q^{*}\right)^{2}}  \left( \left\langle \phi^4(z) \right\rangle_{q^{*}} - \left\langle \phi^2(z) \right\rangle^2_{q^{*}}\right) \times \nonumber \\
    &\hspace{60mm} \left[ \frac{ \left( 1-V' \left( q^{*}\right) ^{\ell-2} \right) \left( 1-V' \left( q^{*} \right)^{\ell+1} \right)}{1-V' \left( q^{*}\right) }  \right].
\end{align}

Further, considering our assumption $0<V' \left( q^{*}\right) <1$ and taking the absolute value, we find,
\begin{align}
    \left| \tilde{q}^{\{1\}(\ell)} \right| \leq \frac{\sigma_w^4}{2}\frac{\left|V''\left( q^{*}\right)\right|}{\left(1-V' \left( q^{*}\right)\right)^2\left(1+V' \left( q^{*}\right) \right)}  \left| \left\langle \phi^4(z) \right\rangle_{q^{*}} - \left\langle \phi^2(z) \right\rangle^2_{q^{*}}\right|,
\end{align}
for $\ell \geq 3$. This completes our proof.
\end{proof}


\clearpage
\section{Sparse MatVec speedup}\label{app:sparse_matvec}

For soundness of our concept improving efficiency we implement a custom CUDA kernel and measure the time speedup and fraction of energy compared to the standard GEMV operation. This code was written by the authors and we expect further improvements could be gained from professional engineers.

\begin{table*}[ht!]
\centering
\caption{The speedup and energy fraction of an $m \times n$ matrix-vector multiplication for varying levels of randomly induced sparsity $s$ in the vector. The comparison is between the standard GEMV matrix-vector multiplication, which doesn't take advantage of sparsity and a custom CUDA kernel which does. The results are reported over 100, 000 iterations, with a warmup of 20 iterations. The Speedup reported is (Dense GEMV time)/(Sparse MatVec time), and the Energy fraction is (Sparse MatVec energy)/(Dense GEMV energy). We can see as expected that for larger sparsity levels we have a larger speedup and a reduced energy consumption, for larger fractions of $m/n$ we also see this behaviour. The implementation of the custom sparse MatVec CUDA kernel relies on contiguous memory, for larger $n$ we see the speedup reduce and the energy fraction increase as the irregular memory access limits the improvements. }
\begin{tabular}
{p{0.4cm}p{0.7cm}|p{1.2cm}p{1.5cm}|p{1.2cm}p{1.5cm}|p{1.2cm}p{1.5cm}}
\toprule
\multicolumn{8}{c}{\textbf{Sparse MatVec Improvements}} \\ 
\midrule
 &   &  \multicolumn{2}{|c|}{$s=0.5$} & \multicolumn{2}{|c|}{$s=0.8$} & \multicolumn{2}{|c}{$s=0.9$} \\
\midrule
  $m/n$& $n$  &\textbf{Speedup} & \textbf{Energy fraction} &\textbf{Speedup} & \textbf{Energy fraction}  &\textbf{Speedup} & \textbf{Energy fraction}  \\ 
\midrule
\multirow[t]{3}{*}{0.5} & 100 & 1.250 & 0.950 & 1.183 & 0.844 & 1.216 & 0.819 \\
 & 300 & 0.806 & 1.211 & 2.097 & 0.470 & 2.286 & 0.431 \\
 & 3000 & 0.272 & 2.134 & 0.932 & 0.578 & 1.637 & 0.327 \\
\cline{1-8}
\multirow[t]{3}{*}{1} & 100 & 1.015 & 0.985 & 1.208 & 0.820 & 1.218 & 0.814 \\
 & 300 & 0.842 & 1.136 & 1.782 & 0.538 & 2.181 & 0.439 \\
 & 3000 & 0.454 & 1.352 & 1.008 & 0.585 & 2.936 & 0.188 \\
\cline{1-8}
\multirow[t]{3}{*}{2} & 100 & 1.620 & 0.613 & 2.054 & 0.485 & 2.164 & 0.459 \\
 & 300 & 0.879 & 1.044 & 1.689 & 0.545 & 2.157 & 0.424 \\
 & 3000 & 0.527 & 1.205 & 1.090 & 0.556 & 1.883 & 0.301 \\

\end{tabular}
\label{matvec_speedup}
\end{table*}

\clearpage

\section{Further DNN experiment results}\label{app:more_experiments_dnn}
\setlength{\tabcolsep}{6pt}

This section provides more figures for experiments of DNNs on the training loss behaviour of increasing $q^*$ for exemplar activation functions $\text{CReLU}_{\tau, m}$ and $\text{CST}_{\tau, m}$; this is for further detail of experiments in \cref{sec:experiments}. Each DNN trained on MNIST of depth 100 and width 300 takes approximately 2 hours to train on a single A100 with 40GB of RAM, total compute time for all DNN experiments in this section is estimated at 35 days of compute.

\subsection{\texorpdfstring{$\text{CReLU}_{\tau, m}$}{CReLU}}
\subsubsection{Mean training loss plots}

\cref{fig:mean_train_loss_crelu} provides the mean training loss for each parameter set across five different seeds, and demonstrates the improvement in mean training dynamics, with faster training for high sparsity levels, particularly for cases $s=\{0.85, 0.9\}$, $V'(q^*) = \{ 0.5, 0.7\}$; and the recovery of ability to train for $s=0.85$, $V'(q^*)=0.9$.
\begin{figure}[htbp!]
    \centering
    \subfigure[$s=0.6$, $V'(q^*)=0.5$]{
        \includegraphics[trim = 0cm 0cm 0cm 1.75cm, clip,height = 2cm, width=0.3\linewidth]{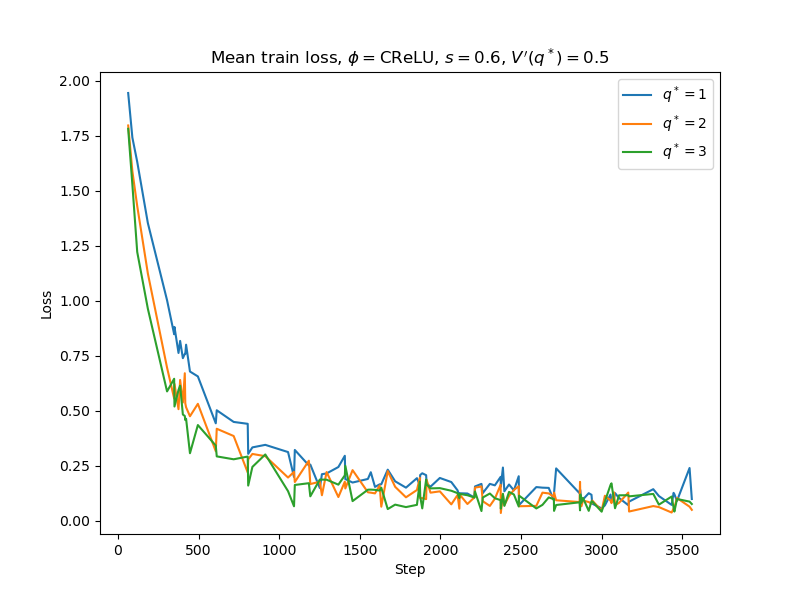}
    }
    \hfill
    \subfigure[$s=0.6$, $V'(q^*)=0.7$]{
        \includegraphics[trim = 0cm 0cm 0cm 1.75cm, clip,height = 2cm, width=0.3\linewidth]{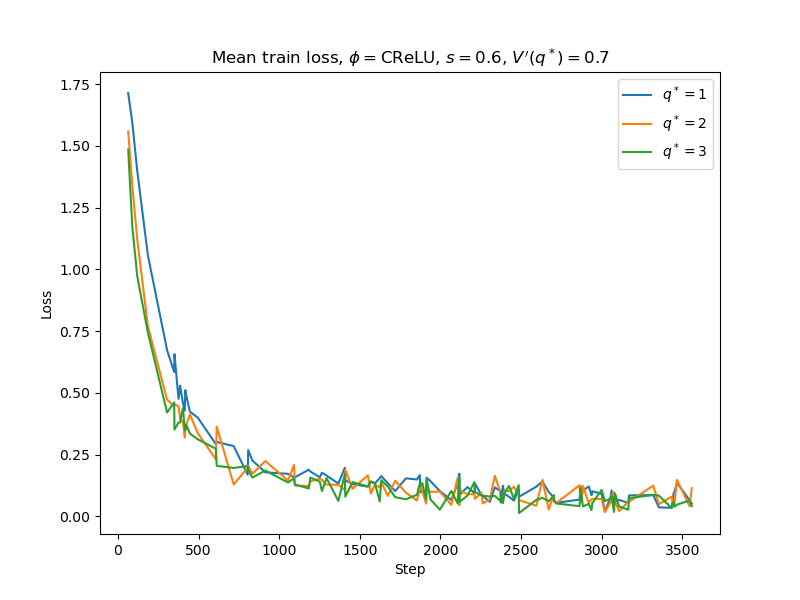}
    }
    \hfill
    \subfigure[$s=0.6$, $V'(q^*)=0.9$]{
        \includegraphics[trim = 0cm 0cm 0cm 1.75cm, clip,height = 2cm, width=0.3\linewidth]{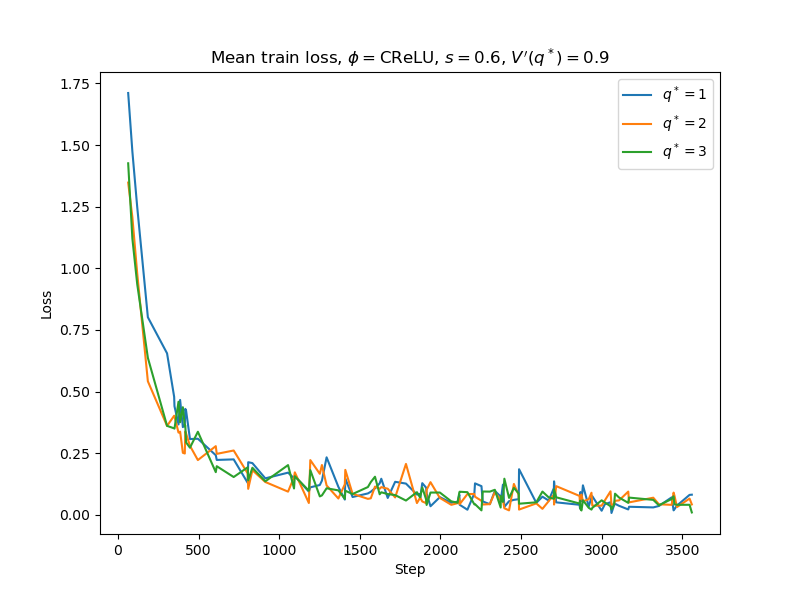}
    }
    \hfill
    \subfigure[$s=0.7$, $V'(q^*)=0.5$]{
        \includegraphics[trim = 0cm 0cm 0cm 1.75cm, clip,height = 2cm, width=0.3\linewidth]{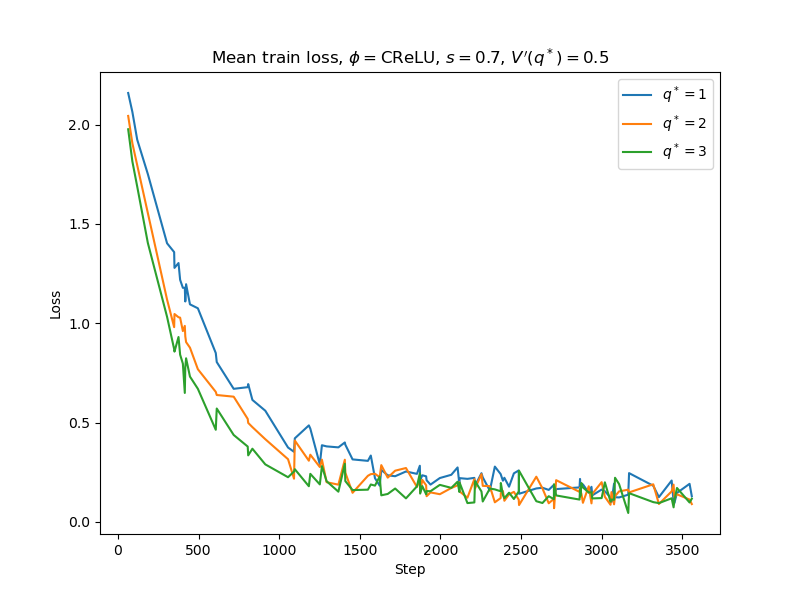}
    }
    \hfill
    \subfigure[$s=0.7$, $V'(q^*)=0.7$]{
        \includegraphics[trim = 0cm 0cm 0cm 1.75cm, clip,height = 2cm, width=0.3\linewidth]{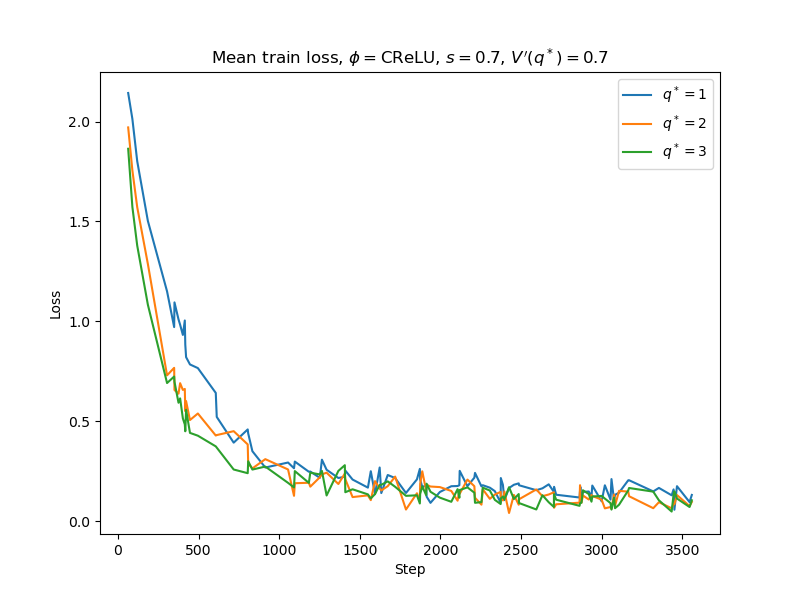}
    }
    \hfill
    \subfigure[$s=0.7$, $V'(q^*)=0.9$]{
        \includegraphics[trim = 0cm 0cm 0cm 1.75cm, clip,height = 2cm, width=0.3\linewidth]{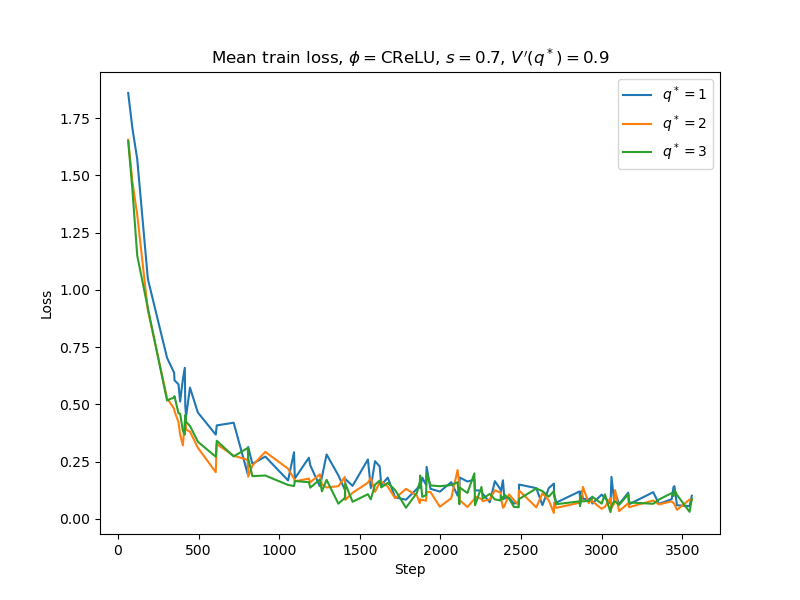}
    }
    \hfill
    \subfigure[$s=0.8$, $V'(q^*)=0.5$]{
        \includegraphics[trim = 0cm 0cm 0cm 1.75cm, clip,height = 2cm, width=0.3\linewidth]{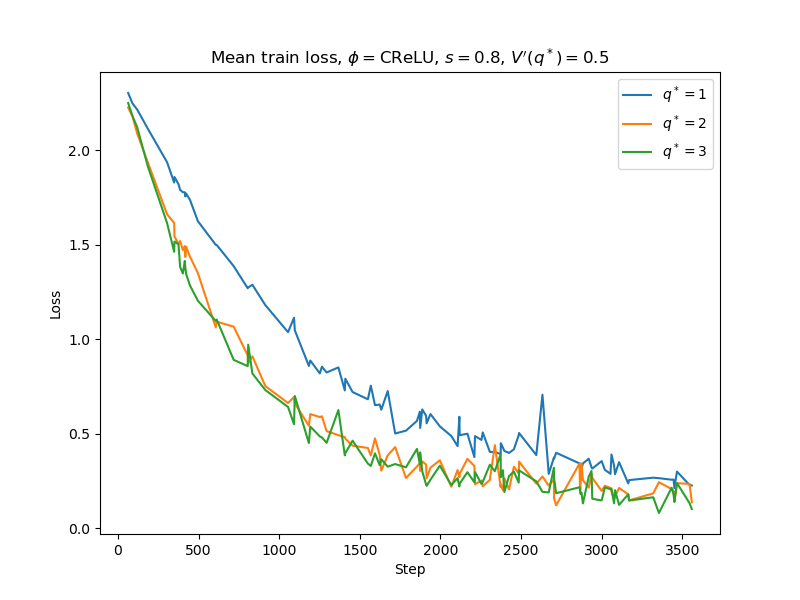}
    }
    \hfill
    \subfigure[$s=0.8$, $V'(q^*)=0.7$]{
        \includegraphics[trim = 0cm 0cm 0cm 1.75cm, clip,height = 2cm, width=0.3\linewidth]{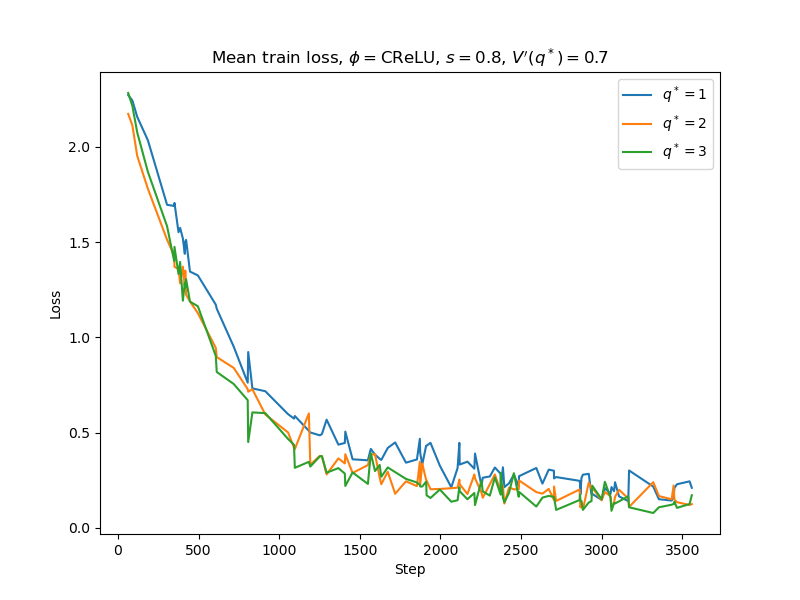}
    }
    \hfill
    \subfigure[$s=0.8$, $V'(q^*)=0.9$]{
        \includegraphics[trim = 0cm 0cm 0cm 1.75cm, clip,height = 2cm, width=0.3\linewidth]{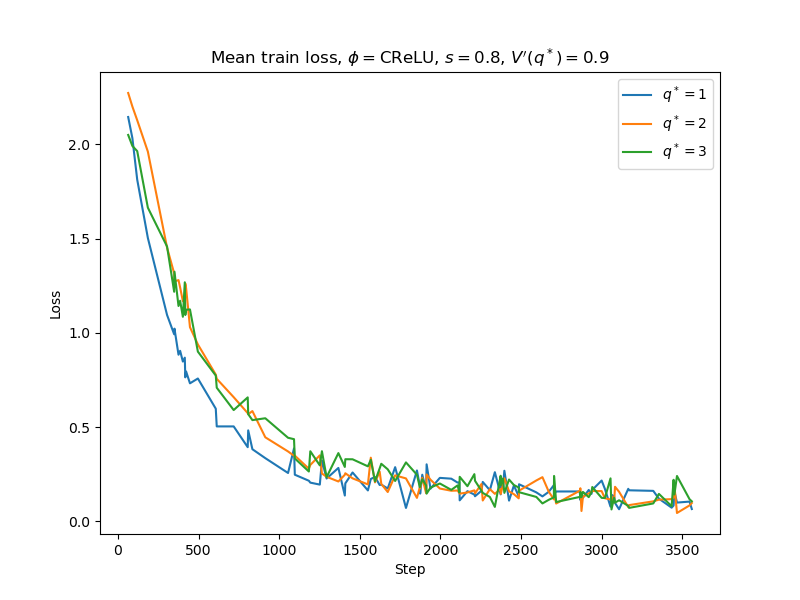}
    }
    \hfill
    \subfigure[$s=0.85$, $V'(q^*)=0.5$]{
        \includegraphics[trim = 0cm 0cm 0cm 1.75cm, clip,height = 2cm, width=0.3\linewidth]{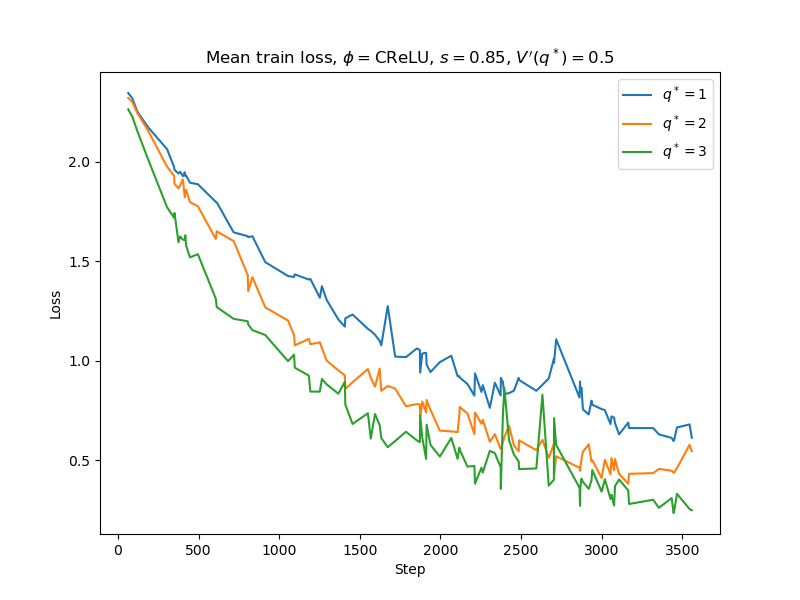}
    }
    \hfill
    \subfigure[$s=0.85$, $V'(q^*)=0.7$]{
        \includegraphics[trim = 0cm 0cm 0cm 1.75cm, clip,height = 2cm, width=0.3\linewidth]{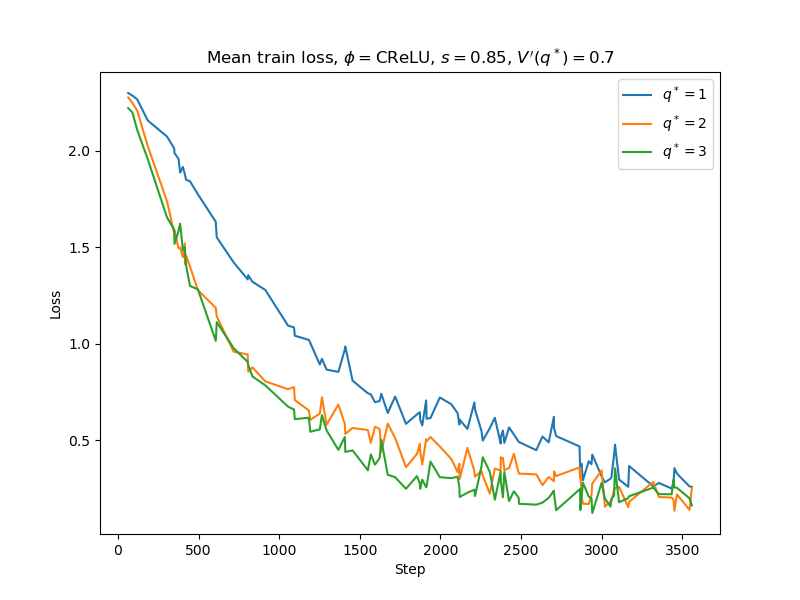}
    }
    \hfill
    \subfigure[$s=0.85$, $V'(q^*)=0.9$]{
        \includegraphics[trim = 0cm 0cm 0cm 1.75cm, clip,height = 2cm, width=0.3\linewidth]{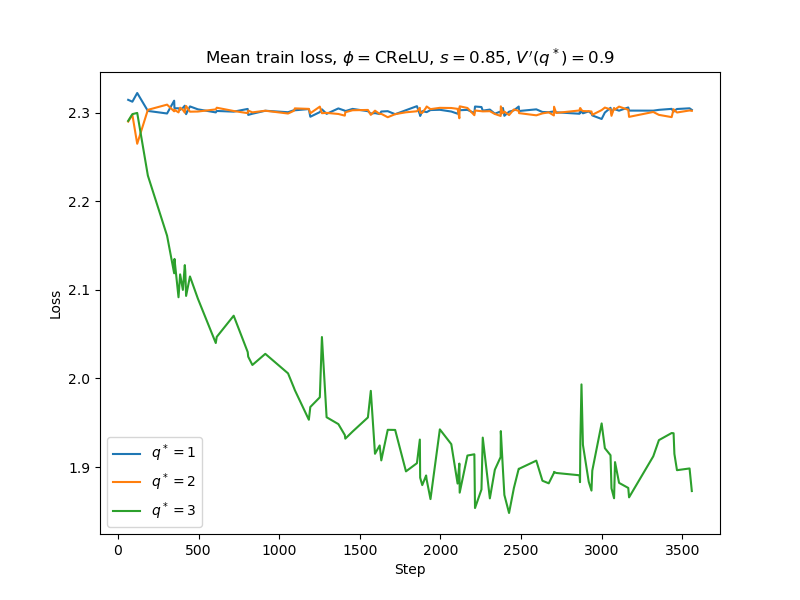}
    }
    \hfill
    \hfill
    \subfigure[$s=0.9$, $V'(q^*)=0.5$]{
        \includegraphics[trim = 0cm 0cm 0cm 1.75cm, clip,height = 2cm, width=0.3\linewidth]{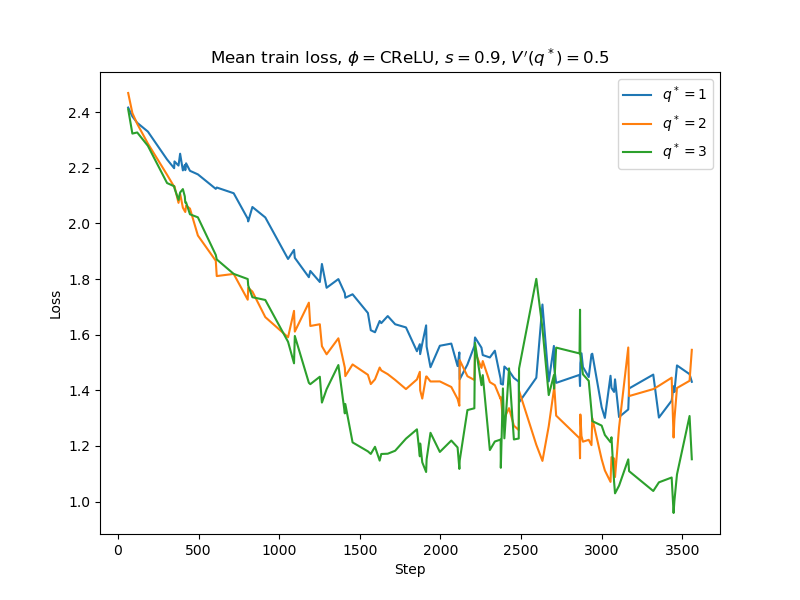}
    }
    \hfill
    \subfigure[$s=0.9$, $V'(q^*)=0.7$]{
        \includegraphics[trim = 0cm 0cm 0cm 1.75cm, clip,height = 2cm, width=0.3\linewidth]{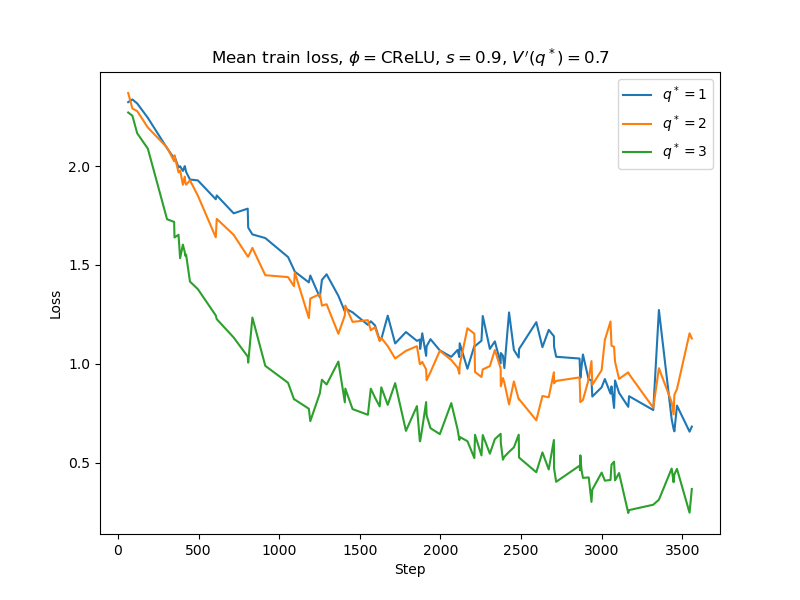}
    }
    \hfill
    \subfigure[$s=0.9$, $V'(q^*)=0.9$]{
        \includegraphics[trim = 0cm 0cm 0cm 1.75cm, clip,height = 2cm, width=0.3\linewidth]{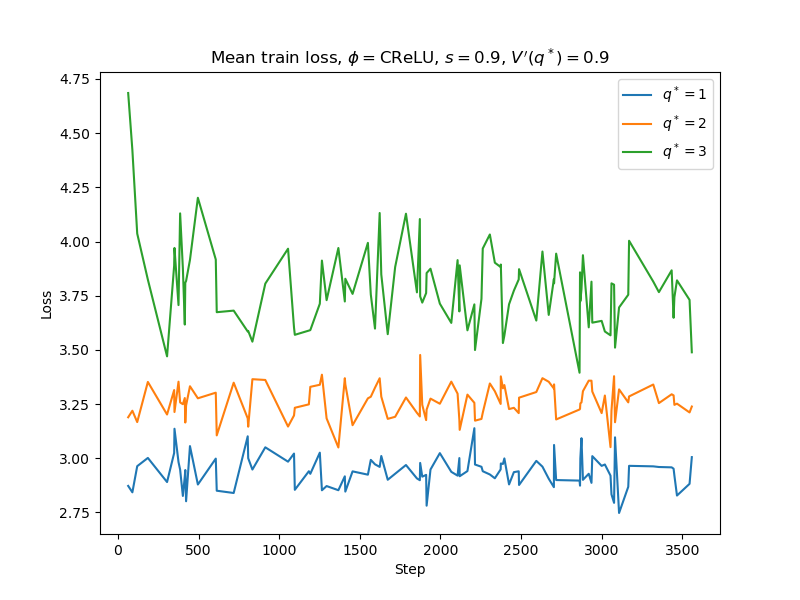}
    }
    \hfill
    \caption{Mean training loss across five seeds for a DNN with activation function $\phi = \text{CReLU}_{\tau, m}$, $\tau$ and $m$ chosen such that $s$, $V'(q^*)$ are as described. Observe the improved training speed for increased $q^*$ and particularly the complete recovery of stability with $q^*=3$ in (l), $s=0.85$, $V'(q^*)=0.9$. }
    \label{fig:mean_train_loss_crelu}
\end{figure}

\pagebreak
\subsubsection{Individual run training loss plots}

Now for a better look at each individual test, and clearer analysis of when recovery to train is found see the Figures \ref{fig:dnn_val_loss_crelu_ind_seed_s_0.6}-\ref{fig:dnn_val_loss_crelu_ind_seed_s_0.9}, which demonstrate the individual run improvement in training speed with increased $q^*$ for DNNs of depth 100 and width 300 trained on MNIST with activation $\text{CReLU}_{\tau, m}$, as well as a recovery in ability to train for $s=0.85$, $V'(q^*)=0.9$ when $q^*=3$.

\begin{figure}[htbp!]
    \centering
    \subfigure[$s=0.6$, $V'(q^*)=0.5$]{
        \includegraphics[trim = 0cm 0cm 0cm 2cm, clip,height = 5cm, width=1\linewidth]{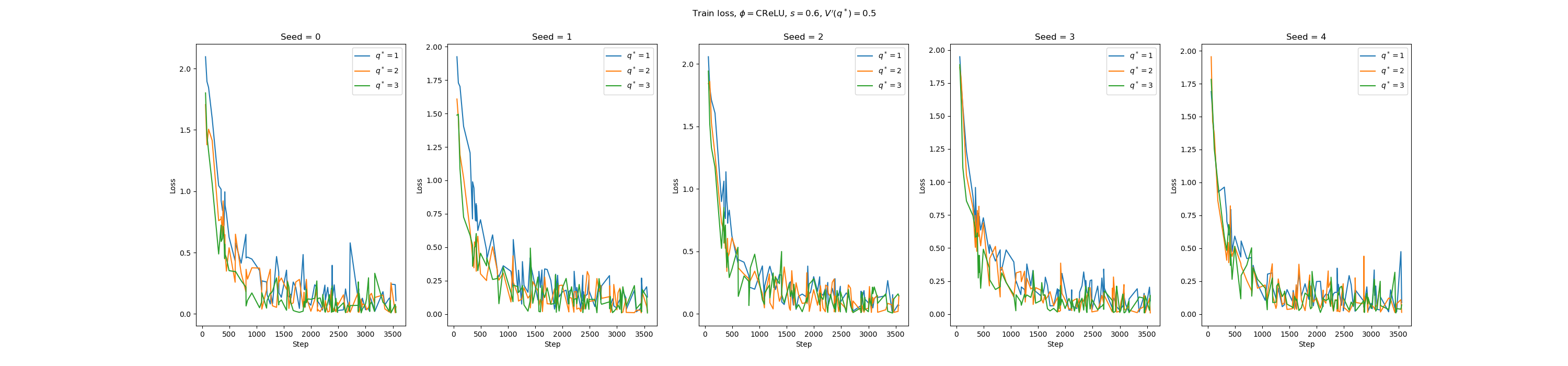}
    }
    \hfill
    \subfigure[$s=0.6$, $V'(q^*)=0.7$]{
        \includegraphics[trim = 0cm 0cm 0cm 2cm, clip,height = 5cm, width=1\linewidth]{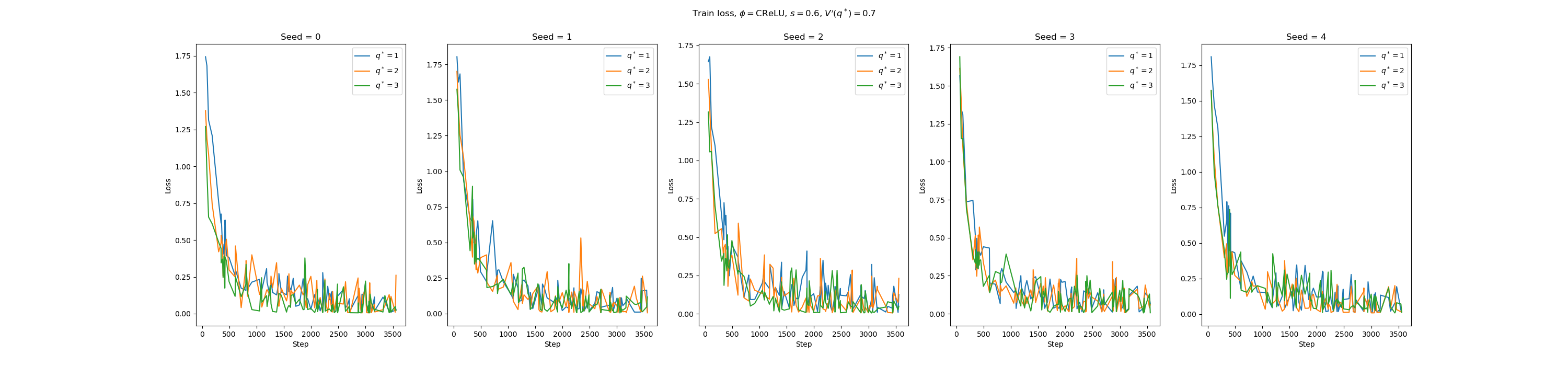}
    }
    \hfill
    \subfigure[$s=0.6$, $V'(q^*)=0.9$]{
        \includegraphics[trim = 0cm 0cm 0cm 2cm, clip,height = 5cm, width=1\linewidth]{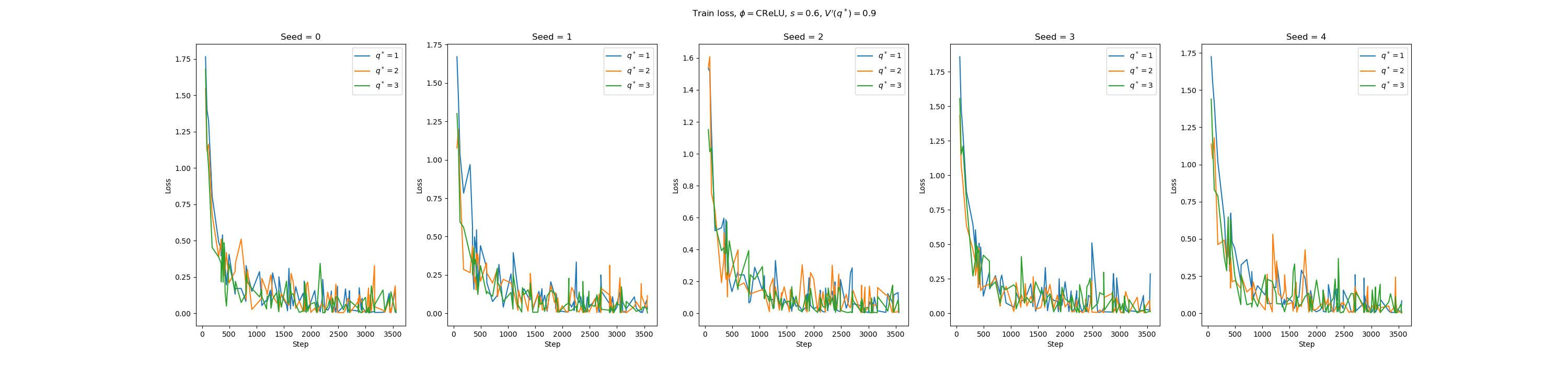}
    }
    \hfill
    
    \caption{Training loss for a DNN with activation function $\phi = \text{CReLU}_{\tau, m}$, with $\tau$ and $m$ chosen such that $s=0.6$, and for a given $V'(q^*)$. Observe improved training speed for increased $q^*$ for all parameter sets.}
    \label{fig:dnn_val_loss_crelu_ind_seed_s_0.6}
\end{figure}

\begin{figure}[htbp!]
    \centering
    \subfigure[$s=0.7$, $V'(q^*)=0.5$]{
        \includegraphics[trim = 0cm 0cm 0cm 2cm, clip,height = 5cm, width=1\linewidth]{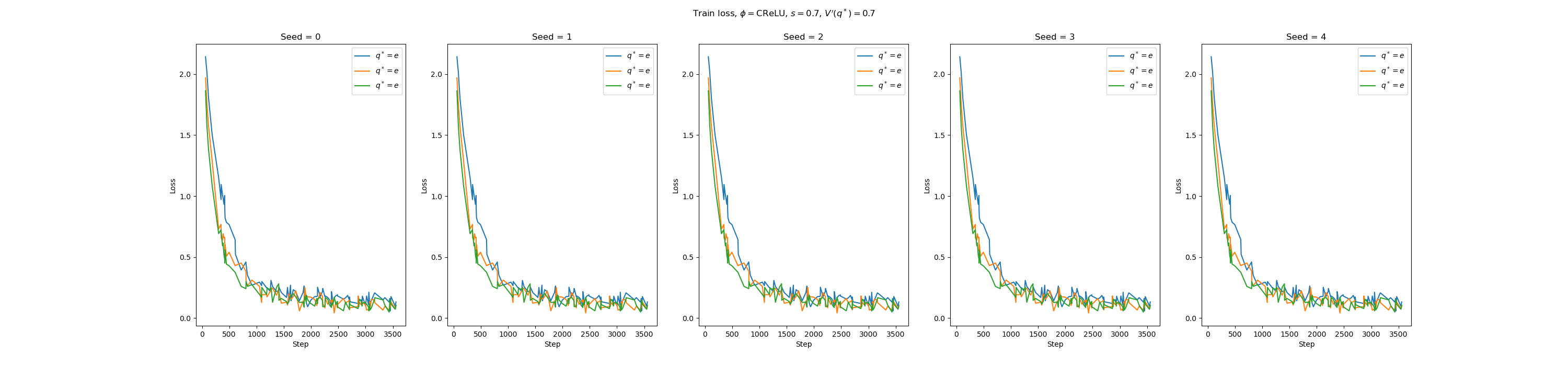}
    }
    \hfill
    \subfigure[$s=0.7$, $V'(q^*)=0.7$]{
        \includegraphics[trim = 0cm 0cm 0cm 2cm, clip,height = 5cm, width=1\linewidth]{figures/mnist/seed_train_loss_plots/train_loss_s_0.7_vprime_0.7_sfatrelu_max_individual_seed.png}
    }
    \hfill
    \subfigure[$s=0.7$, $V'(q^*)=0.9$]{
        \includegraphics[trim = 0cm 0cm 0cm 2cm, clip,height = 5cm, width=1\linewidth]{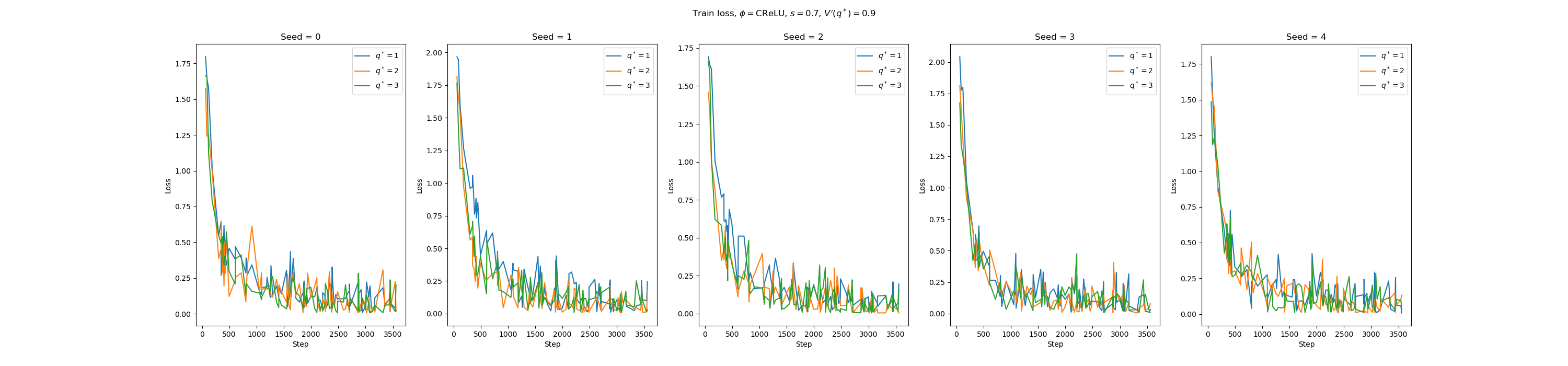}
    }
    \hfill
\caption{Training loss for a DNN with activation function $\phi = \text{CReLU}_{\tau, m}$, with $\tau$ and $m$ chosen such that $s=0.7$, and for a given $V'(q^*)$. Observe improved training speed for increased $q^*$ for all parameter sets.}
    \label{fig:val_loss_crelu_ind_seed_s_0.7}
\end{figure}

\begin{figure}[htbp!]
    \centering
    
    \subfigure[$s=0.8$, $V'(q^*)=0.5$]{
        \includegraphics[trim = 0cm 0cm 0cm 2cm, clip,height = 5cm, width=1\linewidth]{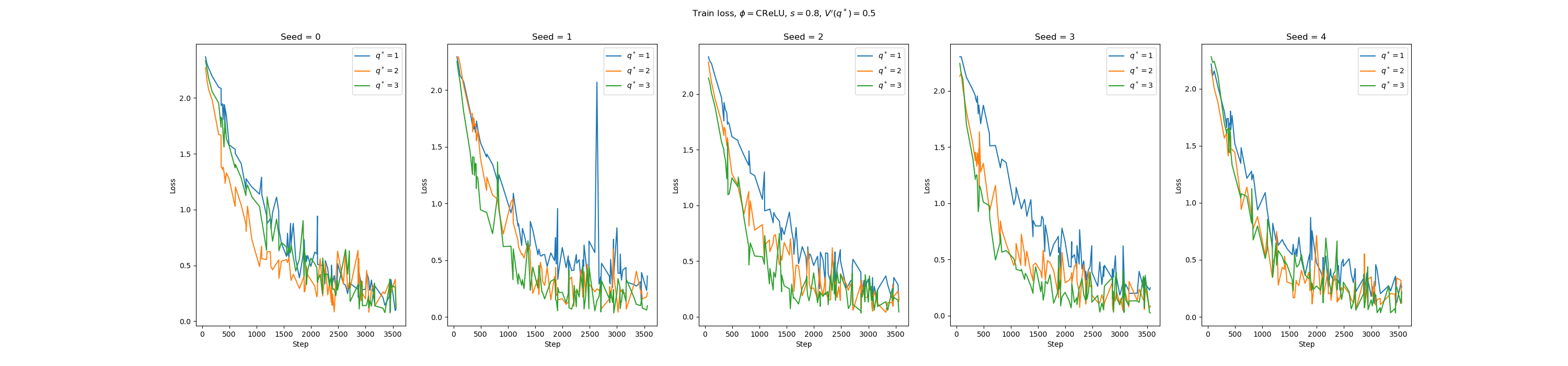}
    }
    \hfill
    \subfigure[$s=0.8$, $V'(q^*)=0.7$]{
        \includegraphics[trim = 0cm 0cm 0cm 2cm, clip,height = 5cm, width=1\linewidth]{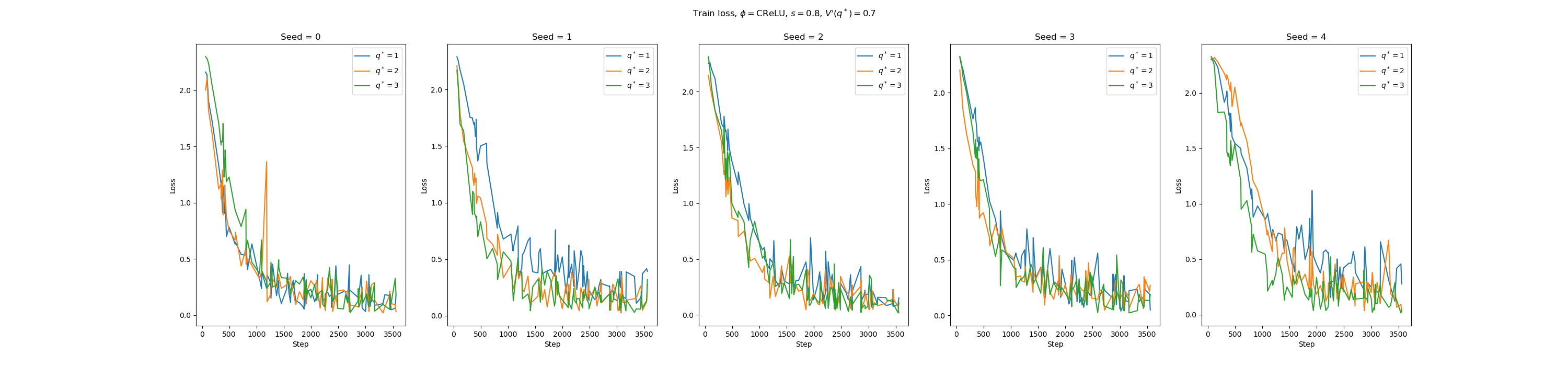}
    }
    \hfill
    \subfigure[$s=0.8$, $V'(q^*)=0.9$]{
        \includegraphics[trim = 0cm 0cm 0cm 2cm, clip,height = 5cm, width=1\linewidth]{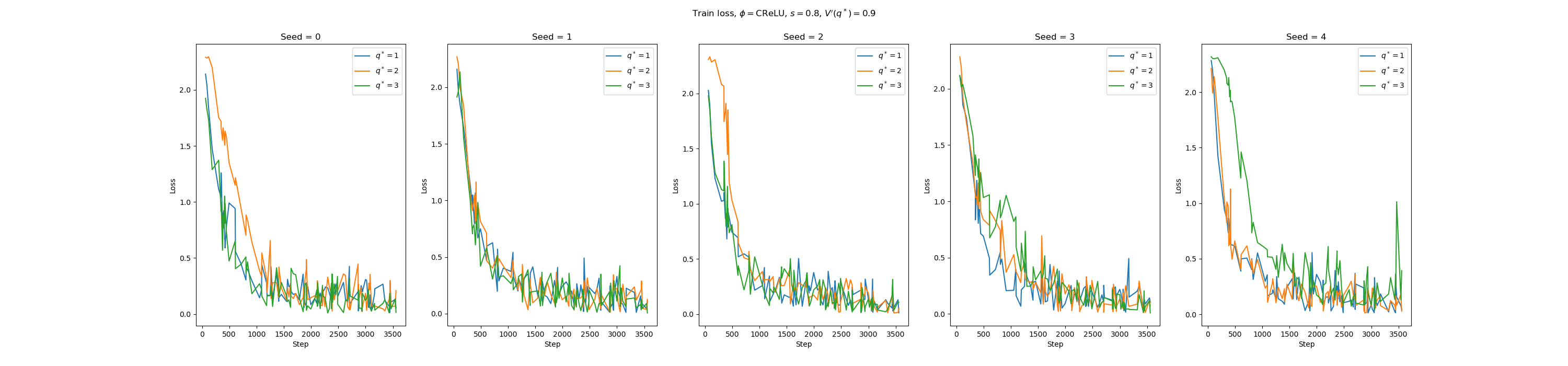}
    }
    \hfill
\caption{Training loss for a DNN with activation function $\phi = \text{CReLU}_{\tau, m}$, $\tau$ and $m$ chosen such that $s=0.8$, and for a given $V'(q^*)$. Observe improved training speed for increased $q^*$ for all parameter sets.}
    \label{fig:val_loss_crelu_ind_seed_s_0.8}
\end{figure}

\begin{figure}[htbp!]
    \centering
    \subfigure[$s=0.85$, $V'(q^*)=0.5$]{
        \includegraphics[trim = 0cm 0cm 0cm 2cm, clip,height = 5cm, width=1\linewidth]{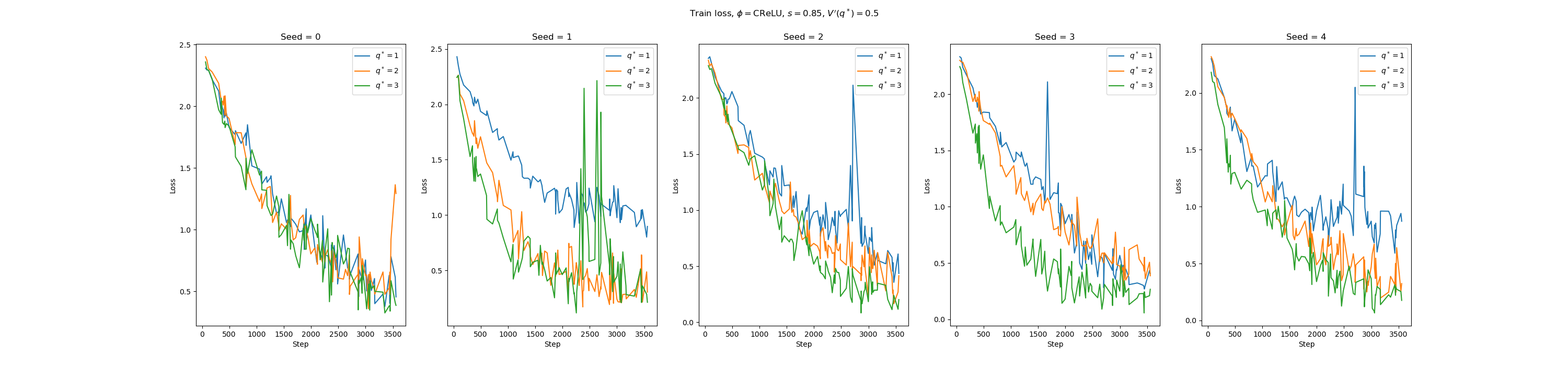}
    }
    \hfill
    \subfigure[$s=0.85$, $V'(q^*)=0.7$]{
        \includegraphics[trim = 0cm 0cm 0cm 2cm, clip,height = 5cm, width=1\linewidth]{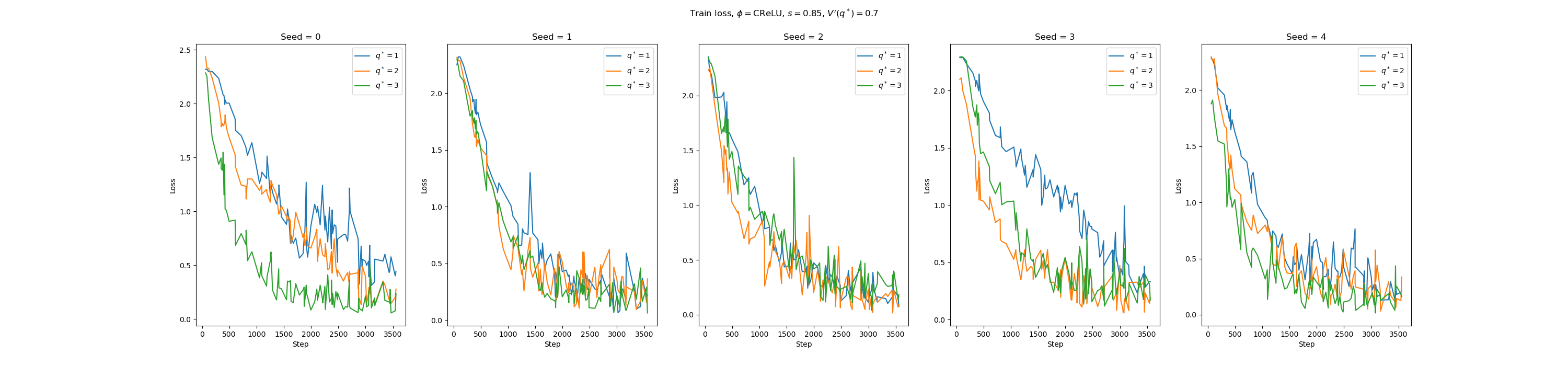}
    }
    \hfill
    \subfigure[$s=0.85$, $V'(q^*)=0.9$]{
        \includegraphics[trim = 0cm 0cm 0cm 2cm, clip,height = 5cm, width=1\linewidth]{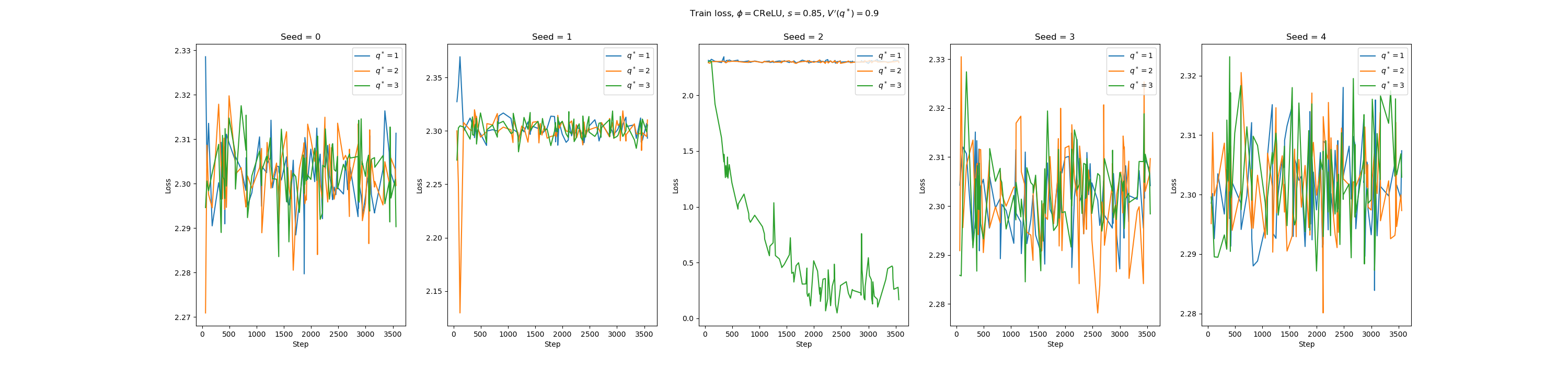}
    }
    \hfill
\caption{Training loss for a DNN with activation function $\phi = \text{CReLU}_{\tau, m}$, $\tau$ and $m$ chosen such that $s=0.85$, and for a given $V'(q^*)$. Observe improved training speed for increased $q^*$ for parameter sets where the network can consistently train, that is $V'(q^*)= \{ 0.5, 0.7\}$, and a regaining of the ability to train at all for $V'(q^*)=0.9$.}
    \label{fig:val_loss_crelu_ind_seed_s_0.85}
\end{figure}

\begin{figure}[htbp!]
    \centering
    \subfigure[$s=0.9$, $V'(q^*)=0.5$]{
        \includegraphics[trim = 0cm 0cm 0cm 2cm, clip, height = 5cm, width=1\linewidth]{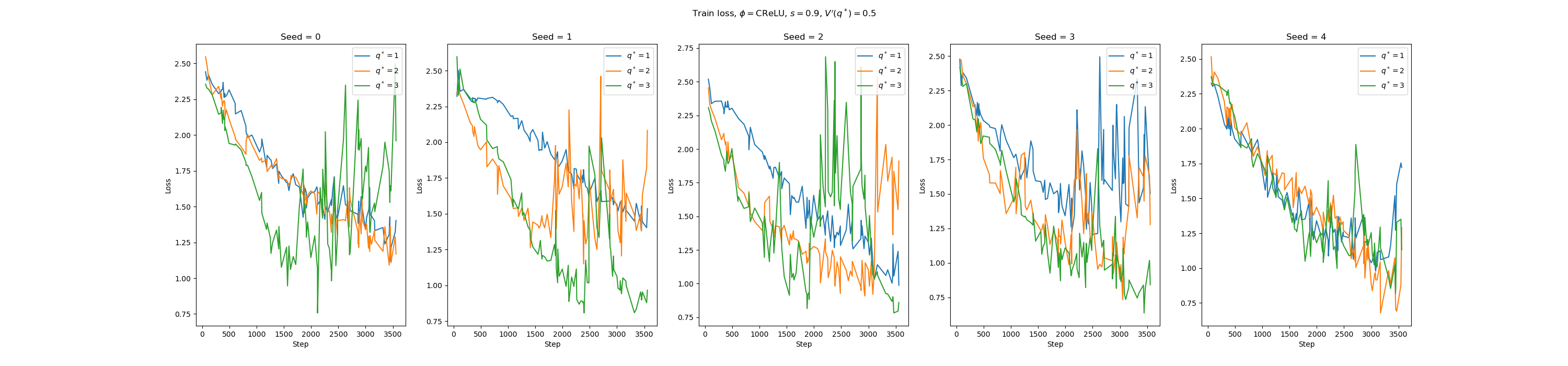}
    }
    \hfill
    \subfigure[$s=0.9$, $V'(q^*)=0.7$]{
        \includegraphics[trim = 0cm 0cm 0cm 2cm, clip, height = 5cm, width=1\linewidth]{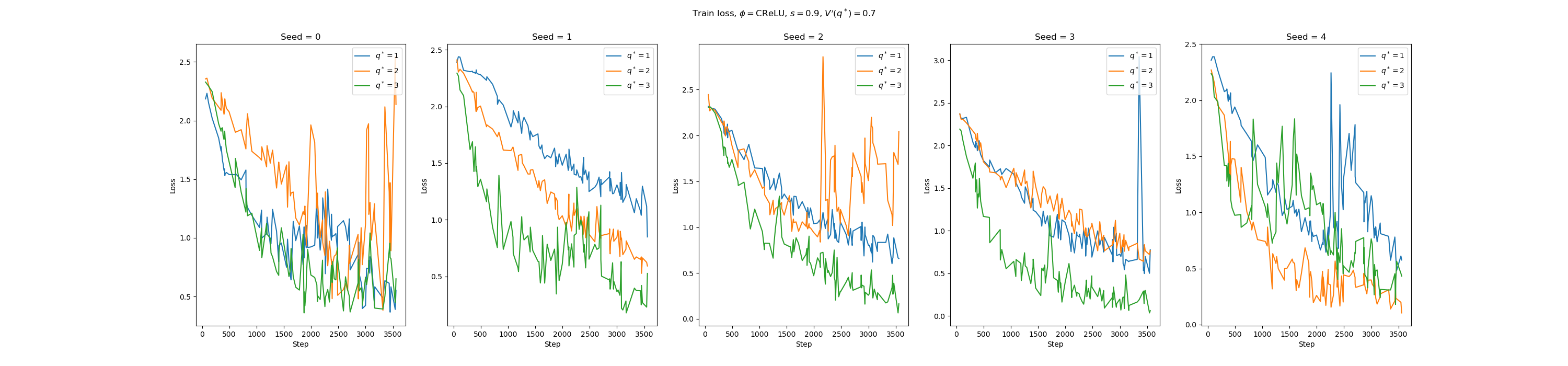}
    }
    \hfill
    \subfigure[$s=0.9$, $V'(q^*)=0.9$]{
        \includegraphics[trim = 0cm 0cm 0cm 2cm, clip,height = 5cm, width=1\linewidth]{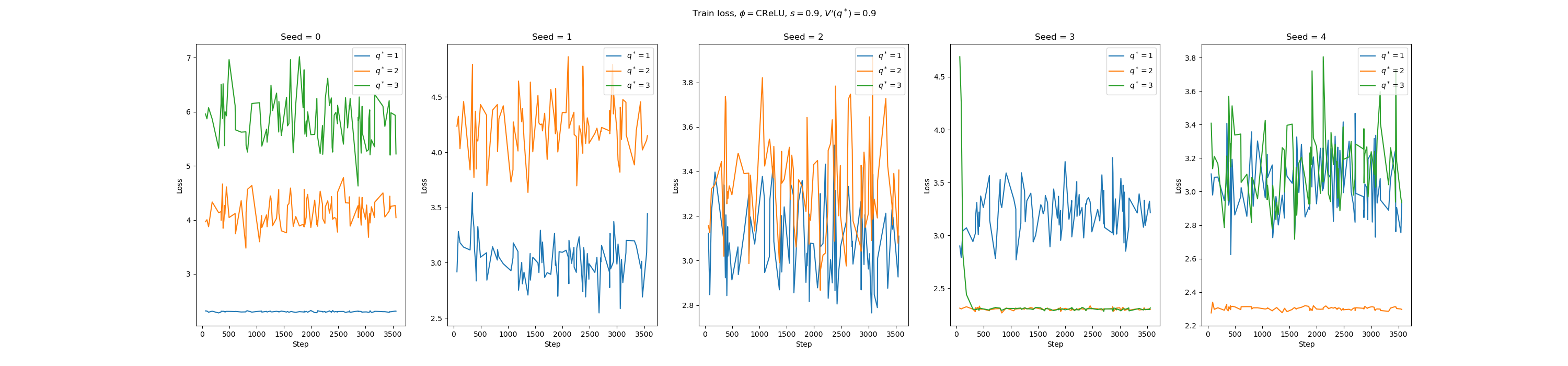}
    }
    \hfill
\caption{Training loss for a DNN with activation function $\phi = \text{CReLU}_{\tau, m}$, $\tau$ and $m$ chosen such that $s=0.9$, and for a given $V'(q^*)$. Observe improved training speed for increased $q^*$ for parameter sets where training is able, that is $V'(q^*)=\{ 0.5, 0.7\}$.}
    \label{fig:dnn_val_loss_crelu_ind_seed_s_0.9}
\end{figure}

\pagebreak
\subsection{\texorpdfstring{$\text{CST}_{\tau, m}$}{CST}}

\subsubsection{Summary table}

This section replicates many of the experiments from Section \ref{sec:experiments} but for the nonlinear activation $\text{CST}_{\tau, m}$ as defined in \eqref{cst_eq}. \cref{table_cst_dnn} is analogous to \cref{table_crelu_dnn}. Note also that for desired sparsity level $s$ with activation $\text{CST}_{\tau, m}$, set $\hat{\tau} = \sqrt{2q^*}\text{erf}^{-1}\left( s\right)$. \cref{table_cst_dnn} shows that generally, increased $q^*$ allows for improved test accuracy where otherwise full accuracy could not be retained.

\begin{table*}[htbp!]
\centering
\tiny
\caption{DNN Trained on MNIST, 100 layers and 300 width, using activation function $\text{CST}_{\tau, m}$, batch size 64, results on 20\% of data retained for testing. Observe the improved accuracy with increased $q^*=3$ for higher sparsity levels $s=\{ 0.7, 0.8, 0.85\}$.}

\begin{tabular}
{p{0.8cm}p{0.3cm}p{0.6cm}|p{0.35cm}p{0.65cm}|p{0.5cm}p{0.6cm}|p{0.35cm}p{0.65cm}|p{0.5cm}p{0.6cm}|p{0.35cm}p{0.65cm}|p{0.5cm}p{0.6cm}}
\toprule
\multicolumn{15}{c}{\textbf{CNN on CIFAR10}} \\ 
\midrule
 &   & & \multicolumn{4}{|c|}{$q^*=1$} & \multicolumn{4}{|c|}{$q^*=2$} & \multicolumn{4}{|c}{$q^*=3$} \\
\midrule
  & $s$ & $V'(q^*)$  & 
 $m$ & $V''(q^*)$  &\textbf{Accuracy} & \textbf{Sparsity} & $m$ & $V''(q^*)$ &\textbf{Accuracy} & \textbf{Sparsity} & $m$ & $V''(q^*)$ &\textbf{Accuracy} & \textbf{Sparsity} \\ 
\midrule
\multirow{12}{*}{$\text{CST}_{\tau, m}$} 
& 0.6& 0.5 & 0.89 & 1.72 & 0.88 & 0.60 & 1.26 & -0.14 & 0.88 & 0.60 & 1.54 & -0.05 & 0.88 & 0.60 \\
\rowcolor{lightgray}\cellcolor{white}&\cellcolor{white}&     0.7 & 1.27 & 2.30 & 0.89 & 0.60 & 1.79 &  0.08 & 0.89 & 0.60 &  2.19 &  0.03 & 0.89 & 0.60 \\
&&     0.9 & 1.85 & 3.18 & 0.89 & 0.60 & 2.62 &  0.40 & 0.90 & 0.57 &  3.21 &  0.13 & 0.74 & 0.53 \\
\cline{2-15}
& 0.7 & 0.5 & 0.81 & 1.49 & 0.89 & 0.70 & 1.14 & -0.05 & 0.87 & 0.70 & 1.40 & -0.02 & 0.88 & 0.70 \\
\rowcolor{lightgray}\cellcolor{white}&\cellcolor{white}&   0.7 & 1.17 & 2.05 & 0.88 & 0.70 & 1.66 &  0.21 & 0.88 & 0.70 &  2.03 &  0.07 & 0.88 & 0.70 \\
&&   0.9 & 1.74 & 2.90 & 0.27 & 0.42 & 2.46 &  0.58 & 0.11 & 0.31 &  3.01 &  0.19 & 0.58 & 0.50 \\
\cline{2-15}
& 0.8 & 0.5 & 0.72 & 1.26 & 0.77 & 0.80 & 1.02 &  0.11 & 0.85 & 0.80 &  1.25 &  0.04 & 0.91 & 0.80 \\
\rowcolor{lightgray}\cellcolor{white}&\cellcolor{white}&   0.7 & 1.06 & 1.79 & 0.89 & 0.80 & 1.50 &  0.41 & 0.89 & 0.80 &  1.84 &  0.14 & 0.89 & 0.80 \\
&&   0.9 & 1.61 & 2.62 & 0.10 & 0.32 & 2.28 &  0.84 & 0.11 & 0.28 &  2.79 &  0.28 & 0.10 & 0.26 \\
\cline{2-15}

&0.85 & 0.5 & 0.67 & 1.14 & 0.62 & 0.85 & 0.95 &  0.22 &  0.85 & 0.85 &  1.17 &  0.07 &0.83 & 0.84 \\
\rowcolor{lightgray}\cellcolor{white}&\cellcolor{white}&    0.7 & 1.00 & 1.66 & 0.87 & 0.85 & 1.42 &  0.56 & 0.82 & 0.84 &  1.73 &  0.19 & 0.90 & 0.85 \\
&&   0.9 & 1.53 & 2.46 & 0.10 & 0.62 & 2.17 &  1.05 & 0.10 & 0.51 &  2.66 & 0.35 & 0.10 & 0.41 \\
\end{tabular}
\label{table_cst_dnn}
\end{table*}

\clearpage

\subsubsection{Mean training loss plots}

\cref{fig:mean_train_loss_cst} shows the mean training loss for each parameter set of experiments of $\text{CST}_{\tau, m}$ across five seeds of a DNN. \cref{fig:mean_train_loss_cst} shows that, similarly to experiments for $\text{CReLU}_{\tau, m}$, there is improved training dynamics on average across parameter sets with increased $q^*$.

\begin{figure}[htbp!]
    \centering
    \subfigure[$s=0.6$, $V'(q^*)=0.5$]{
        \includegraphics[trim = 0cm 0cm 0cm 1.75cm, clip,width=0.3\linewidth]{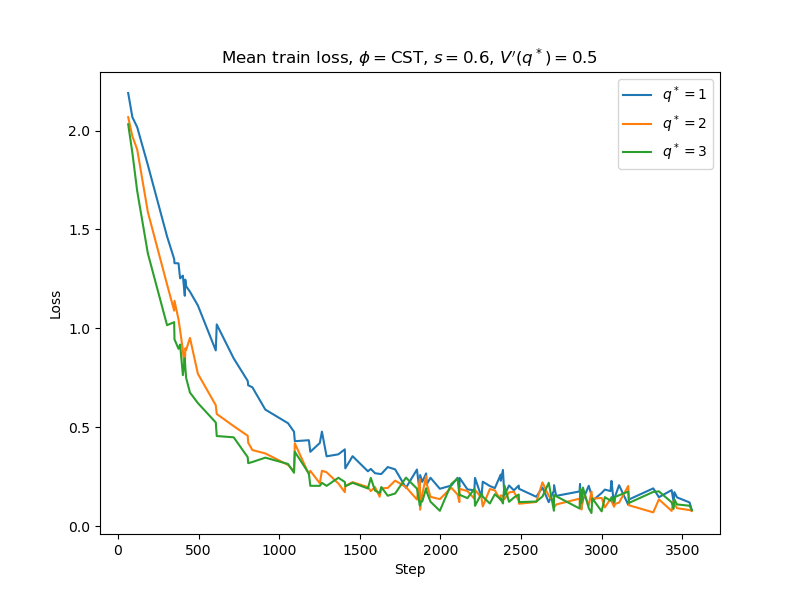}
    }
    \hfill
    \subfigure[$s=0.6$, $V'(q^*)=0.7$]{
        \includegraphics[trim = 0cm 0cm 0cm 1.75cm, clip,width=0.3\linewidth]{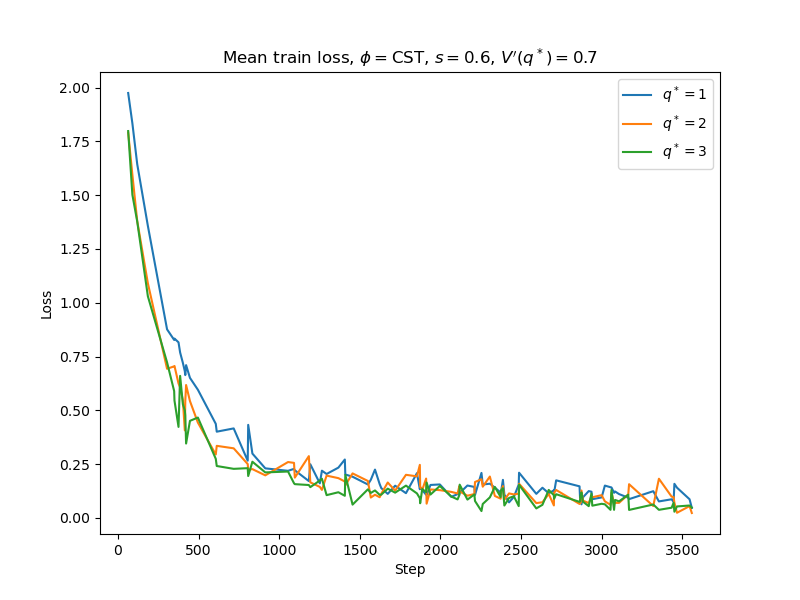}
    }
    \hfill
    \subfigure[$s=0.6$, $V'(q^*)=0.9$]{
        \includegraphics[trim = 0cm 0cm 0cm 1.75cm, clip,width=0.3\linewidth]{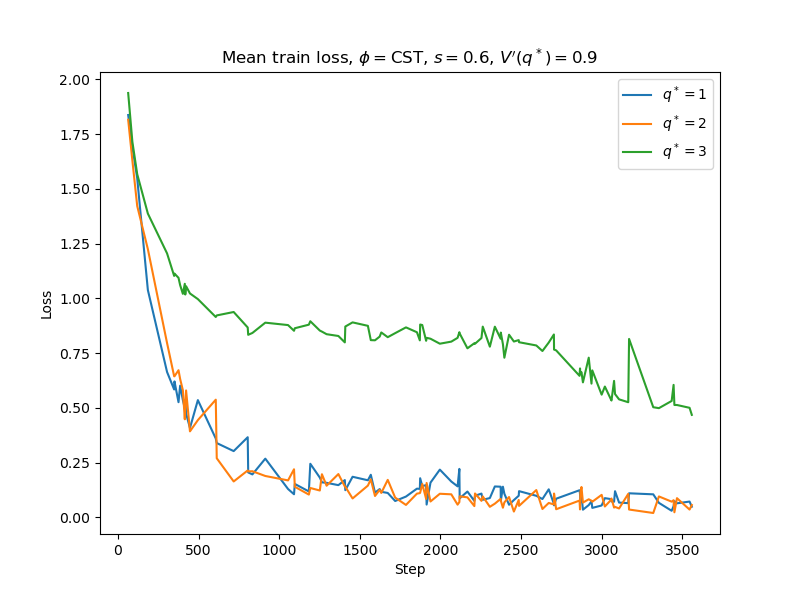}
    }
    \hfill
    \subfigure[$s=0.7$, $V'(q^*)=0.5$]{
        \includegraphics[trim = 0cm 0cm 0cm 1.75cm, clip,width=0.3\linewidth]{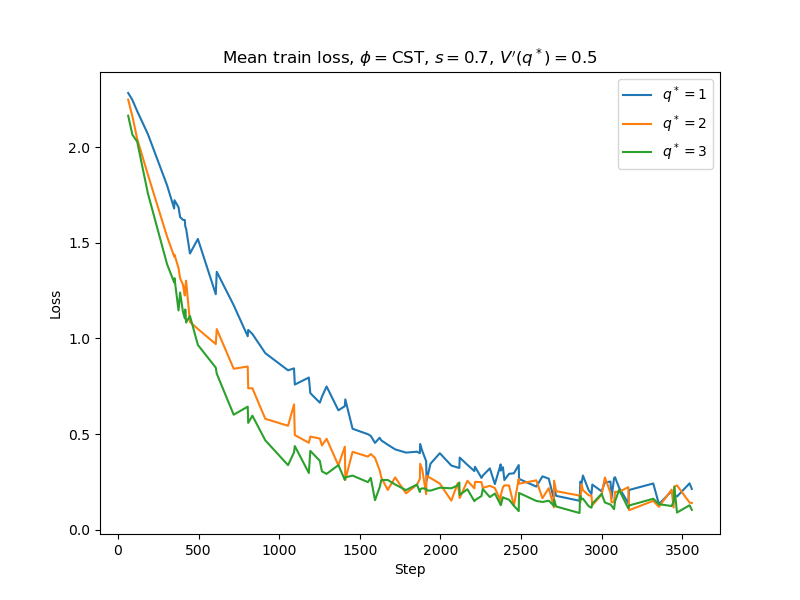}
    }
    \hfill
    \subfigure[$s=0.7$, $V'(q^*)=0.7$]{
        \includegraphics[trim = 0cm 0cm 0cm 1.75cm, clip,width=0.3\linewidth]{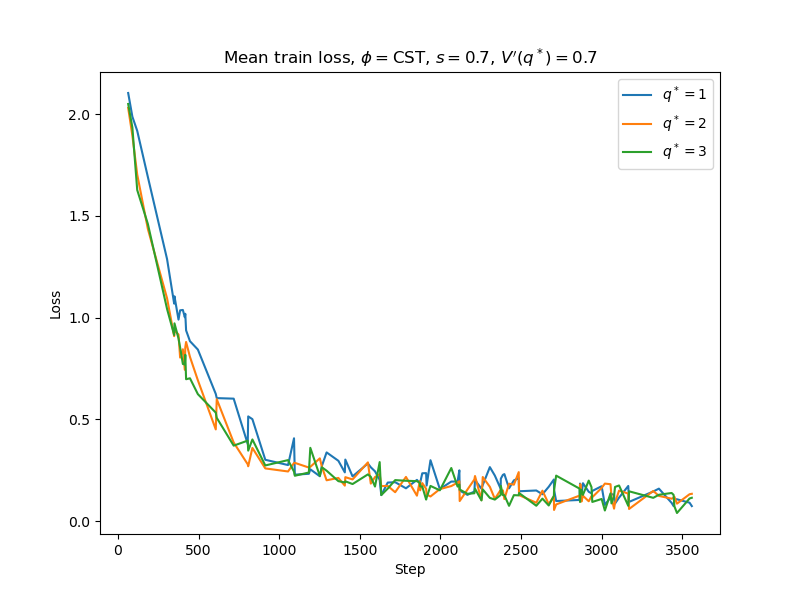}
    }
    \hfill
    \subfigure[$s=0.7$, $V'(q^*)=0.9$]{
        \includegraphics[trim = 0cm 0cm 0cm 1.75cm, clip,width=0.3\linewidth]{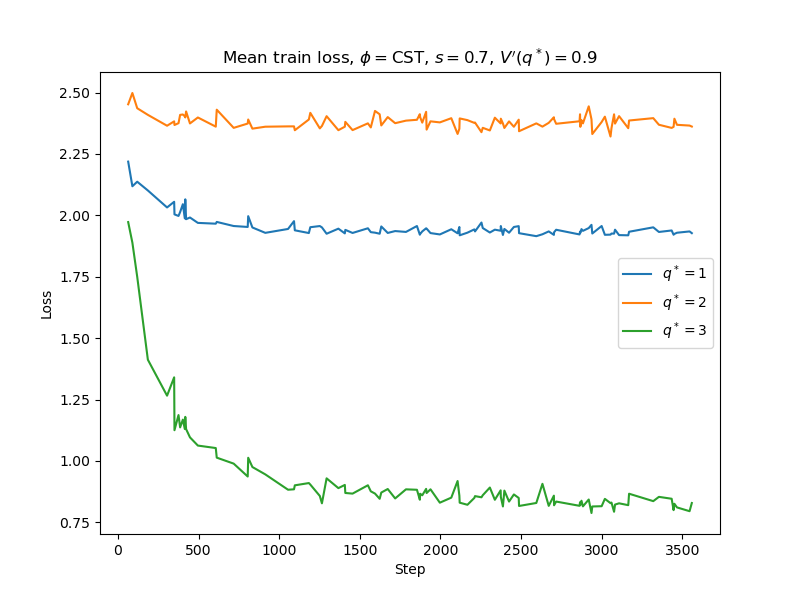}
    }
    \hfill
    \subfigure[$s=0.8$, $V'(q^*)=0.5$]{
        \includegraphics[trim = 0cm 0cm 0cm 1.75cm, clip,width=0.3\linewidth]{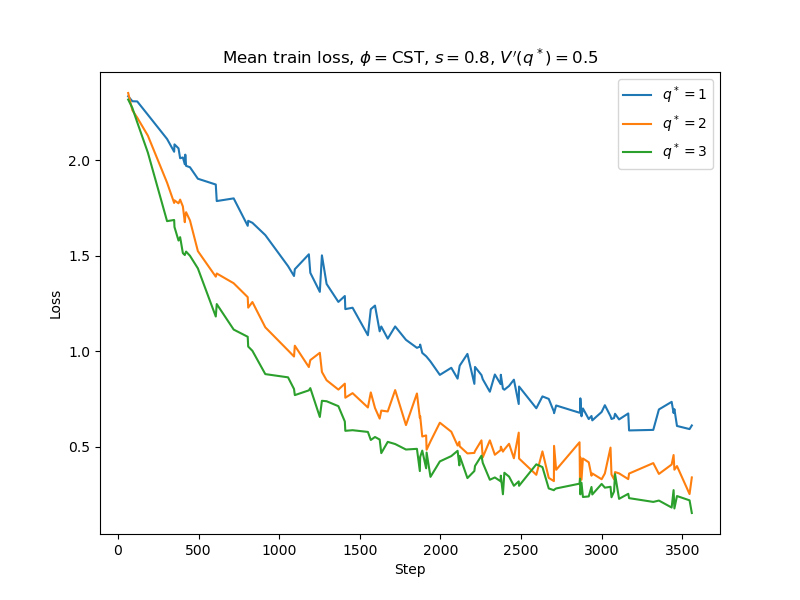}
    }
    \hfill
    \subfigure[$s=0.8$, $V'(q^*)=0.7$]{
        \includegraphics[trim = 0cm 0cm 0cm 1.75cm, clip,width=0.3\linewidth]{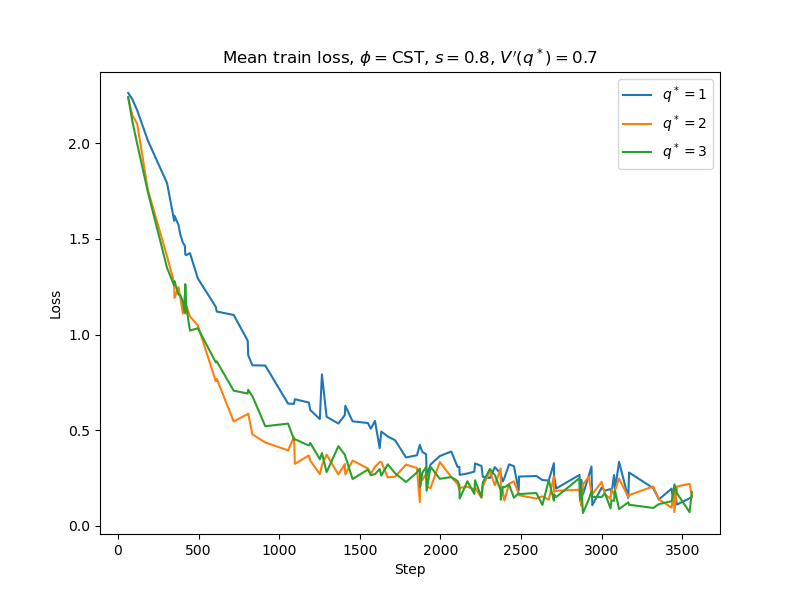}
    }
    \hfill
    \subfigure[$s=0.8$, $V'(q^*)=0.9$]{
        \includegraphics[trim = 0cm 0cm 0cm 1.75cm, clip,width=0.3\linewidth]{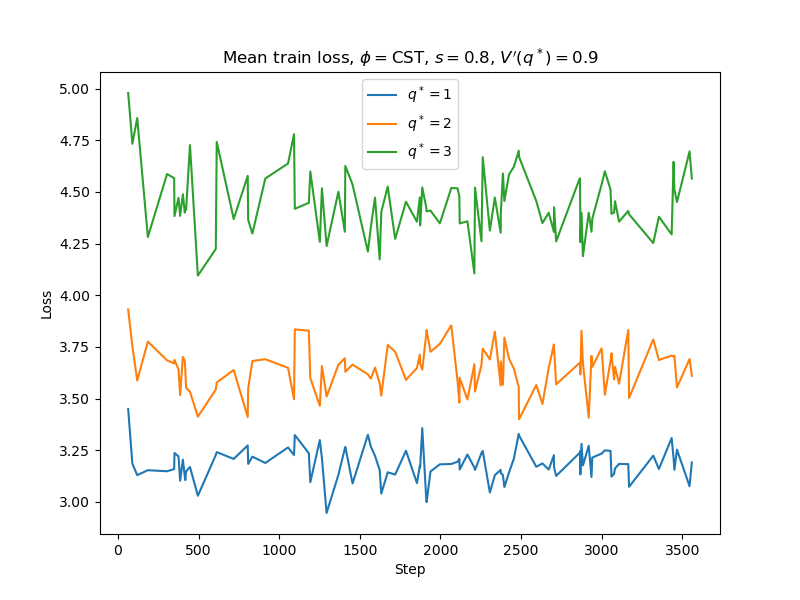}
    }
    \hfill
    \subfigure[$s=0.85$, $V'(q^*)=0.5$]{
        \includegraphics[trim = 0cm 0cm 0cm 1.75cm, clip,width=0.3\linewidth]{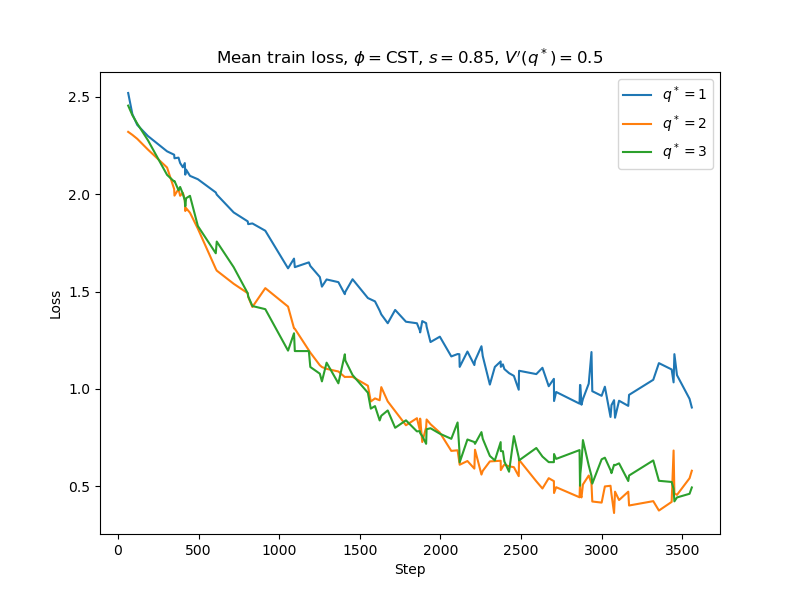}
    }
    \hfill
    \subfigure[$s=0.85$, $V'(q^*)=0.7$]{
        \includegraphics[trim = 0cm 0cm 0cm 1.75cm, clip,width=0.3\linewidth]{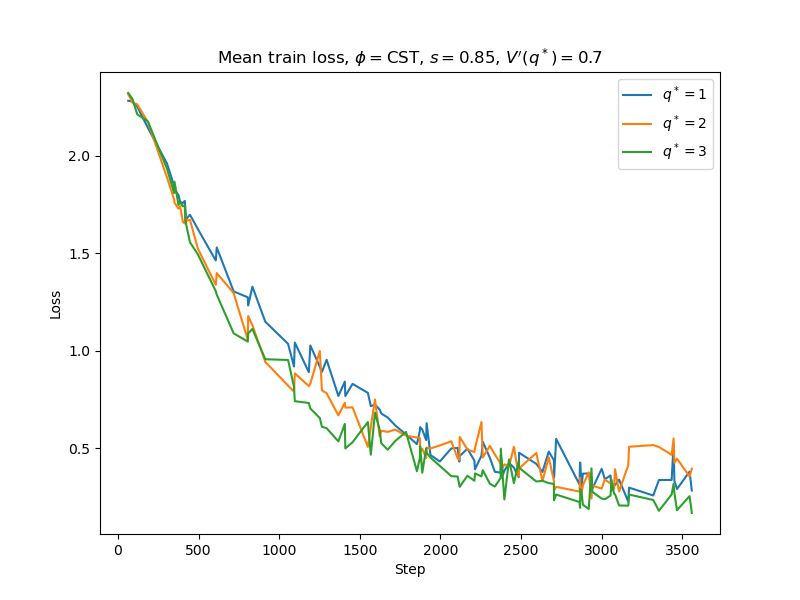}
    }
    \hfill
    \subfigure[$s=0.85$, $V'(q^*)=0.9$]{
        \includegraphics[trim = 0cm 0cm 0cm 1.75cm, clip,width=0.3\linewidth]{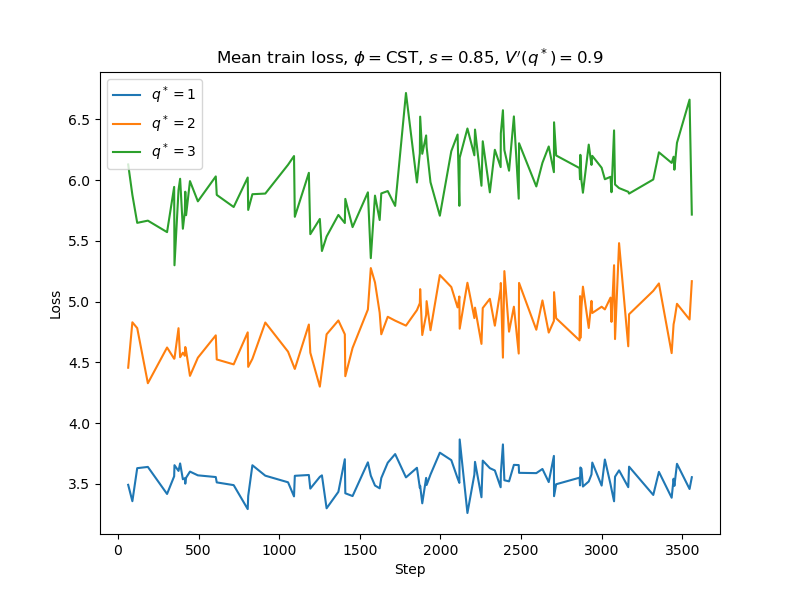}
    }
    \hfill
    \caption{Mean training loss across five seeds for a DNN with activation function $\phi = \text{CST}_{\tau, m}$, with $\tau$ and $m$ chosen such that $s$, $V'(q^*)$ are as described. Observe the improved training speed for larger sparsity levels $s=\{0.7, 0.8, 0.85\}$, with particular recovery of ability to train for $s=0.7$, $V'(q^*)=0.9$.}
    \label{fig:mean_train_loss_cst}
\end{figure}

\clearpage
\subsubsection{Individual run training loss plots}

Now Figures \ref{fig:dnn_val_loss_cst_ind_seed_s_0.6}-\ref{fig:dnn_train_loss_cst_ind_seed_s_0.85} show the training loss for each individual run of a DNN depth 100 and width 300 trained on MNIST. Figures \ref{fig:dnn_val_loss_cst_ind_seed_s_0.6}-\ref{fig:dnn_train_loss_cst_ind_seed_s_0.85} show the improved training speed across each individual run for increased $q^*$, in particular we see a recovery in ability to train for case $s=0.7$, $V'(q^*)=0.9$ for $\text{CST}_{\tau, m}$.

\begin{figure}[htbp!]
    \centering
    \subfigure[$s=0.6$, $V'(q^*)=0.5$]{
        \includegraphics[trim = 0cm 0cm 0cm 2cm, clip,height = 0.22\paperheight, width=1\linewidth]{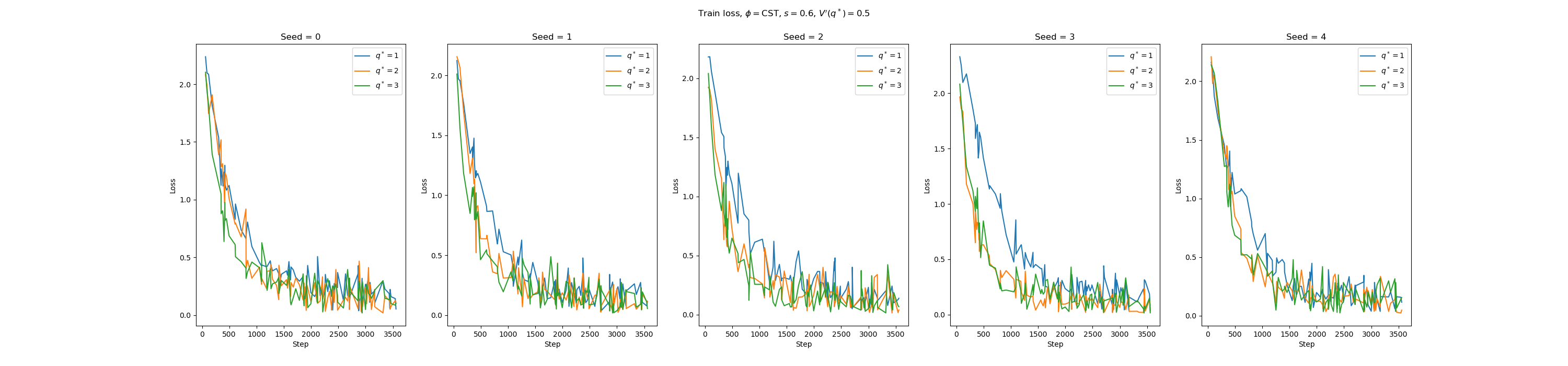}
    }
    \hfill
    \subfigure[$s=0.6$, $V'(q^*)=0.7$]{
        \includegraphics[trim = 0cm 0cm 0cm 2cm, clip,height = 5cm, width=\textwidth]{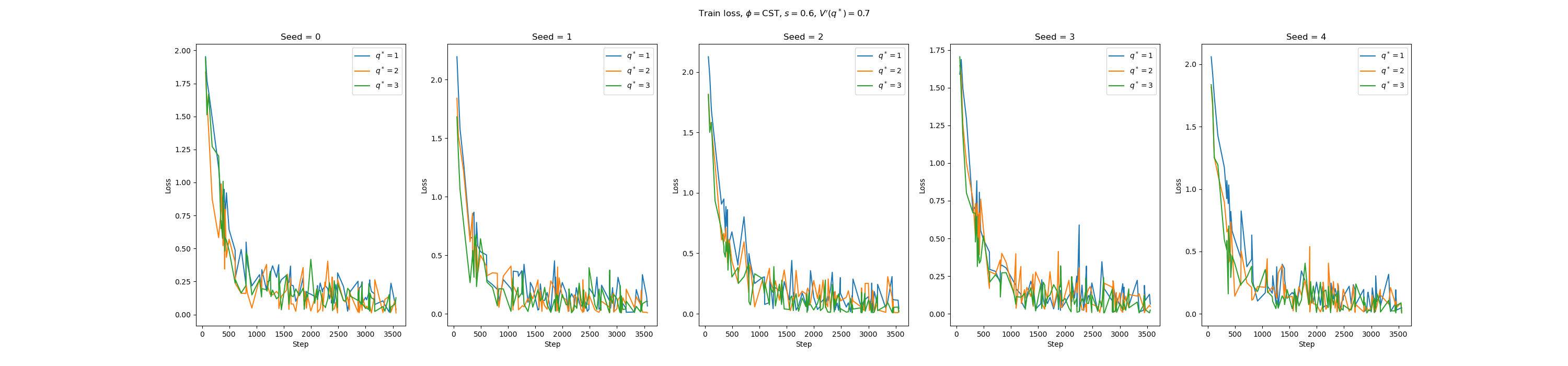}
    }
    \hfill
    \subfigure[$s=0.6$, $V'(q^*)=0.9$]{
        \includegraphics[trim = 0cm 0cm 0cm 2cm, clip,height = 5cm, width=\textwidth]{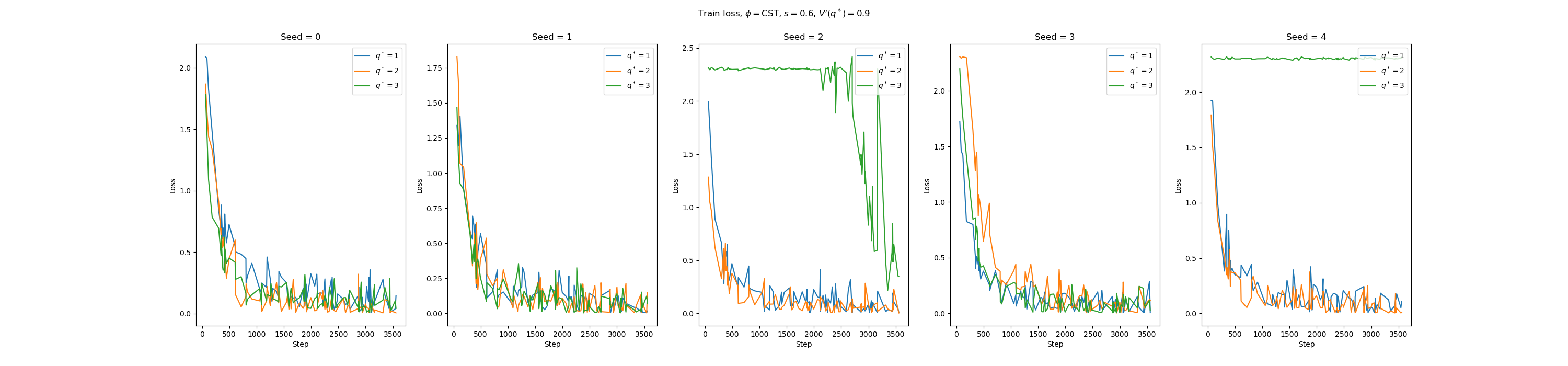}
    }
    \hfill
    
    \caption{Training loss for a DNN with activation function $\phi = \text{CST}_{\tau, m}$, with $\tau$ and $m$ chosen such that $s=0.6$, and for a given $V'(q^*)$. Observe the improved training speed for increased $q^*$ for $V'(q^*)=\{0.5, 0.7\}$}
    \label{fig:dnn_val_loss_cst_ind_seed_s_0.6}
\end{figure}

\begin{figure}[htbp!]
    \centering
    \subfigure[$s=0.7$, $V'(q^*)=0.5$]{
        \includegraphics[trim = 0cm 0cm 0cm 2cm, clip,height = 0.22\paperheight, width=1\linewidth]{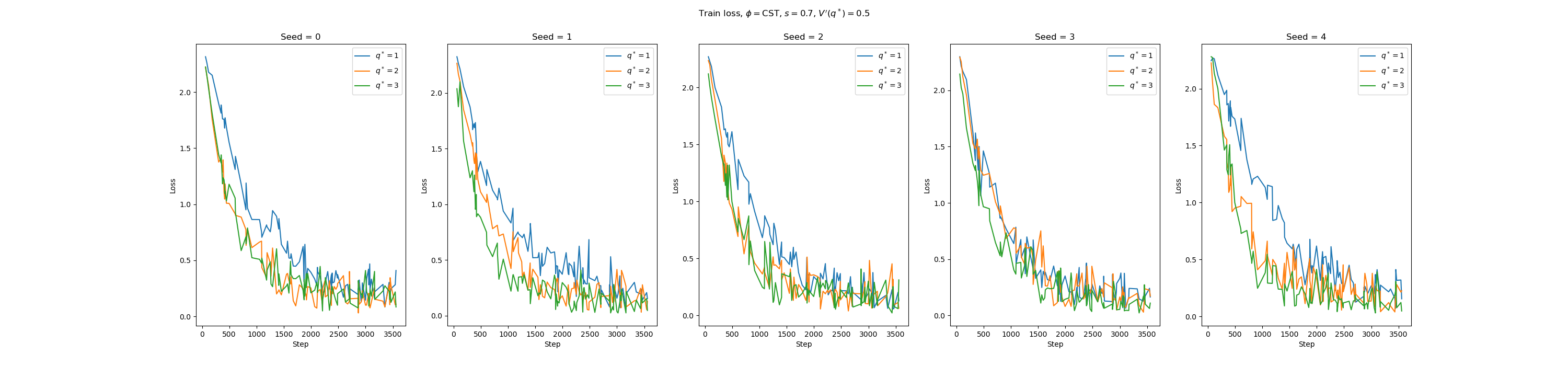}
    }
    \hfill
    \subfigure[$s=0.7$, $V'(q^*)=0.7$]{
        \includegraphics[trim = 0cm 0cm 0cm 2cm, clip,height = 0.22\paperheight, width=1\linewidth]{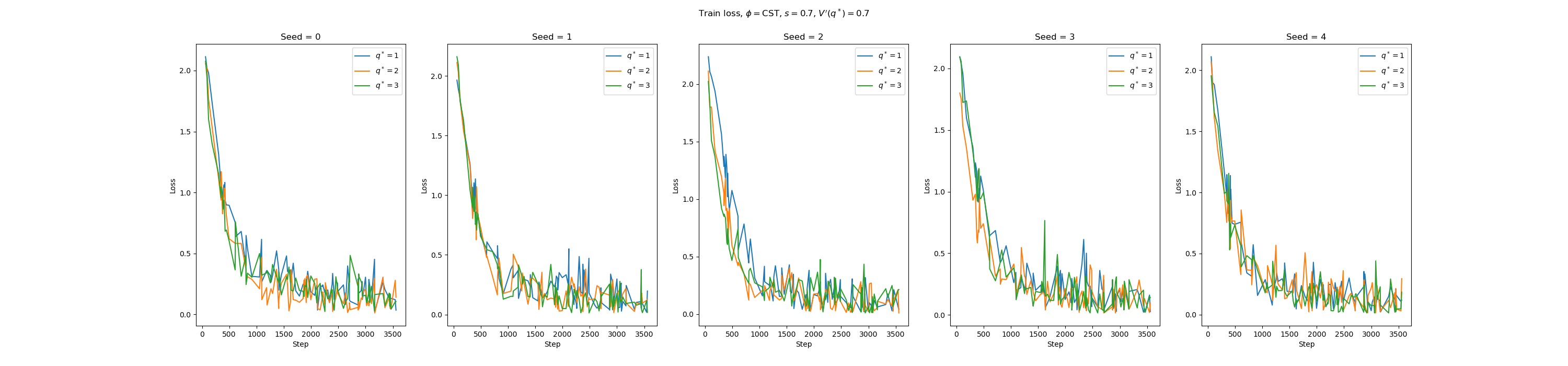}
    }
    \hfill
    \subfigure[$s=0.7$, $V'(q^*)=0.9$]{
        \includegraphics[trim = 0cm 0cm 0cm 2cm, clip,height = 0.22\paperheight, width=1\linewidth]{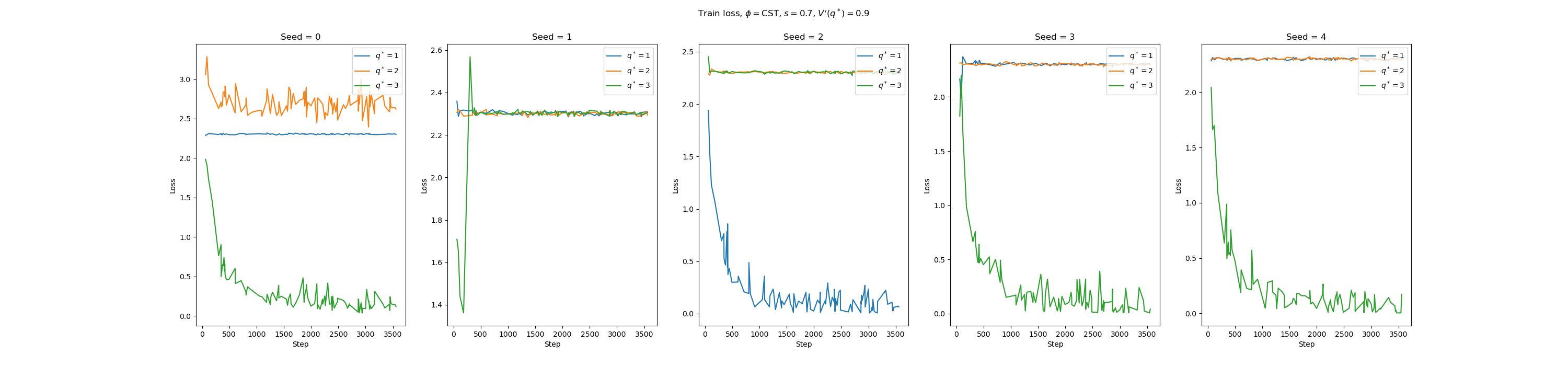}
    }
    \hfill
\caption{Training loss for a DNN with activation function $\phi = \text{CST}_{\tau, m}$, with $\tau$ and $m$ chosen such that $s=0.7$, and for a given $V'(q^*)$. Observe the improved training speed where the network can consistently train, that is $V'(q^*)=\{0.5, 0.7\}$, and recovery of ability to train for three seeds when $V'(q^*)=0.9$.}
    \label{fig:dnn_val_loss_cst_ind_seed_s_0.7}
\end{figure}

\begin{figure}[htbp!]
    \centering
    
    \subfigure[$s=0.8$, $V'(q^*)=0.5$]{
        \includegraphics[trim = 0cm 0cm 0cm 2cm, clip,height = 0.22\paperheight, width=1\linewidth]{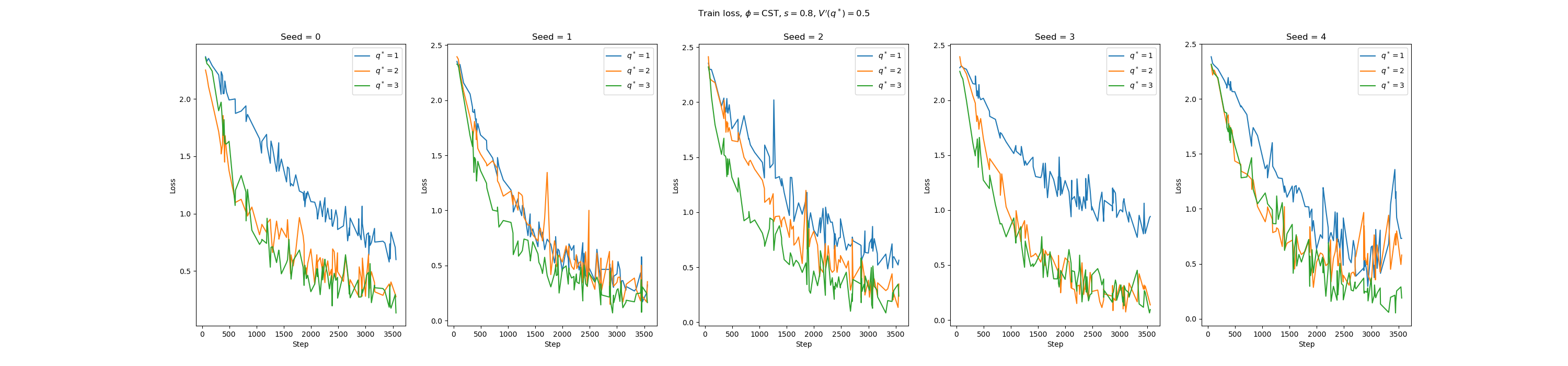}
    }
    \hfill
    \subfigure[$s=0.8$, $V'(q^*)=0.7$]{
        \includegraphics[trim = 0cm 0cm 0cm 2cm, clip,height = 0.22\paperheight, width=1\linewidth]{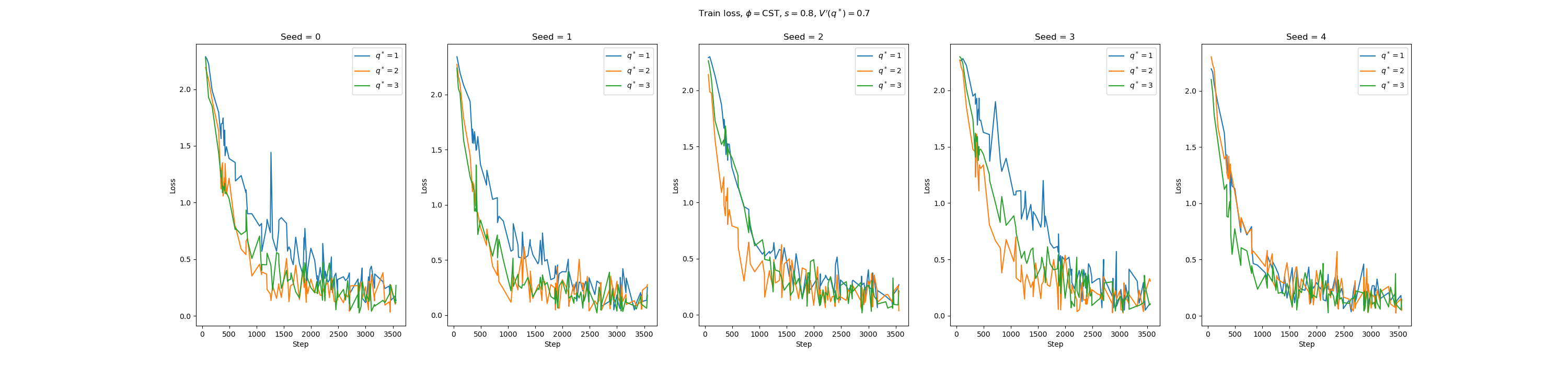}
    }
    \hfill
    \subfigure[$s=0.8$, $V'(q^*)=0.9$]{
        \includegraphics[trim = 0cm 0cm 0cm 2cm, clip,height = 0.22\paperheight, width=1\linewidth]{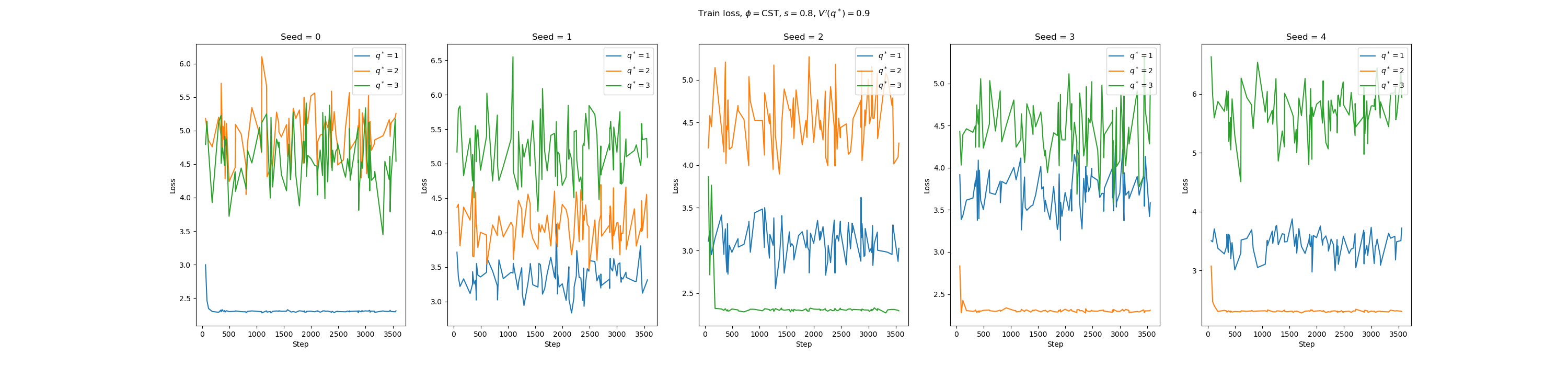}
    }
    \hfill
\caption{Training loss for a DNN with activation function $\phi = \text{CST}_{\tau, m}$, with $\tau$ and $m$ chosen such that $s=0.8$, and for a given $V'(q^*)$. Observe improved training speed for increased $q^*$ where the network can train, that is $V'(q^*)=\{0.5, 0.7\}$.}
    \label{fig:val_loss_cst_ind_seed_s_0.8}
\end{figure}

\begin{figure}[htbp!]
    \centering
    \subfigure[$s=0.85$, $V'(q^*)=0.5$]{
        \includegraphics[trim = 0cm 0cm 0cm 2cm, clip,height = 0.22\paperheight, width=1\linewidth]{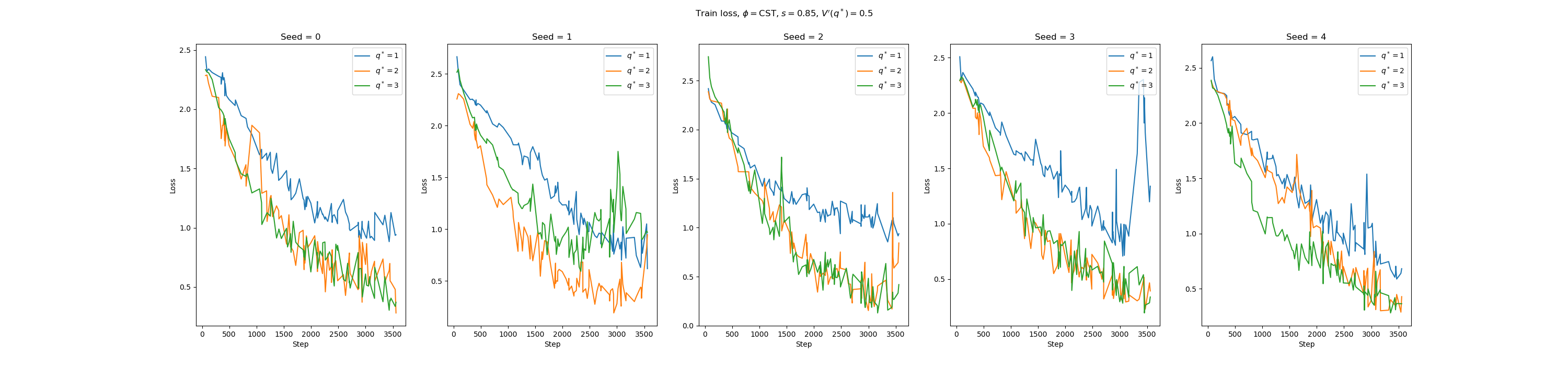}
    }
    \hfill
    \subfigure[$s=0.85$, $V'(q^*)=0.7$]{
        \includegraphics[trim = 0cm 0cm 0cm 2cm, clip,height = 0.22\paperheight, width=1\linewidth]{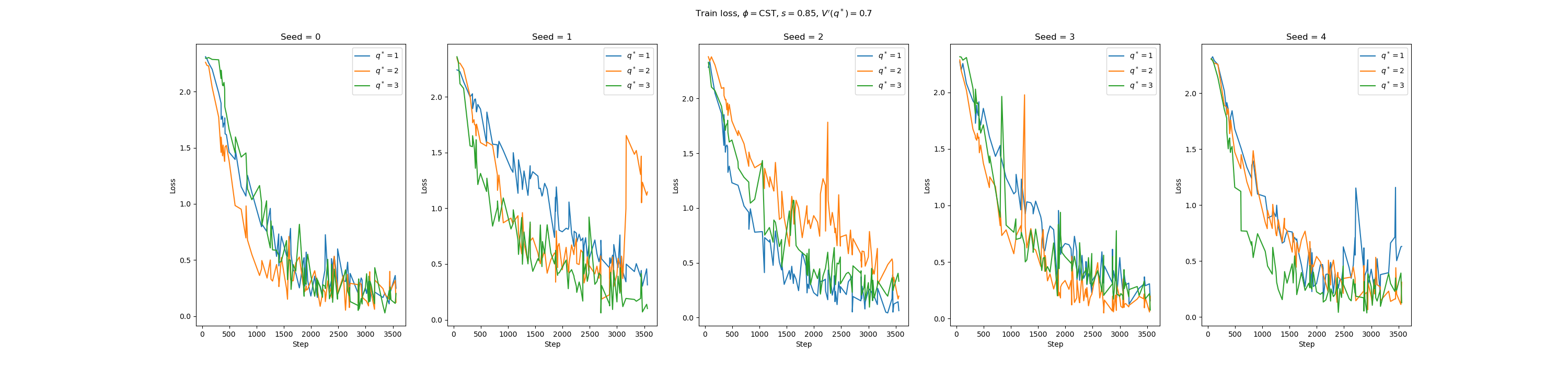}
    }
    \hfill
    \subfigure[$s=0.85$, $V'(q^*)=0.9$]{
        \includegraphics[trim = 0cm 0cm 0cm 2cm, clip,height = 0.22\paperheight, width=1\linewidth]{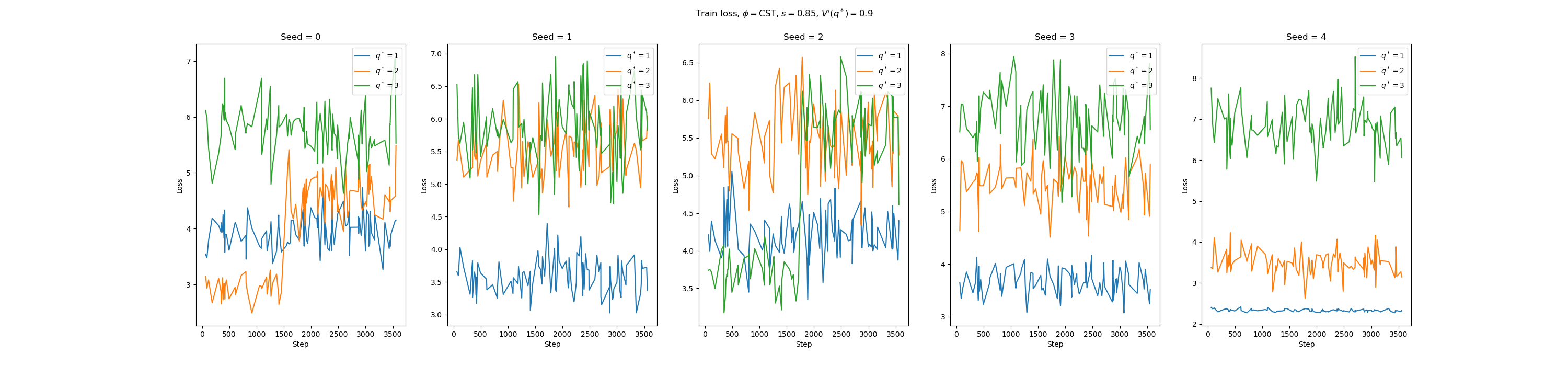}
    }
    \hfill
\caption{Training loss for a DNN with activation function $\phi = \text{CST}_{\tau, m}$, with $\tau$ and $m$ chosen such that $s=0.85$, and for a given $V'(q^*)$. Observe improved training speed for increased $q^*$ where the network can train, that is $V'(q^*)=\{0.5, 0.7\}$.}
    \label{fig:dnn_train_loss_cst_ind_seed_s_0.85}
\end{figure}

\clearpage

\section{Further CNN experiment results}\label{more_experiments_cnn}

This section provides further figures for experiments of CNNs (depth 50, 300 channels) on the training loss behaviour of increasing $q^*$ for $\phi = \text{CReLU}_{\tau, m}$, summarised in \cref{sec:experiments} on CIFAR10, and further on Tiny ImageNet. Each CNN experiment is conducted on a single H100 with 80GB of RAM, each CIFAR10 experiment takes approximately 1 hour to train, each Tiny ImageNet experiment takes approximately 3 hours 30 minutes to train. Total compute time for all experiments in this section is approximately 40 days.

\subsection{CIFAR10 individual run training loss plots}

Due to larger variability in training loss at each step the mean training loss across seeds for CIFAR10 is uninformative. Figures \ref{fig:cnn_val_loss_crelu_ind_seed_s_0.6}-\ref{fig:cnn_val_loss_crelu_ind_seed_s_0.85} show training loss for individual runs, generally demonstrating improved training dynamics for larger $q^*$, with particular recovery of ability to train for $s=0.9$, $V'(q^*)=0.7$ with $q^*=2$.

\begin{figure}[htbp!]
    \centering
    \subfigure[$s=0.6$, $V'(q^*)=0.5$]{
        \includegraphics[trim = 0cm 0cm 0cm 2cm, clip,height = 0.14\paperheight, width=1\linewidth]{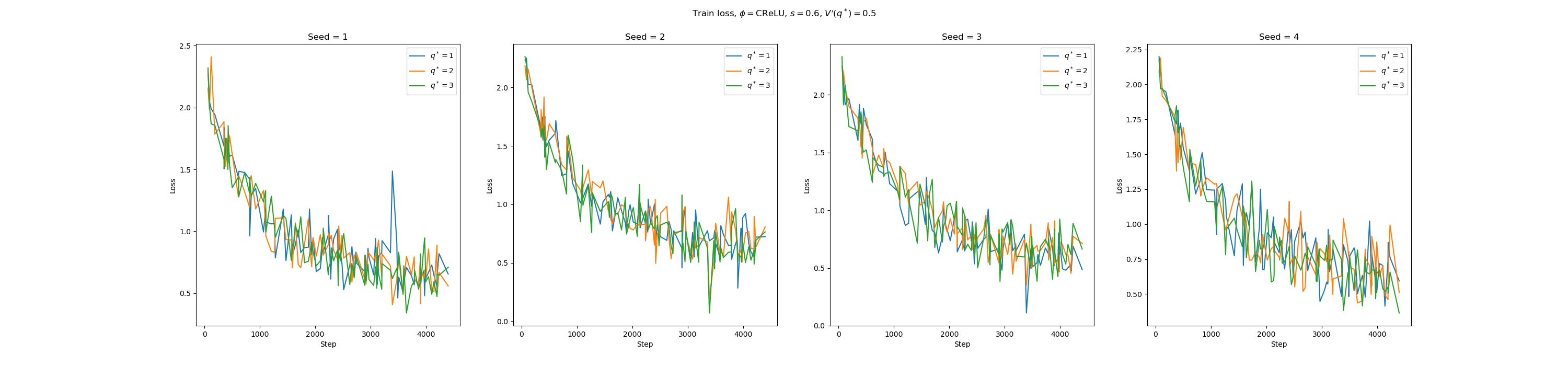}
    }
    \hfill
    \subfigure[$s=0.6$, $V'(q^*)=0.7$]{
        \includegraphics[trim = 0cm 0cm 0cm 2cm, clip,height = 0.14\paperheight, width=1\linewidth]{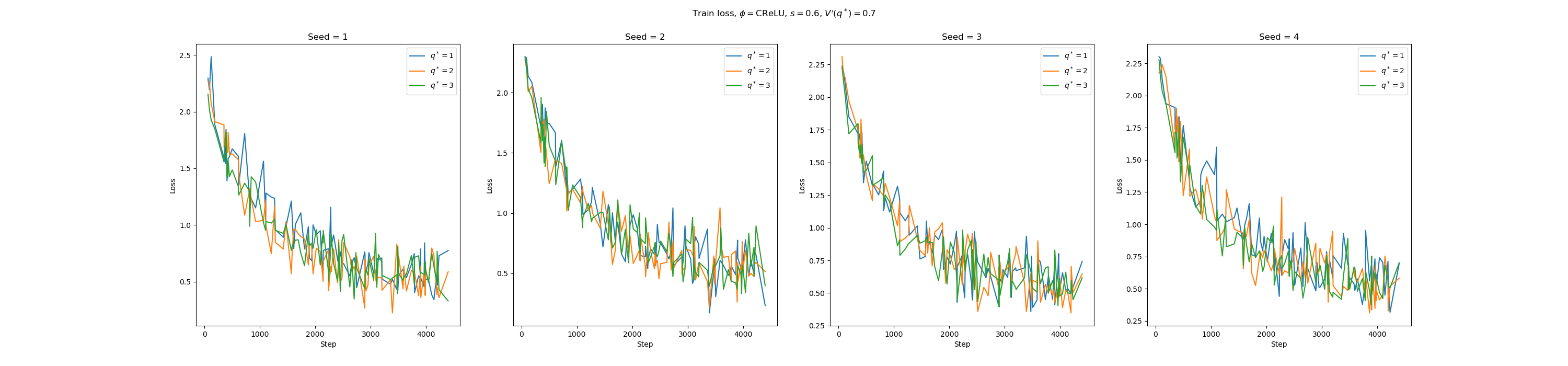}
    }
    \hfill
    \subfigure[$s=0.6$, $V'(q^*)=0.9$]{
        \includegraphics[trim = 0cm 0cm 0cm 2cm, clip,height = 0.14\paperheight, width=1\linewidth]{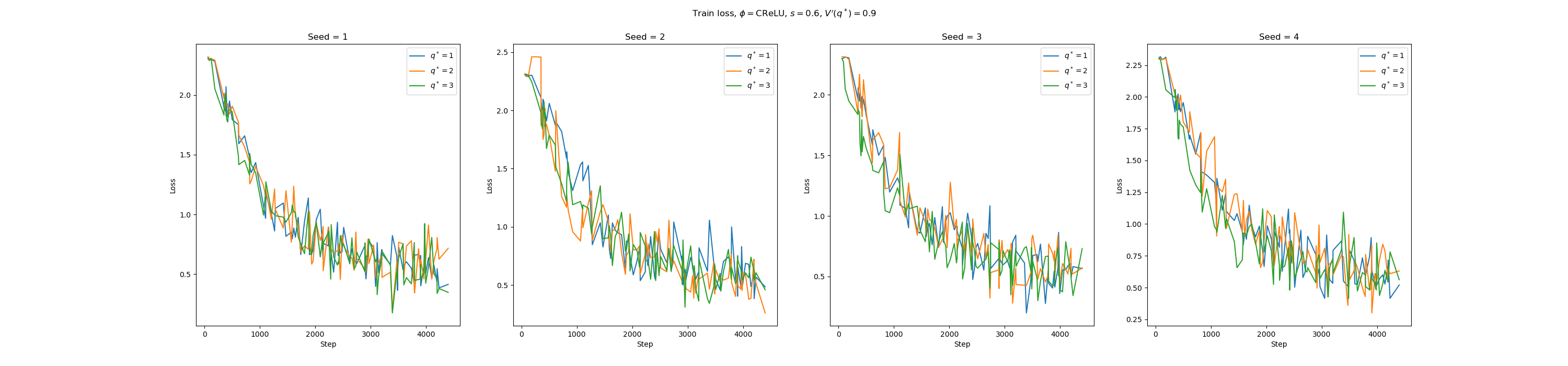}
    }
    \hfill
    
    \caption{Training loss for a CNN with activation function $\phi = \text{CReLU}_{\tau, m}$, $\tau$ and $m$ chosen such that $s=0.6$, and for a given $V'(q^*)$. Observe retained training dynamics for increased $q^*$ across all parameter sets.}
    \label{fig:cnn_val_loss_crelu_ind_seed_s_0.6}
\end{figure}

\begin{figure}[htbp!]
    \centering
    \subfigure[$s=0.7$, $V'(q^*)=0.5$]{
        \includegraphics[trim = 0cm 0cm 0cm 2cm, clip,height = 0.22\paperheight, width=1\linewidth]{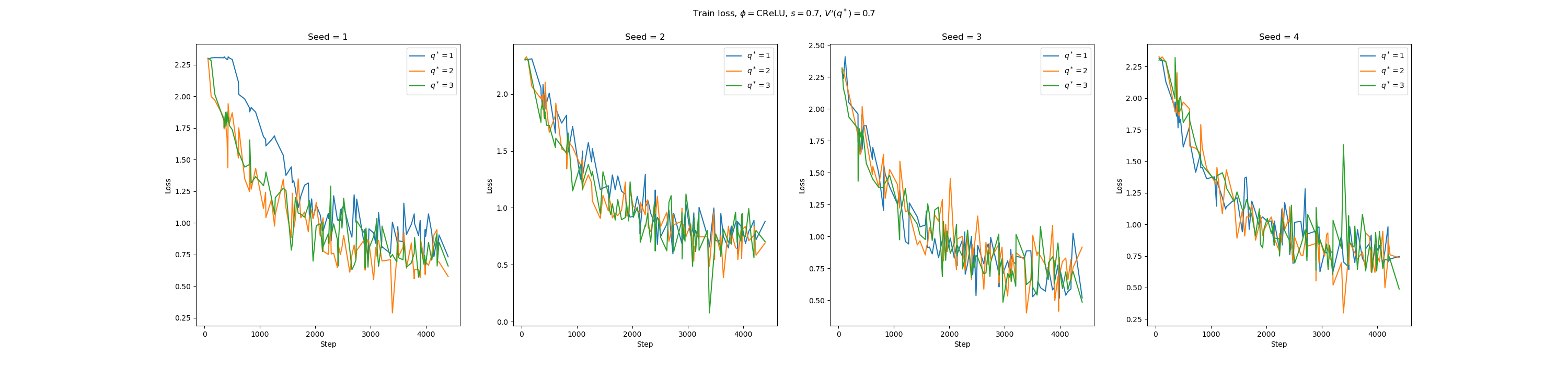}
    }
    \hfill
    \subfigure[$s=0.7$, $V'(q^*)=0.7$]{
        \includegraphics[trim = 0cm 0cm 0cm 2cm, clip,height = 0.22\paperheight, width=1\linewidth]{figures/cifar/cnn_cosine_train_loss_s_0.7_vprime_0.7_sfatrelu_max_individual_seed.png}
    }
    \hfill
    \subfigure[$s=0.7$, $V'(q^*)=0.9$]{
        \includegraphics[trim = 0cm 0cm 0cm 2cm, clip,height = 0.22\paperheight, width=1\linewidth]{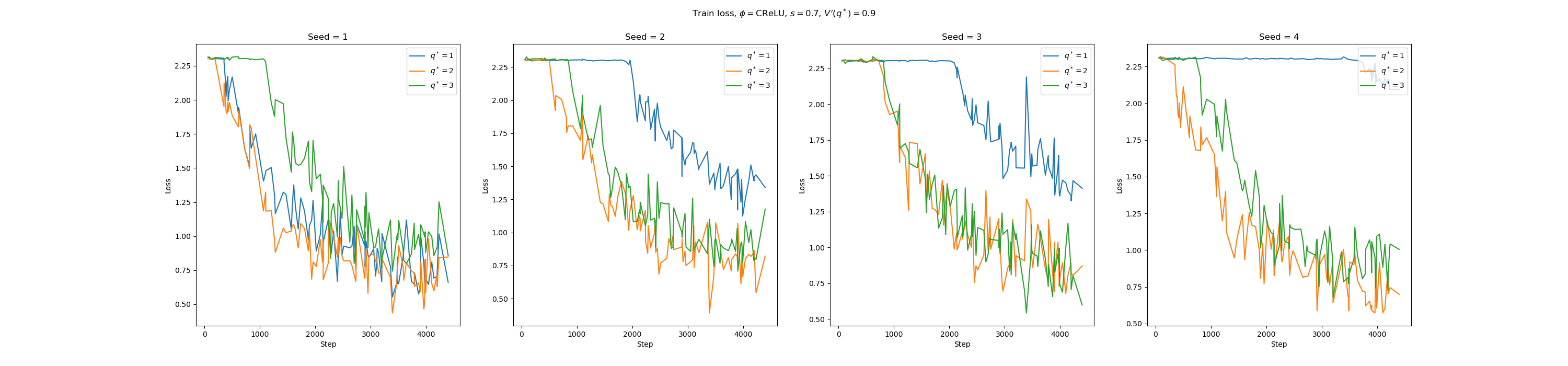}
    }
    \hfill
\caption{Training loss for a CNN with activation function $\phi = \text{CReLU}_{\tau, m}$, with $\tau$ and $m$ chosen such that $s=0.7$, and for a given $V'(q^*)$. Observe improved training speed for increased  $q^*$ across all parameter sets.}
    \label{fig:cnn_val_loss_crelu_ind_seed_s_0.7}
\end{figure}

\begin{figure}[htbp!]
    \centering
    
    \subfigure[$s=0.8$, $V'(q^*)=0.5$]{
        \includegraphics[trim = 0cm 0cm 0cm 2cm, clip,height = 0.22\paperheight, width=1\linewidth]{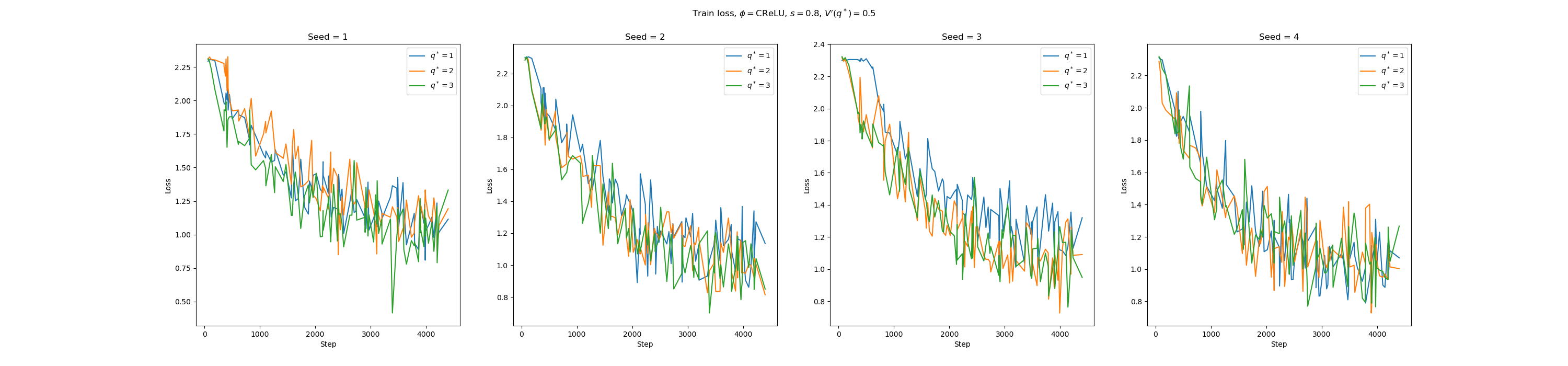}
    }
    \hfill
    \subfigure[$s=0.8$, $V'(q^*)=0.7$]{
        \includegraphics[trim = 0cm 0cm 0cm 2cm, clip,height = 0.22\paperheight, width=1\linewidth]{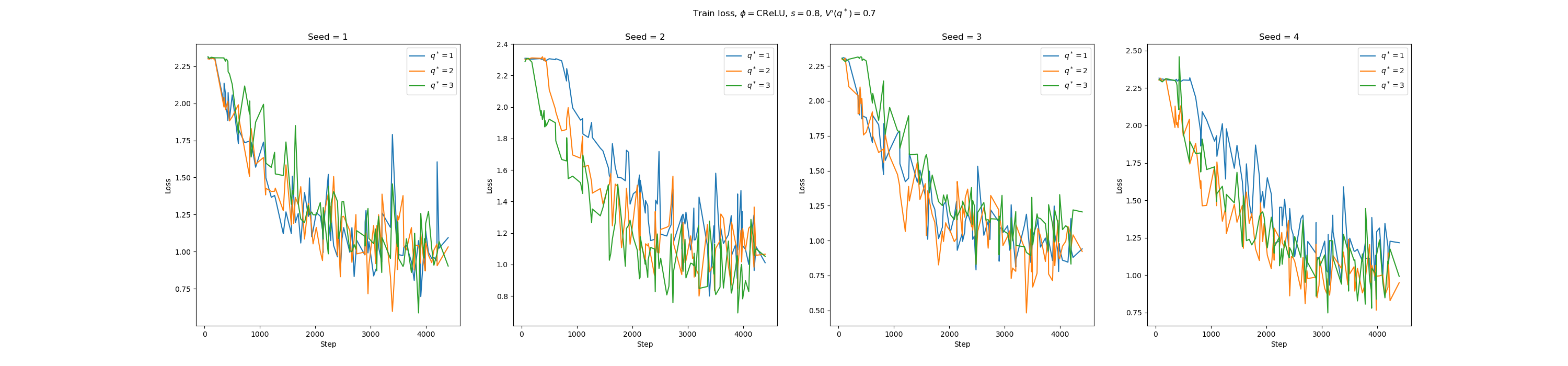}
    }
    \hfill
    \subfigure[$s=0.8$, $V'(q^*)=0.9$]{
        \includegraphics[trim = 0cm 0cm 0cm 2cm, clip,height = 0.22\paperheight, width=1\linewidth]{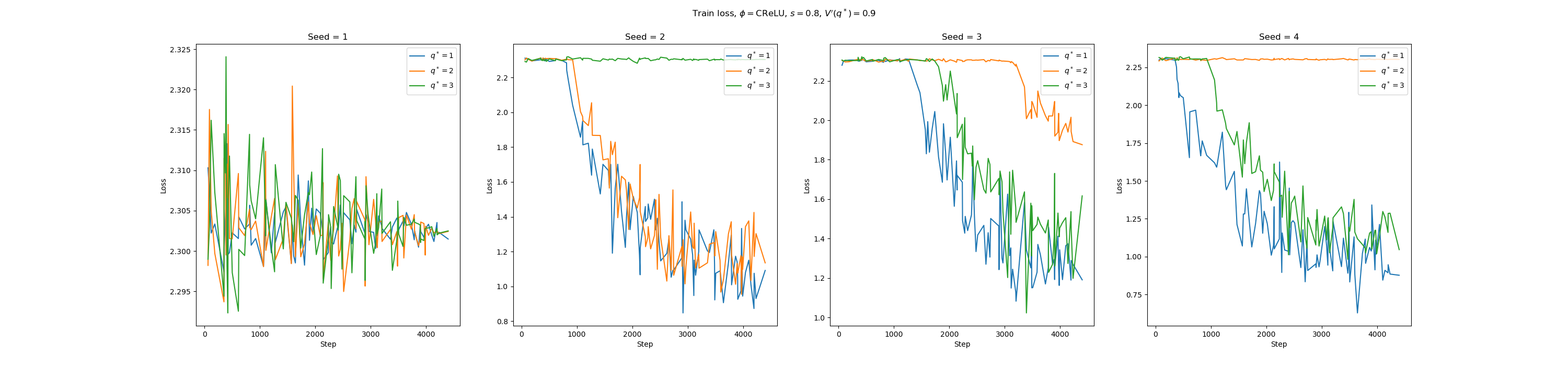}
    }
    \hfill
\caption{Training loss for a CNN with activation function $\phi = \text{CReLU}_{\tau, m}$, with $\tau$ and $m$ chosen such that $s=0.8$, and for a given $V'(q^*)$. Observe improved training speed for increased $q^*$ for cases where the network consistently trains, that is $V'(q^*)=\{0.5, 0.7\}$.}
    \label{fig:cnn_val_loss_crelu_ind_seed_s_0.8}
\end{figure}

\begin{figure}[htbp!]
    \centering
    \subfigure[$s=0.85$, $V'(q^*)=0.5$]{
        \includegraphics[trim = 0cm 0cm 0cm 2cm, clip,height = 0.22\paperheight, width=1\linewidth]{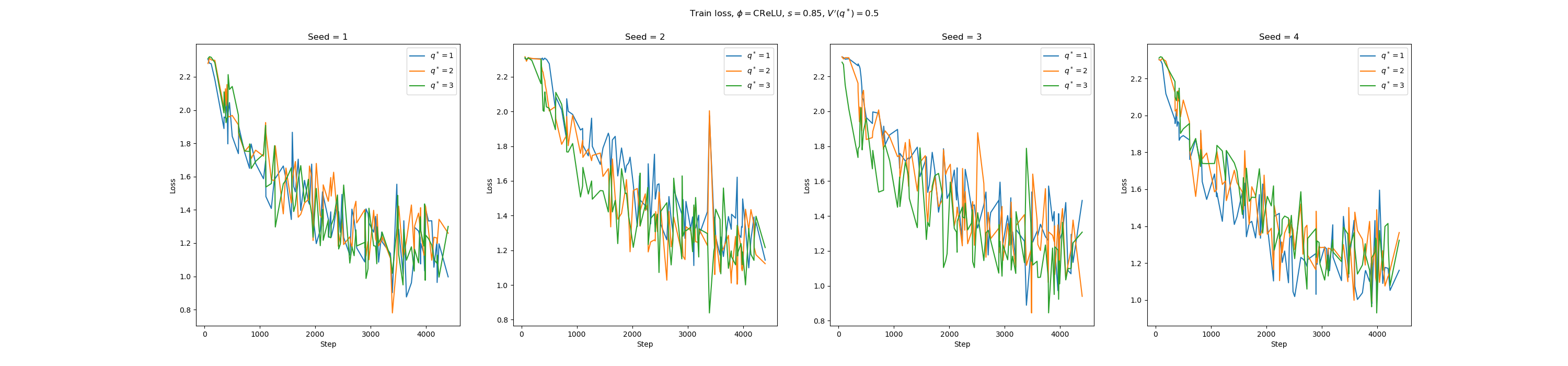}
    }
    \hfill
    \subfigure[$s=0.85$, $V'(q^*)=0.7$]{
        \includegraphics[trim = 0cm 0cm 0cm 2cm, clip,height = 0.22\paperheight, width=1\linewidth]{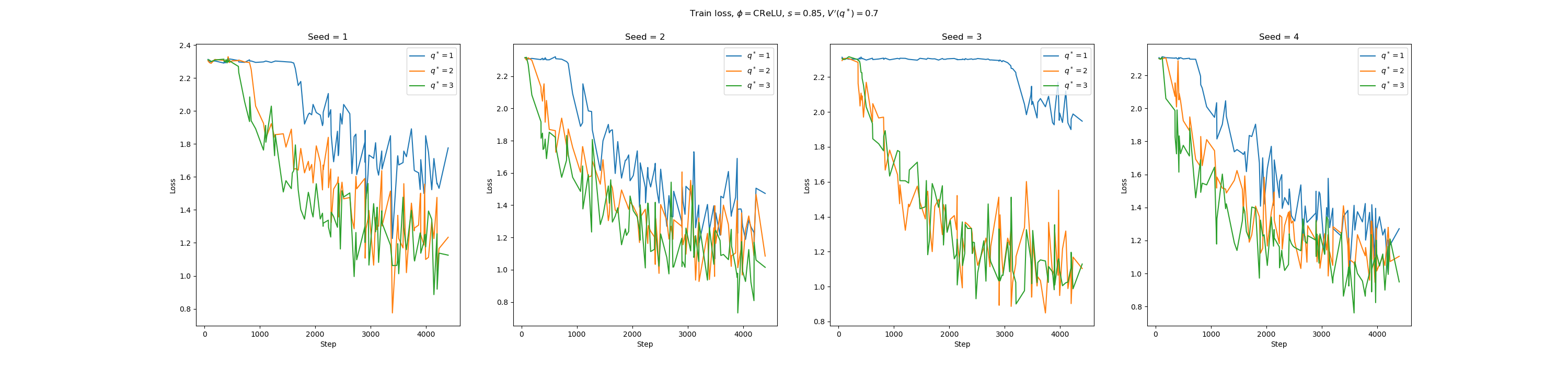}
    }
    \hfill
    \subfigure[$s=0.85$, $V'(q^*)=0.9$]{
        \includegraphics[trim = 0cm 0cm 0cm 2cm, clip,height = 0.22\paperheight, width=1\linewidth]{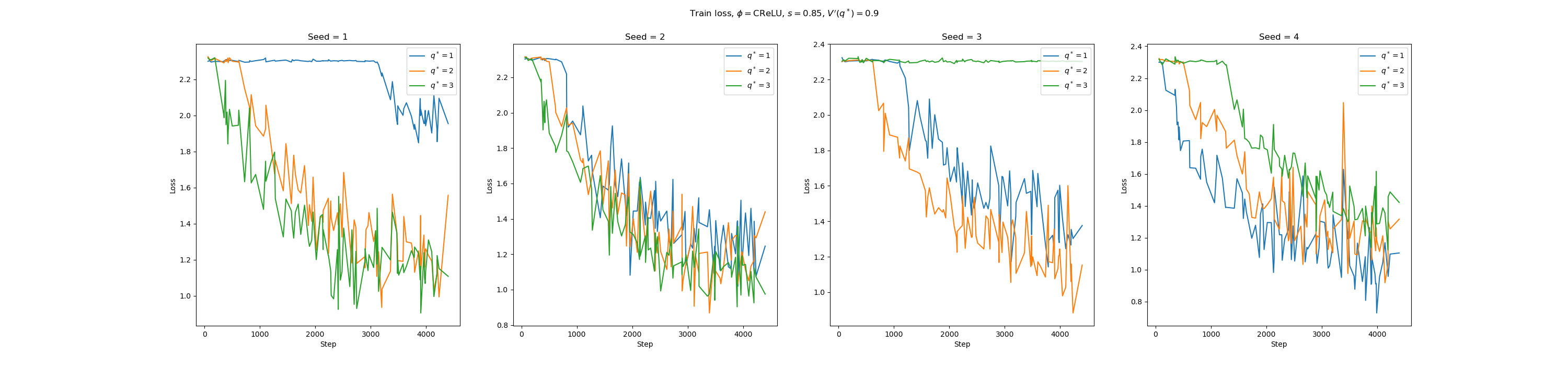}
    }
    \hfill
\caption{Training loss for a CNN with activation function $\phi = \text{CReLU}_{\tau, m}$, with $\tau$ and $m$ chosen such that $s=0.85$, and for a given $V'(q^*)$. Observe improved training speed for increased $q^*$ across all parameter sets.}
    \label{fig:cnn_val_loss_crelu_ind_seed_s_0.85}
\end{figure}

\clearpage

\subsection{Tiny ImageNet experiments}\label{app:tiny_imagenet}

Further tests of the CNN architecture of depth 50 and 300 layers were performed on the Tiny ImageNet dataset for $\phi= \text{CReLU}_{\tau, m}$. The CNN was trained using SGD with cosine learning rate $\eta=10^{-3}$ and batch size 64. Below, \cref{fig:tinyimagenet_training_mean_plots} are the mean training loss and validation accuracy across five seeds for the parameters $s=0.85$, $V'(q^*)=0.7$, see the improved training for larger $q^*=3$ versus $q^*=1$. Further complete tests of the mean training loss for the parameter set of $\text{CReLU}_{\tau, m}$ can be seen in \cref{fig:mean_train_loss_crelu_tinyimagenet}.

\begin{figure}[htbp!]
    \centering
    \subfigure[]{\includegraphics[width=0.45\linewidth]{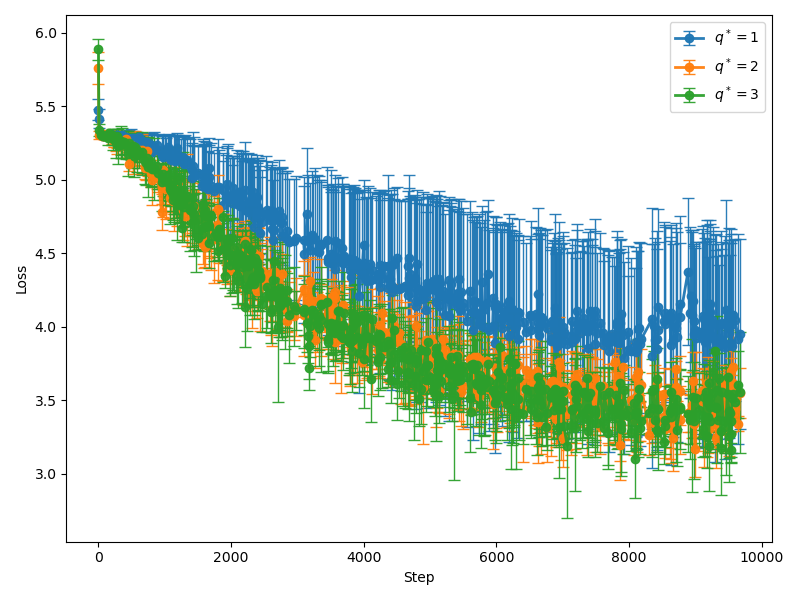}
    }
    \subfigure[]{\includegraphics[width=0.45\linewidth]{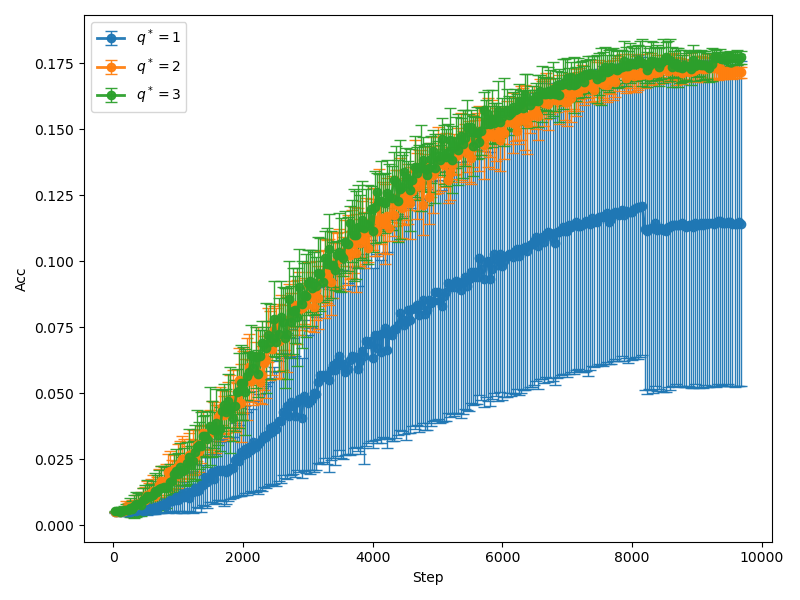}}
     \caption{Mean training loss, (a), and mean validation accuracy, (b), against step, for a CNN trained on Tiny ImageNet across five seeds, with standard deviation error bars. The CNN has activation function $\phi = \text{CReLU}_{\tau, m}$, $\tau$ and $m$ are chosen such that $s=0.85$ and $V'(q^*) = 0.7$, across $q^*=\{ 1, 2, 3 \}$. For increasingly larger of $q^*=\{ 1, 2, 3 \}$, the training loss and validation accuracy converge faster.}
    \label{fig:tinyimagenet_training_mean_plots}
\end{figure}

\clearpage
\subsubsection{Mean training loss plots}
\begin{figure}[htbp!]
    \centering
    \subfigure[$s=0.6$, $V'(q^*)=0.5$]{
        \includegraphics[height = 0.145\paperheight, width=0.25\linewidth]{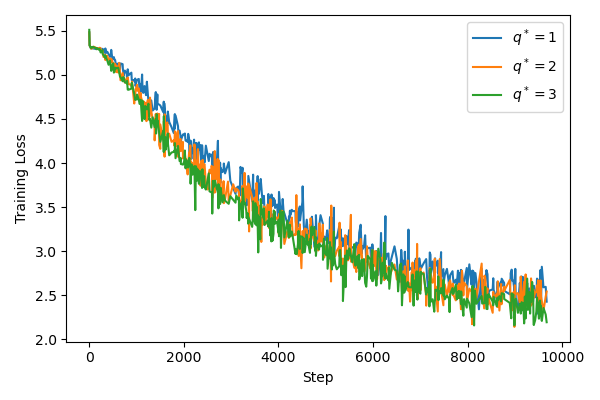}
    }
    \hfill
    \subfigure[$s=0.6$, $V'(q^*)=0.7$]{
        \includegraphics[height = 0.145\paperheight, width=0.25\linewidth]{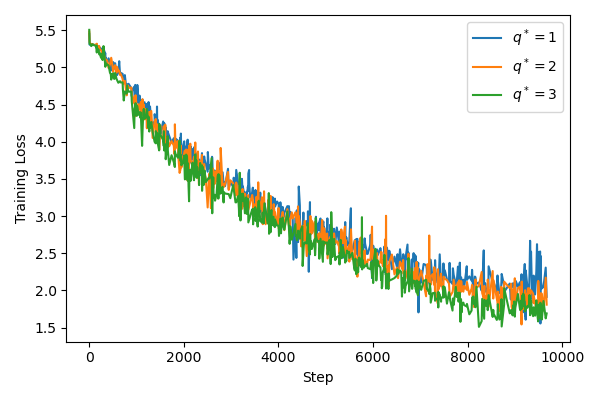}
    }
    \hfill
    \subfigure[$s=0.6$, $V'(q^*)=0.9$]{
        \includegraphics[height = 0.145\paperheight, width=0.25\linewidth]{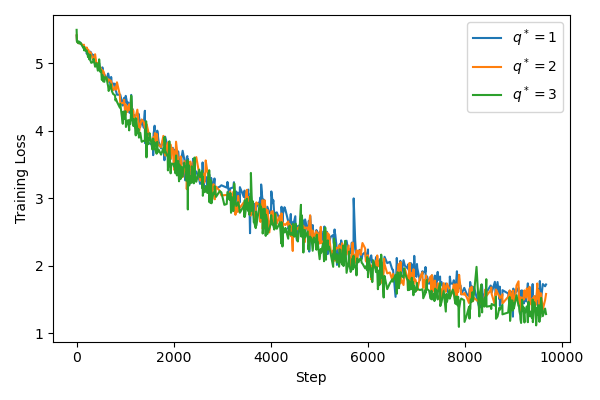}
    }
    \hfill
    \subfigure[$s=0.7$, $V'(q^*)=0.5$]{
        \includegraphics[height = 0.145\paperheight, width=0.25\linewidth]{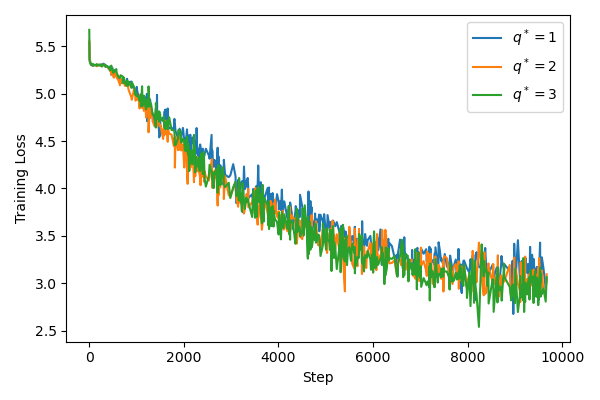}
    }
    \hfill
    \subfigure[$s=0.7$, $V'(q^*)=0.7$]{
        \includegraphics[height = 0.145\paperheight, width=0.25\linewidth]{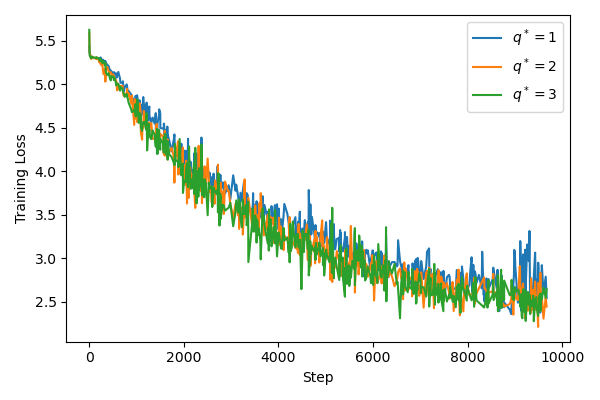}
    }
    \hfill
    \subfigure[$s=0.7$, $V'(q^*)=0.9$]{
        \includegraphics[height = 0.145\paperheight, width=0.25\linewidth]{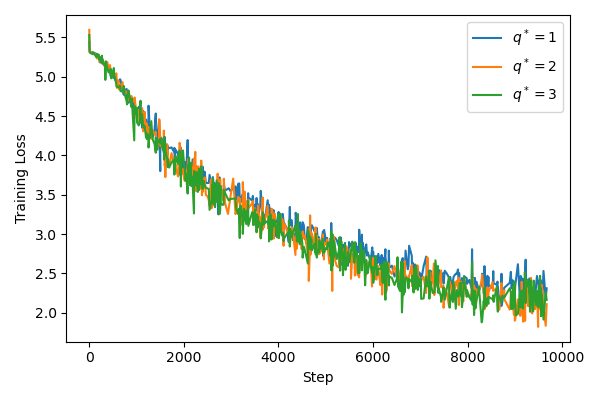}
    }
    \hfill
    \subfigure[$s=0.8$, $V'(q^*)=0.5$]{
        \includegraphics[height = 0.145\paperheight, width=0.25\linewidth]{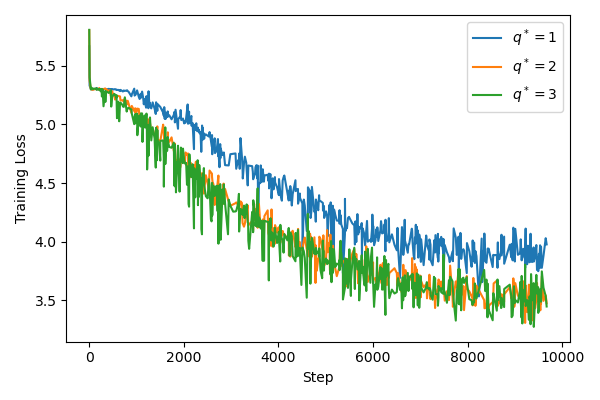}
    }
    \hfill
    \subfigure[$s=0.8$, $V'(q^*)=0.7$]{
        \includegraphics[height = 0.145\paperheight, width=0.25\linewidth]{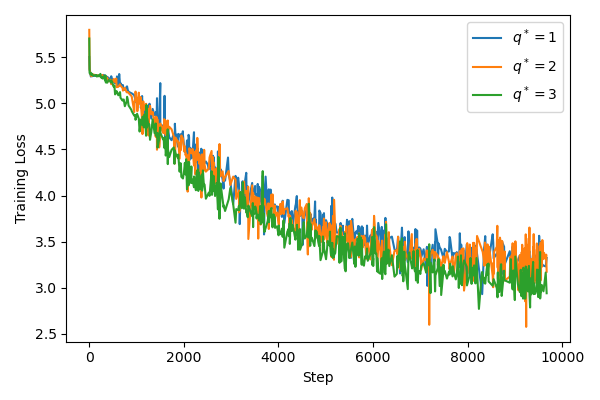}
    }
    \hfill
    \subfigure[$s=0.8$, $V'(q^*)=0.9$]{
        \includegraphics[height = 0.145\paperheight, width=0.25\linewidth]{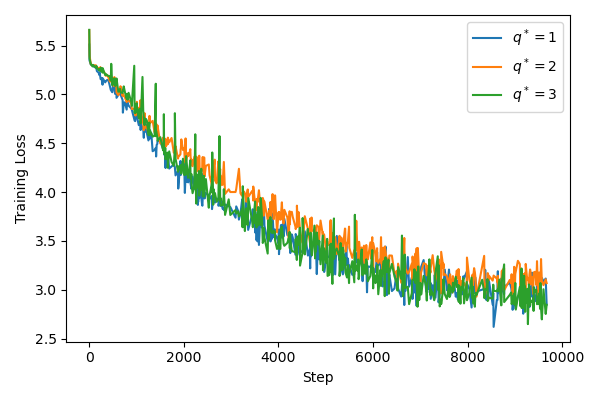}
    }
    \hfill
    \subfigure[$s=0.85$, $V'(q^*)=0.5$]{
        \includegraphics[height = 0.145\paperheight, width=0.25\linewidth]{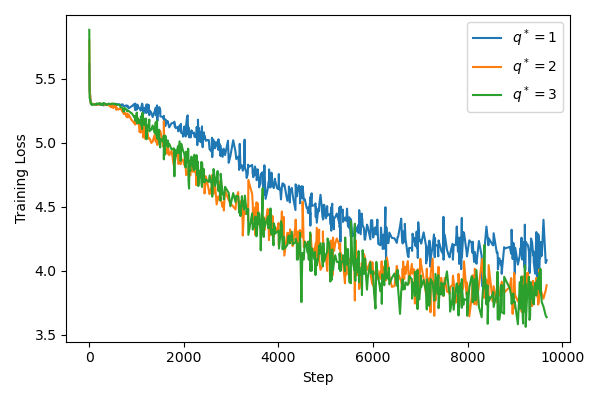}
    }
    \hfill
    \subfigure[$s=0.85$, $V'(q^*)=0.7$]{
        \includegraphics[height = 0.145\paperheight, width=0.25\linewidth]{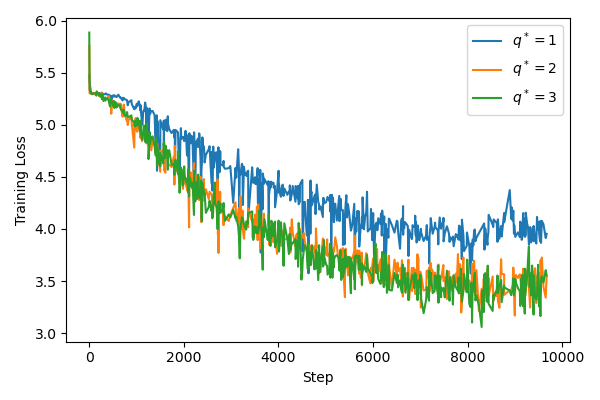}
    }
    \hfill
    \subfigure[$s=0.85$, $V'(q^*)=0.9$]{
        \includegraphics[height = 0.145\paperheight, width=0.25\linewidth]{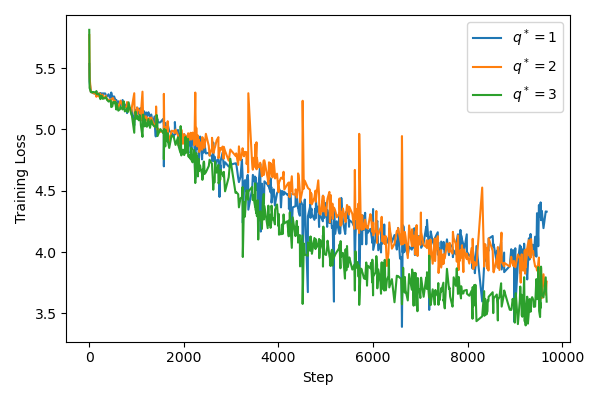}
    }
    \hfill
    \caption{Mean training loss across five seeds for a CNN trained on Tiny ImageNet, with activation function $\phi = \text{CReLU}_{\tau, m}$, $\tau$ and $m$ chosen such that $s$, $V'(q^*)$ are as described. Observe improved training speed for increased $q^*$ across all parameter sets.}
    \label{fig:mean_train_loss_crelu_tinyimagenet}
\end{figure}

\end{document}